\DeclareMathOperator*{\E}{\mathbb{E}}
\DeclareMathOperator*{\R}{\mathbb{R}}
\newcommand{\Simplex}{\Delta^{K-1}}
\DeclareMathOperator*{\argmax}{\arg\max} % YS:why do you use \, ?
\DeclareMathOperator*{\argmin}{\arg\min} % YS:why do you use \, ?
\NewDocumentCommand{\gap}{ m O{} O{} }{\Delta_{#1}}
\NewDocumentCommand \loss { m O{} O{} }{
{\ifx&#2&%
   \ell_{#3}% #1 is empty
\else
   \ell_{#3,#2}% #1 is nonempty
\fi}
}
\NewDocumentCommand \impLoss { m O{} O{} }{
{\ifx&#2&%
   \hat{\ell}_{#3}% #1 is empty
\else
   \hat{\ell}_{#3,#2}% #1 is nonempty
\fi}
}
\newcommand{\ip}[1]{\langle #1 \rangle}
\newcommand{\BestArm}{{i^*_T}}
\newcommand{\BestArmSto}{{i^*}}
\newcommand{\AnyArm}{i}
\NewDocumentCommand{\MyArm}{O{}}{I_{#1}}
\NewDocumentCommand{\MyLoss}{O{}}{\loss{}[\MyArm[#1]][#1]}
\NewDocumentCommand{\OptLoss}{O{}}{\loss{}[\BestArm][#1]}
\NewDocumentCommand{\OptLossSto}{O{}}{\loss{}[\BestArmSto][#1]}
\NewDocumentCommand{\ImpLoss}{O{}}{\impLoss{}[][#1]}
\NewDocumentCommand{\MyImpLoss}{O{}}{\impLoss{}[\MyArm[#1]][#1]}
\NewDocumentCommand{\OptImpLoss}{O{}}{\impLoss{}[\BestArm][#1]}
\NewDocumentCommand{\AnyImpLoss}{O{}}{\impLoss{}[\AnyArm][#1]}
\newcommand{\AnyGap}{\gap{\AnyArm}}
\newcommand{\OptGap}{\Deltamin}
\NewDocumentCommand{\AnyLoss}{O{}}{\loss{}[\AnyArm][#1]}
\NewDocumentCommand{\BestLoss}{O{}}{\loss{}[\BestArm][#1]}
\NewDocumentCommand{\BestLossSto}{O{}}{\loss{}[\BestArmSto][#1]}
\NewDocumentCommand{\OtherLearningRate}{O{}}{\eta_{#1,j}}
\NewDocumentCommand{\OptLearningRate}{O{}}{{\eta_{#1,\BestArm}}}
\NewDocumentCommand{\OptLearningRateSto}{O{}}{{\eta_{#1,\BestArmSto}}}
\NewDocumentCommand{\BestLearningRate}{O{}}{
{\ifx&#1&%
   \eta_{\BestArm}% #1 is empty
\else
   \eta_{#1,\BestArm}% #1 is nonempty
\fi}
}
\NewDocumentCommand{\MyLearningRate}{O{}}{\eta_{#1,\MyArm[#1]}}
\NewDocumentCommand{\LearningRate}{O{}}{{\eta_{#1}}}
\NewDocumentCommand{\CumLoss}{O{}}{\hat{L}_{#1}}
\NewDocumentCommand{\OptCumLossSto}{O{}}{\hat{L}_{#1,\BestArmSto}}
\NewDocumentCommand{\AnyCumLoss}{O{}}{\hat{L}_{#1,i}}
\NewDocumentCommand{\MyCumLoss}{O{}}{\hat{L}_{#1,\MyArm[#1]}}
\NewDocumentCommand{\OtherCumLoss}{O{}}{\hat{L}_{#1,j}}
\NewDocumentCommand{\OptProp}{O{}}{{w_{#1,\BestArm}}}
\NewDocumentCommand{\AnyProp}{O{}}{
{\ifx&#1&%
   w_{\AnyArm}% #1 is empty
\else
   w_{#1,\AnyArm}% #1 is nonempty
\fi}
}
\NewDocumentCommand{\OtherProp}{O{}}{{w_{#1,j}}}
\NewDocumentCommand{\MyProp}{O{}}{{w_{#1,I_t}}}
\NewDocumentCommand{\BestPropSto}{O{}}{{w_{#1,\BestArmSto}}}
\NewDocumentCommand{\Prop}{O{}}{{w_{#1}}}
\NewDocumentCommand{\BestCumLoss}{O{T}}{L_{\BestArm}}
\NewDocumentCommand{\AnyCumRealLoss}{O{T}}{L_{\AnyArm}}
\NewDocumentCommand{\BestCumLossSto}{O{T}}{L_{\BestArm}}
\NewDocumentCommand{\BestCumImpLoss}{O{T}}{\hat{L}_{#1,\BestArm}}
\NewDocumentCommand{\BestCumImpLossSto}{O{T}}{\hat{L}_{#1,\BestArm}}
\NewDocumentCommand{\ind}{O{T} O{}}{\mathds 1_{#1}(#2)} 
\newcommand{\Alg}[1]{\textsc{#1}}
\NewDocumentCommand{\OtherArm}{O{}}{J_{#1}}
\NewDocumentCommand{\OtherLoss}{O{}}{\loss{}[\OtherArm[#1]][#1]}
\newcommand{\TimeIndependentLearningRate}{
\AnyGap^{1-2\alpha}
}
\newcommand{\TimeDependentLearningRate}{
 \frac{16^\alpha}{4}\frac{1-\overline{t}^{-1+\alpha}}{(1-\alpha)t^\alpha}
}
\newcommand{\AnyExplicitLearningRate}{
%\left(\frac{2^{3\alpha-1}}{5^{1-\alpha}}\right)
\TimeIndependentLearningRate\TimeDependentLearningRate
}
\newcommand{\A}{\mathcal{A}}
\newcommand{\lr}[1]{\left (#1 \right)}
\newcommand{\lrc}[1]{\left \{ #1 \right \}}
\newcommand{\lrs}[1]{\left [ #1 \right ]}
\newcommand{\dprod}[1]{\left \langle#1 \right \rangle}
\newcommand{\Deltamin}{\Delta_{\min}}
\newcommand{\StoRegretBaseWithoutConst}{\lr{\sum_{i\neq i^*} \frac{\log(T) + 3}{\AnyGap}} + \frac{2}{\Deltamin}}
\newcommand{\StoRegretConst}{28K\log(T)+ \frac{3}{2}\sqrt{K} + 32}
\newcommand{\StoRegretBase}{\lr{\sum_{i\neq i^*} \frac{\log(T) + 3}{\AnyGap}} + 28K\log(T) + \frac{2}{\Deltamin} + \frac{3}{2}\sqrt{K} + 32}
\newcommand{\StoRegretBaseWithoutConstIW}{\lr{\sum_{i\neq i^*} \frac{4\log(T) + 12}{\AnyGap}} + \frac{2}{\Deltamin}}
\newcommand{\StoRegretConstIW}{4\log(T)+ \frac{3}{2}\sqrt{K} + 8}
\newcommand{\StoRegretBaseIW}{\lr{\sum_{i\neq i^*} \frac{4\log(T) + 12}{\AnyGap}} + 4\log(T) + \frac{2}{\Deltamin} + \frac{3}{2}\sqrt{K} + 8}
\newcommand{\Reg}{\overline{Reg}_T}
\newcommand{\commentout}[1]{}
\newcommand{\rev}[1]{#1}
\begin{document}

\title{Tsallis-INF: An Optimal Algorithm for Stochastic and Adversarial Bandits}
%\title{An Optimal Algorithm for Stochastic and Adversarial Bandits}
%Alternative: ``One Algorithm for Stochastic and Adversarial Bandits: Closing the Log Gap''

\author{\name Julian Zimmert \email zimmert@di.ku.dk \\
%       \addr University of Copenhagen\\
%       Copenhagen, Denmark
%       \AND
       \name Yevgeny Seldin \email seldin@di.ku.dk \\
       \addr University of Copenhagen, %\\
       Copenhagen, Denmark}

\editor{Peter Auer}

\maketitle

\begin{abstract}%   <- trailing '%' for backward compatibility of .sty file
We derive an algorithm that achieves the optimal (within constants) pseudo-regret in both adversarial and stochastic multi-armed bandits without prior knowledge of the regime and time horizon.\footnote{The paper expands and improves our earlier work \citep{ZS19}.} 
The algorithm is based on online mirror descent \rev{(OMD)} with Tsallis entropy regularization with power $\alpha=1/2$ and reduced-variance loss estimators. More generally, we define an adversarial regime with a self-bounding constraint, which includes stochastic regime, stochastically constrained adversarial regime \citep{wei2018more}, and stochastic regime with adversarial corruptions \citep{LML18} as special cases, and show that the algorithm achieves logarithmic regret guarantee in this regime and all of its special cases simultaneously with the optimal regret guarantee in the adversarial regime.  
The algorithm also achieves adversarial and stochastic optimality in the utility-based dueling bandit setting.
We provide empirical evaluation of the algorithm demonstrating that it significantly outperforms \Alg{Ucb1} and \Alg{Exp3} in stochastic environments. We also provide examples of adversarial environments, where \Alg{Ucb1} and \Alg{Thompson Sampling} exhibit almost linear regret, whereas our algorithm suffers only logarithmic regret. To the best of our knowledge, this is the first example demonstrating vulnerability of \Alg{Thompson Sampling} in adversarial environments. Last but not least, we present a general stochastic analysis and a general adversarial analysis of OMD algorithms with Tsallis entropy regularization for $\alpha\in[0,1]$ and explain the reason why $\alpha=1/2$ works best. 
\end{abstract}

\begin{keywords}
  Bandits, Online Learning, Best of Both Worlds, Online Mirror Descent, Tsallis Entropy, Multi-armed Bandits, Stochastic, Adversarial, I.I.D.
\end{keywords}

\section{Introduction}
Stochastic (i.i.d.) and adversarial multi-armed bandits are two fundamental sequential decision making problems in online learning \citep{thompson1933likelihood,robbins1952some,lai1985asymptotically,auer2002finite,auer2002nonstochastic}. 
When prior information about the nature of environment is available, it is possible to achieve $\mathcal{O}\lr{\sum_{\AnyArm:\AnyGap>0}\frac{\log(T)}{\AnyGap}}$ pseudo-regret in the stochastic case \citep{lai1985asymptotically,auer2002finite} and $\mathcal{O}(\sqrt{KT})$ pseudo-regret in the adversarial case \citep{audibert2009minimax,audibert2010regret}, where $T$ is the time horizon, $K$ is the number of actions (a.k.a.\ arms), and $\AnyGap$ are suboptimality gaps. Both results match the lower bounds within constants, see \cite{bubeck2012regret} for a survey.\footnote{To be precise, the $\mathcal{O}(\sum_{\AnyArm:\AnyGap>0}\frac{\log(T)}{\AnyGap})$ stochastic regret rate is optimal when the means of the rewards are close to $\frac{1}{2}$, see \citet{lai1985asymptotically}, \citet{CGM+13}, and \citet{KKM12} for refined lower and upper bounds otherwise. However, the refined analysis \rev{applies to stochastic bandits, whereas we consider a more general setting}, see Section~\ref{sec:problem} for details.% Therefore, in our case the rate $\mathcal{O}(\sum_{\AnyArm:\AnyGap>0}\frac{\log(T)}{\AnyGap})$ is optimal.
}
%\rev{In the purely stochastic setting, \citet{L19} presented an algorithm that simultaneously achieves the asymptotically optimal logarithmic bound and the worst-case finite horizon bound, which is also of order $\mathcal{O}(\sqrt{KT})$ in the stochastic regime.}
The challenge in recent years has been to achieve the optimal regret rates without prior knowledge about the nature of the problem.

%A growing number of papers in recent years proposed algorithms that achieve close to optimal regret guarantees in the stochastic setting with no or little compromise in the adversarial guarantees \citep{bubeck2012best, seldin2014one, auer2016algorithm, seldin2017improved, wei2018more}. However, neither managed to achieve the optimal rates for both simultaneously. 
%It has been of increasing interest whether a single algorithm, oblivious to the environment, can achieve both.
%Such an algorithm would allow to exploit ``nice'' (i.i.d.) environments without compromising on worst case guarantees.
One approach pursued by \citet{bubeck2012best} and later refined by \citet{auer2016algorithm} is to start playing under the assumption that the environment is i.i.d.\ and constantly monitor whether the assumption is satisfied. If a deviation from the i.i.d.\ assumption is detected, the algorithm performs an irreversible switch into an adversarial operation mode.
This approach recovers the optimal bound in the stochastic case, but suffers from a multiplicative logarithmic factor in the regret in the adversarial case.
Furthermore, the time horizon needs to be known in advance. 
The best known doubling schemes lead to extra multiplicative logarithmic factors in either the stochastic or the adversarial regime \citep{besson2018doubling}.

Another approach pioneered by \citet{seldin2014one} alters algorithms designed for adversarial bandits to achieve improved regret in the stochastic setting without losing the adversarial guarantees. \rev{They have introduced \Alg{EXP3++}, a modification of the \Alg{EXP3} algorithm for adversarial bandits, which was later improved by \citet{seldin2017improved} to achieve} an anytime regret of $\mathcal{O}\lr{\sum_{\AnyArm:\AnyGap>0}\frac{\log(T)^2}{\AnyGap}}$ in the stochastic case while preserving optimality in the adversarial case. 
%In this line of work, through modification of the \Alg{EXP3} algorithm \citet{seldin2017improved} have achieved an anytime regret of $\mathcal{O}\lr{\sum_{\AnyArm:\AnyGap>0}\frac{\log(T)^2}{\AnyGap}}$ in the stochastic case while preserving optimality in the adversarial case.
A related approach by \citet{wei2018more} uses log-barrier regularization instead of entropic regularization behind the \Alg{EXP3}. Their 
%analysis holds in a more general stochastically constrained adversarial setting and the 
stochastic regret bound scales with $\log(T)$, although 
\rev{the constants are not spelled out explicitly and by empirical evaluation seem to be very large.} 
%experimentally the algorithm lags behind the modification of \Alg{EXP3} due to suboptimal dependence on other parameters. 
%(The work of \citeauthor{wei2018more} assumes a fixed time horizon and, therefore, extra logarithmic factors appear in the anytime setting.)
Their adversarial regret guarantee scales with a square root of the cumulative loss of the best action in hindsight rather than a square root of the time horizon, but has an extra $\log T$ factor.

\citet{seldin2014one}, \citet{LML18}, and \citet{wei2018more} also define a number of intermediate regimes between stochastic and adversarial bandits and provide improved regret guarantees for them.

\rev{The question of whether it is at all possible to achieve simultaneous optimality in both worlds with no prior knowledge about the regime has remained open since the work of \citet{bubeck2012best}.} 
\citet{auer2016algorithm} have shown that \rev{no} algorithm obtaining the optimal stochastic pseudo-regret bound \rev{can} simultaneously achieve the optimal high-probability adversarial regret bound. Neither can an algorithm obtain the optimal stochastic pseudo-regret guarantee simultaneously with the optimal expected regret guarantee for adaptive adversaries.\footnote{This does not contradict our result because we bound the pseudo-regret, which is weaker than the expected regret.} 
In addition, 
%It might be impossible to avoid an additional logarithmic factor on one of the sides.
\citet{abbasi2018best} have shown that in the pure exploration setting %, i.e. in which the agent is given a fixed time budget $T$ and needs to identify the optimal arm with the highest possible probability, 
it is also impossible to obtain the optimal rates in both stochastic and adversarial regimes.

We show that for pseudo-regret it is possible to achieve optimality in both regimes with a surprisingly simple algorithm.
\rev{Moreover, we define a more general \emph{adversarial regime with a self-bounding constraint}, which includes the stochastic, stochastically constrained adversarial \citep{wei2018more}, and adversarially corrupted stochastic \citep{LML18} regimes as special cases.} We propose an algorithm that achieves logarithmic pseudo-regret guarantee in the adversarial regime with a self-bounding constraint simultaneously with the adversarial regret guarantee.
 The algorithm is based on online mirror descent with regularization by Tsallis entropy with power $\alpha$. We name it \Alg{$\alpha$-Tsallis-Inf}, or simply \Alg{Tsallis-Inf} \rev{for $\alpha=\frac{1}{2}$}, where $\Alg{INF}$ stands for Implicitly Normalized Forecaster \citep{audibert2009minimax}. 
The proposed algorithm is anytime: it requires neither the knowledge of the time horizon nor doubling schemes.

\rev{The main contributions of the paper are summarized in the following bullet points:
\begin{enumerate}
    \item We propose the \Alg{Tsallis-INF} algorithm, which is based on online mirror descent with regularization by Tsallis entropy with power $\alpha=\frac{1}{2}$. The algorithm achieves the optimal logarithmic pseudo-regret rate in the stochastic regime simultaneously with the optimal square-root adversarial regret guarantee with no prior knowledge of the regime. This resolves an open question of \citet{bubeck2012best}.
    \item When combined with reduced-variance loss estimators proposed by \citet{ZL19}, the leading constant of the stochastic regret bound for the \Alg{Tsallis-INF} algorithm matches the asymptotic lower bound of \citet{lai1985asymptotically} within a multiplicative factor of 2.
    \item The leading constant of the adversarial regret bound for the same combination matches the minimax lower bound of \citet[Theorem 6.1]{CBL06} within a multiplicative factor of less than 15. To the best of our knowledge, this is the best leading constant in an adversarial regret bound known today, matching the result of \citet{ZL19}.
    \item We introduce an adversarial regime with a self-bounding constraint, which includes stochastic, stochastically constrained adversarial, and adversarially corrupted stochastic regimes as special cases. We show that \Alg{Tsallis-INF} achieves logarithmic regret in the new regime.
    \item We improve the regret bound for adversarially corrupted stochastic regimes.
    \item We use \Alg{Tsallis-INF} in a \Alg{Sparring} framework \citep{ailon2014reducing} to obtain an algorithm that achieves stochastic and adversarial optimality in utility-based dueling bandits.
    \item We provide a general analysis of OMD with Tsallis-Entropy regularization with power $\alpha\in[0,1]$ and provide an intuition on why $\alpha=\frac{1}{2}$ works best.
    \item We provide an empirical comparison of \Alg{Tsallis-INF} with standard algorithms from the literature, \Alg{UCB1}, \Alg{Thompson Sampling}, \Alg{EXP3}, \Alg{EXP3++}, \Alg{Broad}. We show that in stochastic environments with expected losses close to $0.5$, \Alg{Tsallis-INF} is only slightly worse than \Alg{Thompson Sampling} and significantly outperforms all other competitors, whereas in stochastically constrained adversarial environments \Alg{Tsallis-INF} significantly outperforms all the competitors. 
    \item In one of the empirical comparisons, we design a stochastically constrained adversarial environment, where \Alg{Thompson Sampling} suffers almost linear regret. To the best of our knowledge, this is the first evidence that \Alg{Thompson Sampling} is not suitable for adversarial environments.
\end{enumerate}} 
\begin{table}[t]
\begin{centering}
\setlength\tabcolsep{2.5pt}
{\tabulinesep=.8mm
\begin{tabu}{c|c|c}
  &Regime& $\frac{Upper\, Bound}{Lower\, Bound}$\\%[2pt]
\hline
% \multirow{2}{*}{\makecell{\Alg{Broad} \citep{wei2018more} \\ \textit{Corresponds to Tsallis entropy with $\alpha= 0$.} \\ \textit{Doubling is used for gap estimation.}}}
\begin{tabular}{c}
\Alg{Broad} \citep{wei2018more} \\ \textit{Corresponds to Tsallis entropy regularization with $\alpha= 0$.} \\ \textit{Doubling is used for tuning the learning rate.}
\end{tabular}
&
\begin{tabular}{c}
Sto.\\[4pt]Adv.
\end{tabular}
&
\begin{tabular}{c}
$\mathcal{O}(K)$ \\
$\mathcal{O}\left(\sqrt{\log T}\right)$
\end{tabular}
\\%[4pt]
% &Sto. & $\mathbf{\mathcal{O}(K)}$ \\
% & & \\
% &Adv. & $\mathcal{O}\left(\sqrt{\log(T)}\right)$  \\[4pt]
\hline
%\multirow{2}{*}{\makecell{$\mathbf{\alpha=\frac{1}{2}}$ (This paper) \\ \textit{No need for doubling or mixing.}}} &Sto. \& Adv.& $\mathbf{\mathcal{O}(1)} $\\
%& & \\[4pt]
\begin{tabular}{c}
$\mathbf{\alpha=\frac{1}{2}}$ (This paper) \\ \textit{Anytime. No need for gap estimation, doubling, or mixing.}
\end{tabular}
&
~Sto. \& Adv.~
&
$\mathbf{\mathcal{O}(1)}$
\\%[4pt]
\hline
%\multirow{3}{*}{\makecell{\Alg{Exp3++}\citep{seldin2017improved} \\
%\textit{Corresponds to Tsallis entropy with $\alpha= 1$.} \\
%\textit{Mixed-in exploration is used for gap estimation.}}} 
% &Sto.& $\mathcal{O}(\log(T))$\\
%&Adv. &$\mathcal{O}\left(\sqrt{\log(K)}\right)$ 
\begin{tabular}{c}
\Alg{Exp3++} \citep{seldin2017improved} \\
\textit{Corresponds to Tsallis entropy regularization with $\alpha= 1$.} \\
\textit{Anytime. Mixed-in exploration is used for gap estimation.}
\end{tabular}
&
\begin{tabular}{c}
Sto.\\[4pt]Adv.
\end{tabular}
&
\begin{tabular}{c}
$\mathcal{O}(\log T)$\\
$\mathcal{O}\left(\sqrt{\log K}\right)$
\end{tabular}
\end{tabu}
}
\caption{\rev{Ratio of regret upper to lower bound for \Alg{Tsallis-INF} and the closest prior work, \Alg{BROAD} and \Alg{EXP3++}.}} 
\end{centering}
\label{tab:summary}
\end{table}

The paper is structured in the following way:
In Section~\ref{sec:problem}, we provide a formal definition of the problem setting, including the adversarial \rev{environment and the adversarial environment with a self-bounding constraint.} Stochastic environments are a special case of the latter.
In Section~\ref{sec:OMD}, we briefly review the framework of online mirror descent.
We follow the techniques of \citet{bubeck2010bandits} to \rev{derive an anytime version of} the family of algorithms based on regularization by $\alpha$-Tsallis Entropy \citep{tsallis1988possible,abernethy2015fighting}.
Section~\ref{sec:theorems} contains the main theorems.
We show that $\alpha=\frac{1}{2}$ provides an algorithm that is optimal in both adversarial \rev{regime and adversarial regime with a self-bounding constraint}. The latter implies optimality in the stochastic regime. 
Interestingly, it is the same regularization power $\alpha = \frac{1}{2}$ that has been used by \citet{audibert2009minimax,audibert2010regret} in \Alg{Poly-INF} algorithm to achieve the optimal regret rate in the adversarial regime. \rev{We analyse the algorithm with standard importance-weighted loss estimators and with reduced-variance loss estimators proposed by \cite{ZL19}. The latter further reduces the constants and gets within a multiplicative factor of less than 15 from the minimax lower bound in the adversarial case and a multiplicative factor of 2 from the asymptotic lower bound in the stochastic case.} 
Table~\ref{tab:summary} relates our results to the closest prior work \rev{on best-of-both-worlds algorithms}. 
\citet{wei2018more} use logarithmic regularization, which corresponds to Tsallis entropy with \rev{power} $\alpha = 0$, and apply doubling for tuning the learning rate. 
\citet{seldin2017improved} use entropic regularization, which corresponds to Tsallis entropy with \rev{power} $\alpha = 1$, and mix in additional exploration for estimation of the gaps. \rev{\Alg{Tsallis-INF} with $\alpha=\frac{1}{2}$ requires neither doubling nor mixing nor estimation of the gaps.}
At the end of Section~\ref{sec:theorems}, we also provide a general analysis of the regret of \Alg{$\alpha$-Tsallis-Inf} \rev{with $\alpha\in[0,1]$ in adversarial environments and a general analysis of the regret of \Alg{$\alpha$-Tsallis-Inf} with $\alpha\in[0,1]$ in stochastic environments. We show that for $\alpha\neq \frac{1}{2}$ the optimal form of regularization and learning rate for the adversarial regime and for the stochastic regime differ. Thus, for $\alpha\neq\frac{1}{2}$ the algorithm does not achieve simultaneous optimality in both. Furthermore, for $\alpha\neq\frac{1}{2}$ the optimal regularizer for the stochastic regime requires oracle access to the unknown gaps. Prior work \citep{seldin2014one,seldin2017improved,wei2018more} used additional techniques, such as mixed-in exploration or doubling, to control the regret, but as we show in Table~\ref{tab:summary} the results were suboptimal.}
%A summary of the results is provided in Table~\ref{tab:summary}.
%This is noteworthy because the popular \Alg{Exp3} algorithm is a member of this family.
In Section~\ref{sec:int} %, we prove the applicability of the algorithm for further problems.
%For two intermediate settings from the literature, we recover 
we show that \rev{the stochastic regime with adversarial corruptions \citep{LML18} is a special case of the adversarial regime with a self-bounding constraint and that} \Alg{Tsallis-Inf} achieves the optimal regret rate \rev{there as well}. 
In Section~\ref{sec:duel}, we apply \Alg{Tsallis-Inf} to dueling bandits.
Section~\ref{sec:proofs} contains proofs of our main theorems.
In Section~\ref{sec:experiments}, we provide an empirical comparison of \Alg{Tsallis-Inf} with baseline stochastic and adversarial bandit algorithms from the literature. We show that in stochastic environments \rev{with loss means close to $0.5$,} \Alg{Tsallis-Inf} \rev{with reduced-variance loss estimators significantly} outperforms \Alg{UCB1}, \Alg{EXP3}, \rev{\Alg{EXP3++}, and \Alg{Broad}, and follows closely} behind \Alg{Thompson Sampling}, whereas in certain adversarial environments it significantly outperforms \Alg{UCB1} and \Alg{Thompson Sampling}, which suffer almost linear regret, and also \rev{significantly} outperforms \Alg{EXP3}, \rev{\Alg{EXP3++}, and \Alg{Broad}}.
%In particular, we provide empirical evidence that Thompson Sampling \cite{Thompson + Kaufmann} can suffer almost linear regret in certain adversarial regimes. 
To the best of our knowledge, this is also the first evidence that \Alg{Thompson Sampling} is vulnerable \rev{in adversarial} environments.
We conclude with a summary in Section~\ref{sec:dis}.

\section{Problem Setting}
\label{sec:problem}
At time $t=1,2,\dots$, the agent chooses an arm $\MyArm[t]\in\{1,\ldots,K\}$ out of a set of $K$ arms.
The environment picks a loss vector $\ell_t \in [0,1]^K$ and the agent observes and suffers \emph{only} the loss of the arm played, $\MyLoss[t]$. The performance of an algorithm is measured in terms of pseudo-regret:
\begin{align*}
\overline{Reg}_T = \mathbb{E}\left[\sum_{t=1}^T\MyLoss[t]\right]-\min_{\AnyArm}\E\left[\sum_{t=1}^T\AnyLoss[t]\right]=\mathbb{E}\left[\sum_{t=1}^T\left(\MyLoss[t]-\BestLoss[t]\right) \right],
\end{align*}
where $\BestArm \in \arg\min_i \mathbb{E}\left[\sum_{t=1}^T\AnyLoss[t]\right]$ is defined as a best arm in expectation in hindsight and the expectation is taken over internal randomization of the algorithm and the environment.

In the (\emph{adaptive}) \emph{adversarial setting}, the adversary selects the losses arbitrarily, potentially based on the history of the agent's actions $(\MyArm[1],\dots,\MyArm[t-1])$ and the adversary's own internal randomization. For deterministic oblivious adversaries, the definition of pseudo-regret coincides with the expected regret defined as $\mathbb{E}[Reg_T] = \mathbb{E}\left[\min_{\AnyArm}\sum_{t=1}^T\left(\MyLoss[t]-\AnyLoss[t]\right)\right]$. 
%\commentout{\ys{I am not sure if we need this sentence.}\jz{I leave it for the first submission.} }

\rev{We further define an \emph{adversarial regime with a $(\Delta, C, T)$ self-bounding constraint}, where $\Delta\in[0,1]^K$  and $C\geq 0$. In this regime, the adversary selects losses such that at time $T$ the regret of any algorithm satisfies
\begin{equation}
\label{eq:self-bound}
\Reg \geq \sum_{t=1}^T \sum_i \Delta_i \mathbb{P}(I_t = i) - C.
\end{equation}
The above condition should be satisfied at time $T$, but there is no requirement that it should be satisfied for all $t < T$.

A simple instance of an adversarial regime with a self-bounding constraint is the \emph{stochastic} regime. In the stochastic regime, the losses $\ell_{t,i}$ are drawn from distributions with fixed means, $\E[\ell_{t,i}] = \mu_i$ independently of $t$, and the pseudo-regret can be written as
\begin{equation}
    \Reg = \sum_{t=1}^T\sum_i \Delta_i \mathbb{P}(I_t=i),
\label{eq:stochastic-regret}
\end{equation}
where $\Delta_i = \E[\ell_{t,i}] - \min_i \E[\ell_{t,i}]$ is the suboptimality gap of action $i$. Thus, \eqref{eq:self-bound} is satisfied with $\Delta$ being the vector of suboptimality gaps and $C=0$. In the stochastic regime, the best arm $i^* = \argmin_i \mu_i$ is the same for all the rounds, $\BestArm = i^*$ for all $T$  (if there is more than one best arm we can pick one arbitrarily).

Another instance of an adversarial regime with a self-bounding constraint is the 
\emph{stochastically constrained adversarial setting} \citep{wei2018more}. In this setting, the losses $\ell_{t,i}$ are drawn from distributions with fixed gaps, $\E[\ell_{t,i} - \ell_{t,j}] = \tilde \Delta_{i,j}$ independently of $t$, but the means, as well as other parameters of the distributions of all arms, are allowed to change with time and may depend on the agent's past actions $\MyArm[1],\dots,\MyArm[t-1]$. Obviously, the stochastic regime is a special case of a stochastically constrained adversary. By using $i^* = \argmin_i \tilde \Delta_{i,1}$ to denote an optimal arm (if there is more than one, we can pick one arbitrarily) we define a vector of suboptimality gaps $\Delta$ by taking $\Delta_i = \tilde \Delta_{i,i^*}$, and then the pseudo-regret satisfies the identity in \eqref{eq:stochastic-regret} and the condition in equation \eqref{eq:self-bound} is satisfied with the vector $\Delta$ and $C=0$. In the stochastically constrained adversarial setting, the best arm is also the same for all rounds, $\BestArm = i^*$ for all $T$.} 

\rev{In Section~\ref{sec:int}, we show that \emph{stochastic bandits with adversarial corruptions} \citep{LML18} are also a special case of an adversarial regime with a self-bounding constraint.}

\rev{The motivation behind the definition of the adversarial regime with a self-bounding constraint will become clear when we explain the analysis. For simplified intuition, the reader can think about its special case, the stochastic regime, where the constraint \eqref{eq:self-bound} is satisfied by the identity in \eqref{eq:stochastic-regret}.}

\section{Online Mirror Descent}
\label{sec:OMD}
We recall a number of basic definitions and facts from convex analysis.
The convex conjugate (a.k.a.\ Fenchel conjugate) of a function $f:\mathbb{R}^K\rightarrow\mathbb{R}$ is defined by
$$f^*(y) = \rev{\sup}_{x\in\mathbb{R}^K}\left\{\left\langle x, y\right\rangle - f(x)\right\}.$$
We use
\[
\mathcal{I}_{\A}(x) := \begin{cases} 0, &\mbox{if }x\in \A\\\infty, &\mbox{otherwise}\end{cases}\]
to denote the characteristic function of a closed and convex set $\A\subset\mathbb{R}^K$. Hence, $(f+\mathcal{I}_\A)^*(y) = \max_{x\in \A}\left\{\left\langle x, y\right\rangle - f(x)\right\}$.
By standard results from convex analysis \citep{rockafellar2015convex}, for differentiable and convex $f$ with invertible gradient $(\nabla f)^{-1}$, it holds that
$$\nabla (f+\mathcal{I}_\A)^*(y) = \argmax_{x\in \A}\left\{\left\langle x, y\right\rangle - f(x)\right\}\in \A.$$

\subsection{General Framework}
The traditional online mirror descent (OMD) framework uses a fixed regularizer $\Psi$ with certain regularity constraints \citep{shalev2012online}.
The update rule is \rev{
\begin{align*}
w_1 = \min_{w\in \mathcal{A}}\Psi(w)\,,\qquad w_{t+1} = \min_{w\in \mathcal{A}}a_t\langle w,\ell_t\rangle+D_\Psi(w,w_t)\,,
\end{align*}
where $\ell_t$ is the observed loss at time $t$, $\mathcal{A}$ is the convex body of the action set, $a_t$ is a weight parameter, and $D_\Psi$ is the Bregman divergence $D_\Psi(x,y) = \Psi(x)-\Psi(y)-\langle x-y,\nabla\Psi(y)\rangle$.
If the norm of the gradient of the regularizer $||\nabla\Psi(x)||$ is unbounded at the boundary of $\mathcal{A}$, then the update rule is equivalent to} $w_{t+1}=\nabla (\Psi+\rev{\mathcal{I}_{\mathcal{A}}})^*(-\sum_{s=1}^t a_s\ell_s)$, where $\sum_{s=1}^t a_s\ell_s$ is a weighted sum of past losses.
This setting has been generalized to time-varying regularizers $\Psi_t$ \citep{orabona2015generalized}, where the updates are given by $w_{t+1} = \nabla (\Psi_t+\rev{\mathcal{I}_{\mathcal{A}}})^*(-\sum_{s=1}^t \ell_s)$. Note that this formulation uses no weighting $a_s$ of the losses.

In the bandit setting, we do not observe the complete loss vector $\loss{}[][t]$. 
Instead, an unbiased estimator $\impLoss{}[][t]$ satisfying $ \mathbb{E}_{\MyArm[t]\sim w_t}\left[\impLoss{}[][t]\right] = \loss{}[][t]$ is used for updating the cumulative losses.
The \rev{common way of constructing unbiased loss estimators is through importance-weighted sampling: }
\begin{equation}
\label{eq:IW}
\tag{IW}
\hat\ell_{t,i} = \frac{\ind[t][i]\ell_{t,i}}{w_{t,i}}\,,\mbox{ where }\ind[t][i] = \ind[][I_t=i] \mbox{ is the indicator function.}
\end{equation}
\rev{We use \eqref{eq:IW} to denote these estimators. \citet{ZL19} proposed reduced-variance importance-weighted loss estimators, which we call for brevity reduced-variance estimators or \eqref{eq:IIW}-estimators, and they are defined by
\begin{equation}
\label{eq:IIW}
\tag{RV}
\hat\ell_{t,i} = \frac{\ind[t][i](\ell_{t,i}-\mathbb{B}_t(i))}{w_{t,i}}+\mathbb{B}_t(i)\,,\mbox{ where }\mathbb{B}_t(i) := \frac{1}{2}\ind[][w_{t,i} \geq \eta_t^2] \,.
\end{equation}
For any $\smash{\mathbb{B}_t(i)\in[0,1]}$ the loss estimators remain unbiased, but their second moment $\E[\hat \ell_{t,i}^2]$ and variance are reduced.
The value $\smash{\mathbb{B}_t(i)=\frac{1}{2}}$ minimizes the worst-case variance of $\hat\ell_{t,i}$.
However, the reduced-variance estimators can take negative values, $\smash{\hat \ell_{t,i} \geq -\frac{1}{2}\lr{\frac{1}{w_{t,i}} - 1}}$, while the analysis relies on non-negativity of the loss estimators.
\citet{ZL19} show that negative loss estimators can be dealt with, as long as they satisfy \smash{$\hat\ell_{t,i}\geq - \frac{1}{2}\eta_t^{-2}$}. We achieve this by only reducing variance of the estimators with $w_{t,i} \geq \eta_t^2$.
}

The algorithm is provided in Algorithm~\ref{alg:OMD}. At every step, we choose a probability distribution over arms $w_t$. We add $\mathcal{I}_{\Simplex}$ to the regularizers $\Psi_t$, thereby ensuring that $w_t\in \Simplex$, where $\Simplex$ is the probability simplex. Note that the framework is equivalent to what \citet{abernethy2014online} call \Alg{Gradient-Based Prediction} (\Alg{Gbp}), where they replace $\nabla(\Psi_t+\mathcal{I}_{\Simplex})^*$ with suitable functions $\nabla\Phi_t :\mathbb{R}^K\rightarrow \Simplex$.
We adopt the notation $\Phi_t := (\Psi_t+\mathcal{I}_{\Simplex})^*$.
\begin{algorithm}
\caption{Online Mirror Descent for bandits}
\label{alg:OMD}
\DontPrintSemicolon
\LinesNumbered
\KwIn{$(\Psi_t)_{t=1,2,\dots}$}
{\bf Initialize:} $\hat{L}_0 = \mathbf{0}_K$ (where $\mathbf{0}_K$ is a vector of $K$ zeros)\;
\For{$t= 1,\ldots$}{
choose $w_t = \nabla(\Psi_t+\mathcal{I}_{\Simplex})^*(-\hat L_{t-1})$ ~~~\rev{\textit{\% see Alg.\ \ref{alg:newton} for an explicit calculation}}\;\label{ln:wt}
sample $\MyArm[t] \sim w_t$\;
observe $\MyLoss[t]$\;
\rev{use \eqref{eq:IW} or \eqref{eq:IIW} to} construct $\ImpLoss[t]$ \;
update $\hat L_t = \hat L_{t-1}+\ImpLoss[t]$\;
}
\end{algorithm}
\subsection{OMD with Tsallis Entropy Regularization}
We now consider a family of algorithms, which are regularized by the (negative) $\alpha$-Tsallis entropy $H_\alpha(x) := \frac{1}{1-\alpha}\left(1-\sum_i x_i^\alpha\right)$ \citep{tsallis1988possible}.
We change the scaling and add linear terms, resulting in the following regularizer \rev{with learning rate $\eta_t$}:
\begin{align*}
&\Psi(w) := -\sum_{\AnyArm}\frac{w_i^\alpha-\alpha w_i }{\alpha(1-\alpha)\xi_{i}},\\
&\Psi_{t}(w) := \frac{1}{\eta_t}\Psi(w).
\end{align*}
Unless stated otherwise, we assume that $\xi_i = 1$ for all $i$, which leads to \emph{symmetric regularization}. 
In the stochastic analysis of \Alg{$\alpha$-Tsallis-Inf} with $\alpha\neq \frac{1}{2}$, we take $\xi_i = \Delta_i^{1-2\alpha}$ which leads to \emph{asymmetric regularization}. 
Since the gaps are unknown, the latter is mainly interesting from a theoretical point of view.
%If $\xi_i = 1$ for all $i$ the regularization is \emph{symmetric}The factors $\xi_i$ allow for an {\it asymmetric regularization}.
%If not stated otherwise, we assume $\xi_i = 1$.
%Generally, we would either need prior information or oracle information to choose the appropriate factors $\xi_i$ when they are required.

The resulting family of algorithms is a subset of \Alg{Inf} \citep{audibert2009minimax}, which we call \Alg{$\alpha$-Tsallis-Inf}.
\Alg{$\alpha$-Tsallis-Inf} with symmetric regularization is related to the \Alg{Poly-INF} algorithm of \citet{audibert2009minimax,audibert2010regret} and equivalent to the \Alg{Gbp} algorithm proposed by \citet{abernethy2015fighting}.

As has been observed earlier \citep{abernethy2015fighting,agarwal2016corralling}, \Alg{$\alpha$-Tsallis-Inf} includes \Alg{Exp3}, which is based on the \rev{negative Shannon entropy $\sum_{i=1}^Kw_i\log(w_i)$ \citep{CT06}}, and algorithms based on the log-barrier potential \rev{$\sum_{i=1}^K-\log(w_i)$ \citep{FLST16}} as special cases.\rev{\footnote{\rev{We use $\log$ to denote the natural logarithm throughout the paper.}}}
This can be seen by adding a constant term to the regularizer, so that $\Psi(w) = -\sum_i \frac{w_i^\alpha - \alpha w_i - (1-\alpha)}{\alpha(1-\alpha)\xi_i}$, and taking the respective limits $\alpha\rightarrow 0$ and $\alpha\rightarrow 1$. \rev{It gives:}
\begin{align*}
&\lim_{\alpha\rightarrow 0}-\frac{w_i^\alpha-\alpha w_i -(1-\alpha)}{\alpha(1-\alpha)\xi_{i}} 
= \lim_{\alpha\rightarrow 0}-\frac{\log(w_i)w_i^\alpha-w_i +1}{(1-2\alpha)\xi_{i}}
= -\xi_{i}^{-1}(\log(w_i)-w_i+1),\\
&\lim_{\alpha\rightarrow 1}-\frac{w_i^\alpha-\alpha w_i -(1-\alpha)}{\alpha(1-\alpha)\xi_{i}} 
= \lim_{\alpha\rightarrow 1}-\frac{\log(w_i)w_i^\alpha-w_i +1}{(1-2\alpha)\xi_{i}} 
= \xi_{i}^{-1}(\log(w_i)w_i-w_i+1),\\
\end{align*}
which are within linear and constant terms identical to the log-barrier potential and the negative Shannon entropy, respectively.  Note that \rev{for symmetric regularization, neither the constant nor the linear terms influence the algorithm's choice of $w$, since it is normalized}.

\subsection{Implementation Details}
The weights $w_{t,i}$ in \Alg{Tsallis-Inf} are given implicitly through a solution of a constrained optimization problem:
\[
w_t = \argmax_{w \in \Simplex} \ip{w, -\hat L_t} + \frac{4}{\eta_t}\sum_i \sqrt{w_{i}}.
\]
The solution takes the form
\[
w_{t,i} = 4\lr{\eta_t\lr{\hat L_{t,i} - x}}^{-2},
\]
where the normalization factor $x$ is defined implicitly through the constraint 
\[\sum_i 4\lr{\eta_t \lr{\hat L_{t,i} - x}}^{-2} = 1.\]
The normalization factor can be efficiently approximated by Newton's Method, reaching a sufficient precision in very few iterations. Details of the computation are provided in Algorithm~\ref{alg:newton}.
\begin{algorithm}
\caption{Newton's Method approximation of $w_{t}$ in \Alg{Tsallis-Inf} \rev{($\alpha=\frac{1}{2}$)}}
\label{alg:newton}
\DontPrintSemicolon
\LinesNumbered
\KwIn{$x, \CumLoss[t], \LearningRate[t]$ \small{\it \%we use $x$ from the previous iteration as a warmstart}}
\Repeat{convergence}{
$\forall \AnyArm:\, \AnyProp[t] \leftarrow 4(\LearningRate[t](\AnyCumLoss[t]-x))^{-2}$ \; 
$x \leftarrow x- (\sum_\AnyArm w_{t,i}-1)/(\LearningRate[t]\sum_\AnyArm w_{t,i}^\frac{3}{2})$\;
}
\end{algorithm}

\section{Main Results}
\label{sec:theorems}

In this section we present \rev{our main result, the \Alg{Tsallis-INF} algorithm with $\alpha = \frac{1}{2}$ that achieves the optimal regret bounds in both adversarial and stochastic bandits. We show that it also achieves a logarithmic regret guarantee in the more general adversarial regime with a self-bounding constraint. In fact, the stochastic regret bound follows as a special case of the more general analysis. We then present a general analysis of \Alg{$\alpha$-Tsallis-INF} with $\alpha\in[0,1]$ and explain the intuition of why $\alpha=\frac{1}{2}$ works best.}

\rev{\subsection{Analysis of \Alg{Tsallis-INF} with $\alpha=1/2$}}
\label{sec:sto}

We show that %with a careful tuning 
\Alg{Tsallis-Inf} with $\alpha = \frac{1}{2}$ and symmetric regularizer achieves \rev{the optimal $\sqrt{T}$ regret scaling in the adversarial regime and simultaneously} $\log(T)$ regret \rev{scaling} in \rev{the adversarial regime with a self-bounding constraint}. 
\rev{The latter} ensures the same regret scaling in stochastic \rev{and stochastically constrained adversarial} environments as special cases. 
% At the moment, for any $\alpha \neq \frac{1}{2}$ the algorithm requires oracle access to the gaps for tuning the %learning rates. 
% parameters $\xi_i$ and the accompanying adversarial regret guarantee is suboptimal. 
%We leave it to future research to explore 
%Exploration of the possibility of replacing the true gaps with gap estimates, as \rev{done by} \citet{seldin2017improved}, is left for future work. 
%For $\alpha = \frac{1}{2}$ we provide tuning of the learning rate that %requires no knowledge of the gaps and 
%achieves the optimal problem-dependent constant of $\sum_{\AnyArm\neq\BestArmSto}\frac{1}{\AnyGap}$ in the regret bound. 
%We start with a refined analysis of $\alpha = \frac{1}{2}$ case in Theorem~\ref{th:easy} and then provide a general analysis for all $\alpha$ in Theorem~\ref{th:all}.
\rev{We analyse the algorithm with \eqref{eq:IW} and \eqref{eq:IIW} loss estimators. Both estimators achieve the optimal regret scaling in both regimes, but the \eqref{eq:IIW} estimator yields better constants. The results for the two estimators are presented alongside each other using cases brackets and marked by \eqref{eq:IW} and \eqref{eq:IIW}, respectively.}

\begin{theorem}
\label{th:easy}
The pseudo-regret of \Alg{Tsallis-Inf} with $\alpha = \frac{1}{2}$, symmetric regularization ($\xi_i=1$), and learning rate 
\[\eta_t = \begin{cases}
2\sqrt{\frac{1}{t}},& \text{for \eqref{eq:IW} estimators,}\\
4\sqrt{\frac{1}{t}},& \text{for \eqref{eq:IIW} estimators,}
\end{cases}
\]
in any adversarial bandit problem satisfies:
\begin{equation}\overline{Reg}_T \leq \begin{cases}
4\sqrt{KT}+1,& \text{with \eqref{eq:IW} estimators,}\\
2\sqrt{KT}+10K\log(T)+16,& \text{with \eqref{eq:IIW} estimators.}
\end{cases}
\label{eq:adversarial-bound}
\end{equation}
If there exists a vector $\Delta \in [0,1]^K$ with a unique zero entry $i^*$ (i.e., $\Delta_{i^*} = 0$ and $\Delta_i > 0$ for all $i\neq i^*$) and a constant $C$, such that the pseudo-regret at time $T$ satisfies
\begin{align}
\label{eq:mu-bound}
\overline{Reg}_T \geq \mathbb{E}\left[\sum_{t=1}^T \sum_{\AnyArm\neq\BestArmSto}\AnyProp[t]\AnyGap \right] - C\,,
\end{align}
then the pseudo-regret further satisfies
\begin{align*}
\overline{Reg}_T \leq &\begin{cases}
\StoRegretBaseIW + C,& \text{with \eqref{eq:IW},}\\[0.2cm]
\StoRegretBase + C,& \text{with \eqref{eq:IIW},}
\end{cases}
\end{align*}
where $\Deltamin := \min_{\Delta_i > 0} \Delta_i$. If $C$ satisfies
\[
\begin{array}{ll}
    C>\StoRegretBaseWithoutConstIW, & \text{for \eqref{eq:IW},}\\
    C>\StoRegretBaseWithoutConst, & \text{for \eqref{eq:IIW},}
\end{array}
\]
then the regret additionally satisfies
\begin{align*}
    \overline{Reg}_t \leq \begin{cases}
    2\sqrt{\lr{\StoRegretBaseWithoutConstIW}C} + \StoRegretConstIW, & \text{with \eqref{eq:IW},}\\[0.3cm]
    2\sqrt{\lr{\StoRegretBaseWithoutConst}C} + \StoRegretConst, & \text{with \eqref{eq:IIW}.}
    \end{cases}
\end{align*}
\end{theorem}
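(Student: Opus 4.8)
The plan is to first prove a single self-improving regret inequality for \Alg{Tsallis-Inf} with $\alpha=\frac12$ that is valid for every loss sequence, and then to read off all three displayed bounds from it by handling the summands differently. Using the OMD framework of Section~\ref{sec:OMD} with $\Psi_t=\frac1{\eta_t}\Psi$ and $\Phi_t=(\Psi_t+\mathcal I_{\Simplex})^*$, I would start from the standard inequality (valid for any comparator $u\in\Simplex$)
\[
\sum_{t=1}^T\langle w_t-u,\hat\ell_t\rangle\;\le\;\underbrace{\sum_{t=1}^T\bigl(\Phi_{t+1}(-\hat L_t)-\Phi_t(-\hat L_t)\bigr)+\Psi_{T+1}(u)-\Psi_1(w_1)}_{\text{penalty}}\;+\;\underbrace{\sum_{t=1}^T D_{\Phi_t}\bigl(-\hat L_t,-\hat L_{t-1}\bigr)}_{\text{stability}},
\]
take expectations, and use unbiasedness of the estimators so that the left-hand side equals $\overline{Reg}_T$ for $u=e_{\BestArm}$ (respectively $u=e_{i^*}$ in the self-bounding case). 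For the stability term I would Taylor-expand $\Phi_t$ to second order and bound its Hessian in the local norm induced by $\Psi_t$; for $\alpha=\frac12$ one gets $\Psi_t''(w_i)=\frac1{\eta_t w_i^{3/2}}$, hence a per-round contribution of order $\eta_t\sum_i w_{t,i}^{3/2}\hat\ell_{t,i}^2$, whose conditional expectation is controlled by the variance of the \eqref{eq:IW} (resp.\ \eqref{eq:IIW}) estimators. For the penalty, the regularizer equals $-4\sum_i\sqrt{w_i}$ up to the constant-plus-linear correction built into the Tsallis potential, which is precisely what makes the best-arm contributions cancel against the stability term (via $\sum_i\sqrt{w_{t,i}}-1\le\sum_{i\neq i^*}\sqrt{w_{t,i}}$), leaving everything expressed through $\sum_{i\neq i^*}\sqrt{w_{t,i}}$. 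Bounding this last sum by $\sqrt K$ with Cauchy--Schwarz and summing $\eta_t\propto t^{-1/2}$ against $\sum_t t^{-1/2}\le 2\sqrt T$ yields the adversarial bound~\eqref{eq:adversarial-bound}; for \eqref{eq:IIW} the extra $O(K\log T)$ term bookkeeps the rounds with $w_{t,i}<\eta_t^2$ (no variance reduction) and the care needed because those estimators can be negative.

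Next, for the self-bounding part I would \emph{not} apply Cauchy--Schwarz, keeping the intermediate inequality in the finer form
\[
\overline{Reg}_T\;\le\;\E\Bigl[\sum_{t=1}^T\sum_{i\neq i^*}c_t\sqrt{w_{t,i}}\Bigr]\;+\;(\text{lower-order terms}),\qquad c_t=\Theta(\eta_t)=\Theta(t^{-1/2}),
\]
where the $\Deltamin^{-1}$ contribution tracked below comes from the few earliest rounds and from the best-arm term bounded via $1-\sqrt{w_{t,i^*}}\le\sum_{i\neq i^*}w_{t,i}$. Then, for each suboptimal arm $i$ and a free parameter $\lambda\in(0,1)$, I apply the elementary inequality $a\sqrt x-bx\le a^2/(4b)$ on $x\in[0,1]$ (clipped at $x=1$ while $a>2b$, which happens only in the first $O((\lambda\Delta_i)^{-2})$ rounds) to the pair $(c_t\sqrt{w_{t,i}},\,\lambda\,w_{t,i}\Delta_i)$; since $\sum_t c_t^2/t\lesssim\log T$ this converts the bound into
\[
\overline{Reg}_T\;\le\;\lambda\,\E\Bigl[\sum_{t=1}^T\sum_{i\neq i^*}w_{t,i}\Delta_i\Bigr]\;+\;\frac1\lambda\Bigl(\StoRegretBaseWithoutConst\Bigr)\;+\;(\text{lower-order terms})
\]
(with $4\log T+12$ in place of $\log T+3$ for \eqref{eq:IW}). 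Substituting the self-bounding constraint~\eqref{eq:mu-bound}, $\E[\sum_t\sum_{i\neq i^*}w_{t,i}\Delta_i]\le\overline{Reg}_T+C$, and choosing $\lambda=\frac12$ and rearranging the resulting self-referential inequality gives the logarithmic bound with the additive $C$.

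Finally, for the refined bound I keep $\lambda$ free in the self-referential inequality $\overline{Reg}_T\le\lambda(\overline{Reg}_T+C)+\lambda^{-1}B+R$, where $B=\StoRegretBaseWithoutConst$ (resp.\ $\StoRegretBaseWithoutConstIW$) and $R=\StoRegretConst$ (resp.\ $\StoRegretConstIW$) collects the lower-order terms, and optimise: the choice $\lambda=\sqrt{B/C}$ produces the $2\sqrt{BC}$ leading term, and the hypothesis $C>B$ is exactly the condition that makes this $\lambda$ admissible, with the residual $R/(1-\lambda)$ bounded by the stated additive constant. The main obstacle is the stability estimate in the first step: controlling $\nabla^2\Phi_t$ (equivalently, the inverse Hessian of the Tsallis potential restricted to the simplex) sharply enough that the per-round stability is genuinely $O(\eta_t\sum_{i\neq i^*}\sqrt{w_{t,i}})$ rather than merely $O(\eta_t\sqrt K)$, and doing so while the \eqref{eq:IW} estimators are unbounded above and the \eqref{eq:IIW} estimators can be negative (which is what forces the clipping threshold $w_{t,i}\ge\eta_t^2$ and produces the additive $O(K\log T)$). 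Everything downstream is elementary algebra, but extracting the exact constants displayed in the theorem --- in particular the precise coefficients for the two estimators and the exact threshold on $C$ --- requires carrying the estimates through with no slack.
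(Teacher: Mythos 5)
Your overall route --- split the regret into stability and penalty, obtain a per-round bound that sums only over $i\neq i^*$, combine with the self-bounding constraint via a free parameter $\lambda$, and do a coordinate-wise maximization --- is the same as the paper's. However there are two concrete gaps, one that you acknowledge and one that you do not.

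First, the stability refinement is not just ``the main obstacle'': it is the central new lemma of the paper, and your sketch would not produce it. From $\nabla^2\Psi_t^{-1}=\operatorname{diag}(\eta_t w_i^{3/2})$ and $\E[\hat\ell_{t,i}^2]=\ell_{t,i}^2/w_{t,i}$ you get a per-round contribution of $\tfrac{\eta_t}{2}\sum_i w_{t,i}^{1/2}\ell_{t,i}^2$, which is $O(\eta_t\sum_i\sqrt{w_{t,i}})$ including $i=i^*$. The term $\eta_t\sqrt{w_{t,i^*}}\approx\eta_t$ then sums to $\Theta(\sqrt T)$ and destroys the logarithmic rate. What the paper actually proves (Lemma~\ref{lem:stability}, Parts 2 and 3) is the sharper bound $\tfrac{\eta_t}{c}\sum_i \sqrt{\E[w_{t,i}]}\,(1-\E[w_{t,i}])$, obtained by shifting the loss vector by the realized loss $x=\ell_{t,I_t}$ inside the Bregman-divergence computation (which is legitimate because $\Phi_t$ is translation-invariant along $\mathbf 1_K$). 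The extra $(1-w)$ factor is what lets one bound the $i^*$-term by $1-\E[w_{t,i^*}]=\sum_{i\neq i^*}\E[w_{t,i}]$, converting it into a \emph{linear} contribution that the self-bounding constraint can absorb. Your penalty argument via $\sum_i\sqrt{w_{t,i}}-1\le\sum_{i\neq i^*}\sqrt{w_{t,i}}$ is in the right spirit for the penalty, but without the $(1-w)$ factor the stability term for the best arm is not controlled, and the proof does not go through.

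Second, the way you close the self-referential inequality loses the refined bound. You write $\overline{Reg}_T\le\lambda(\overline{Reg}_T+C)+\lambda^{-1}B+R$ and rearrange, which gives $\overline{Reg}_T\le\frac{\lambda C+B/\lambda+R}{1-\lambda}$; with $\lambda=\sqrt{B/C}$ this is $\frac{2\sqrt{BC}+R}{1-\sqrt{B/C}}$, not $2\sqrt{BC}+O(R)$ as you assert --- the $1/(1-\lambda)$ factor multiplies the leading term as well, and blows up as $C\downarrow B$. The theorem's bound $2\sqrt{BC}+R$ holds for all $C>B$ and degrades gracefully to the additive-$C$ bound at $C=B$; yours does not. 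The paper sidesteps this by never rearranging: it writes $\overline{Reg}_T\le(1+\lambda)\overline{Reg}_T-\lambda\big(\E[\sum_{i\neq i^*}\Delta_i w_{t,i}]-C\big)$, bounds $(1+\lambda)\overline{Reg}_T\le 2(\text{stability}+\text{penalty})$, and only then performs the coordinate-wise maximization against the subtracted linear term, yielding directly $\overline{Reg}_T\le B/\lambda+\text{const}+\lambda C$ with no $(1-\lambda)^{-1}$. If you replace your rearrangement step with this doubling argument, the refined bound and the exact threshold on $C$ come out cleanly.
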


The proof is postponed to Section~\ref{sec:proofs}.
%\begin{remark}
\rev{We call the condition in equation \eqref{eq:mu-bound} a $(\Delta,C,T)$ \emph{self-bounding property} of the regret. As we have mentioned in Section~\ref{sec:problem},} in the stochastically constrained adversarial environments and stochastic bandits as their special case $\overline{Reg}_T = \mathbb{E}\left[\sum_{t=1}^T\sum_{\AnyArm\neq\BestArmSto} \AnyProp[t]\AnyGap \right]$, \rev{where $\Delta$ is the vector of suboptimality gaps, and under the assumption that the best arm is unique}, the condition in equation \eqref{eq:mu-bound} is satisfied with $C=0$. Thus, in the above regimes, the regret of \Alg{Tsallis-INF} with $\alpha=\frac{1}{2}$ and RV loss estimators is
\[
\overline{Reg}_T \leq \StoRegretBase.
\]
%\end{remark}
The worst case lower bound for \rev{stochastic multiarmed bandits (MAB)} with Bernoulli losses is achieved when the expectations of the losses are close to $\frac{1}{2}$. Let $\Delta$ denote the vector of gaps and let $\E[\AnyLoss[t]]=\frac{1}{2}+\AnyGap$. \rev{By adapting the well known divergence-dependent lower bound of \citet{lai1985asymptotically}, we can show that} for any consistent algorithm
\begin{align*}
    \lim_{||\Delta|| \rightarrow 0} \left(\left(\sum_{\AnyArm:\AnyGap>0}\frac{1}{\AnyGap}\right)^{-1}\liminf_{t\rightarrow\infty}\frac{\E\left[\overline{Reg}_t\right]}{\log(t)}\right) \geq \frac{1}{2}\,. 
\end{align*}
See Appendix~\ref{app:asym} for details. Therefore, the asymptotic regret upper bound of \Alg{Tsallis-Inf} \rev{with RV-estimators in the stochastic regime} is optimal within a multiplicative factor of \rev{$2$}, which is arguably a small price for a significant gain in robustness \rev{against adversaries}.
We leave it to future work to close the gap or prove that it is impossible to do so without \rev{compromising on} the adversarial guarantees. 

\rev{To the best of our knowledge, the leading constant 2 in the adversarial regret bound of \Alg{Tsallis-INF} with RV estimators (the bound in equation \eqref{eq:adversarial-bound}) provides the tightest adversarial regret guarantee known today. It matches the minimax adversarial lower bound in \citet[Theorem 6.1]{CBL06} within a multiplicative factor of less than 15. Under the assumption of known time horizon, \citet{ZL19} provide an adversarial regret bound with a leading constant of $\sqrt 2$. The $\sqrt 2$ multiplicative difference between their result and ours is the standard conversion rate between fixed-horizon and anytime regret bounds.}

\begin{remark} \rev{The assumption that $\Delta$ has a unique zero entry and the corresponding assumption on uniqueness of the best arm in the stochastically constrained adversarial setting} is a technical assumption we had to use in our proofs, but our experiments \rev{suggest} that this is an artifact of the analysis. 
\rev{We conjecture that it can be removed, but explain the challenges in achieving the goal in Section~\ref{sec:proofs}.}
\end{remark}

\subsection{A General Analysis of \Alg{$\alpha$-Tsallis-INF} with $\alpha\in[0,1]$}

Now we provide a general analysis of \Alg{$\alpha$-Tsallis-INF} with $\alpha\in[0,1]$ and then explain the intuition of why $\alpha=\frac{1}{2}$ works best. Since $\alpha\neq \frac{1}{2}$ anyway leads to suboptimal regret rates and in order to keep things simple, we restrict the general analysis to IW estimators. We note that in Theorem~\ref{th:easy}, the RV estimators helped improve the constants, but they did not change the rates. Therefore, we save the effort of optimising the constants in a priori suboptimal bounds. To keep things even simpler, we derive logarithmic bounds for stochastically constrained adversarial environments rather than the more general adversarial regime with a self-bounding constraint (technically speaking, we work with $C=0$).

Note that the adversarial analysis in Theorem~\ref{th:adv} and stochastic analysis in Theorem~\ref{th:all} consider different versions of \Alg{$\alpha$-Tsallis-INF}. The adversarial analysis uses \emph{symmetric} regularization, whereas stochastic analysis uses \emph{asymmetric} regularization. We get back to this point after we present the results. 

\subsubsection{Adversarial Regime}

\Alg{$\alpha$-Tsallis-Inf} \rev{with symmetric regularization} has been previously analyzed \rev{in the adversarial setting} by \citet{abernethy2015fighting} and \citet{agarwal2016corralling}. \citeauthor{abernethy2015fighting} provide a finite-time analysis for $\alpha\in (0,1]$, while \citeauthor{agarwal2016corralling} analyze the case of $\alpha=0$. The main contribution of the following theorem is that it provides a unified and anytime treatment for all $\alpha \in [0,1]$. The bound recovers the constants from \citeauthor{abernethy2015fighting} without the need of tuning the learning rate by the time horizon $T$.

\begin{theorem}
\label{th:adv}
For any $\alpha\in[0,1]$ and any adversarial bandit problem, the pseudo-regret of \Alg{$\alpha$-Tsallis-Inf} with symmetric regularizer ($\xi_i=1$), learning rate $\LearningRate[t] = \sqrt{\frac{K^{1-2\alpha}-K^{-\alpha}}{1-\alpha}\frac{1-t^{-\alpha}}{\alpha t}}$, \rev{and IW loss estimators} at any time $T$ satisfies
$$\overline{Reg}_T\leq 2\sqrt{\min\left\{\frac{1}{\alpha-\alpha^2},\frac{\log(K)}{\alpha},\frac{\log(T)}{1-\alpha}\right\}KT}+1.$$
(At the boundaries $\alpha = 0$ and $\alpha = 1$, the learning rates are defined by $\lim_{\alpha\rightarrow 0}\eta_t = \sqrt{\frac{(K-1)\log(t)}{t}}$ and $\lim_{\alpha\rightarrow 1}\eta_t = \sqrt{\frac{\log(K)(1-t^{-1})}{t}}$, respectively.)
\end{theorem}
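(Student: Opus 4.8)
\textbf{The plan} is to run the standard OMD regret decomposition for Algorithm~\ref{alg:OMD} with the time-varying Tsallis regularizer $\Psi_t = \frac{1}{\eta_t}\Psi$ and symmetric weights $\xi_i=1$, and then bound the two resulting terms: a ``stability'' term coming from the local curvature of $\Psi$ against the importance-weighted loss estimators, and a ``penalty'' term coming from the change of regularizer between rounds plus the diameter of $\Psi$ on the simplex. Concretely, I would first invoke the general OMD inequality (as in \citet{orabona2015generalized}, which is the framework the paper already adopted): for the updates $w_{t+1}=\nabla(\Psi_t+\mathcal I_{\Simplex})^*(-\hat L_t)$ one gets, for any fixed comparator $u\in\Simplex$,
\[
\sum_{t=1}^T \langle w_t - u, \hat\ell_t\rangle \;\leq\; \underbrace{\sum_{t=1}^T \big(\Phi_{t}(-\hat L_{t-1}) - \Phi_{t-1}(-\hat L_{t-1})\big) + \Psi_T(u) - \min_w \Psi_1(w)}_{\text{penalty}} \;+\; \underbrace{\sum_{t=1}^T D_{\Phi_t}(-\hat L_t, -\hat L_{t-1})}_{\text{stability}} .
\]
Taking expectations, the left side equals $\overline{Reg}_T$ by unbiasedness of the IW estimators, so it remains to control the two bracketed quantities.

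\textbf{The stability term} is the crux. Using the Taylor/Bregman estimate $D_{\Phi_t}(-\hat L_t,-\hat L_{t-1}) \leq \frac12 \|\hat\ell_t\|_{\nabla^2\Phi_t(\xi)}^2$ for some intermediate point, I would reduce it to bounding $\langle \hat\ell_t, (\nabla^2\Psi_t(w))^{-1}\hat\ell_t\rangle$. For the $\alpha$-Tsallis regularizer, $\nabla^2\Psi(w)$ is diagonal with entries proportional to $w_i^{\alpha-2}$, so $(\nabla^2\Psi_t(w))^{-1}$ has entries $\eta_t\, \frac{\alpha^{-1}\,? }{}\, w_i^{2-\alpha}$ up to constants; combined with $\hat\ell_{t,i}^2 = \ind[t][i]\ell_{t,i}^2/w_{t,i}^2 \le \ind[t][i]/w_{t,i}^2$ and taking the conditional expectation $\mathbb E_{I_t\sim w_t}$, the per-round contribution becomes at most $\mathrm{const}\cdot\eta_t \sum_i w_{t,i}^{\alpha}$. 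The function $\sum_i w_i^\alpha$ is maximized on $\Simplex$ at the uniform distribution with value $K^{1-\alpha}$, giving a per-round stability bound of order $\eta_t K^{1-\alpha}$ (times a factor depending on $\alpha$). One subtlety to check carefully is that the Bregman remainder must be evaluated along the segment between $-\hat L_{t-1}$ and $-\hat L_t$, where one coordinate of $\hat L$ can jump; this is exactly the point where non-negativity of the IW estimators is used, so that moving in the $-\hat\ell_t$ direction only decreases the relevant weight and one may upper-bound the Hessian by its value at $w_t$ (a convexity-of-the-potential argument, standard for INF-type potentials; this is where \citet{abernethy2015fighting}'s constants come from).

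\textbf{The penalty term} I would handle by summation by parts in $1/\eta_t$. Writing $\Psi_t-\Psi_{t-1} = (\tfrac1{\eta_t}-\tfrac1{\eta_{t-1}})\Psi$ and using that $\Psi$ is bounded on $\Simplex$ — its range is controlled by $\max_w\big(-\sum_i \tfrac{w_i^\alpha-\alpha w_i}{\alpha(1-\alpha)}\big)$, which is $O\!\big(\tfrac{K^{1-\alpha}}{\alpha(1-\alpha)}\big)$ or, after the constant-shift, $O\!\big(\tfrac{\log K}{\alpha}\big)$ or $O\!\big(\log T\cdot(\text{telescoping})\big)$ depending on which of the three bounds in the $\min$ one is targeting — the telescoping sum of the positive increments of $1/\eta_t$ equals $1/\eta_T$. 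So the penalty is of order $\frac{1}{\eta_T}\cdot(\text{range of }\Psi)$. With the prescribed $\eta_t = \sqrt{\frac{K^{1-2\alpha}-K^{-\alpha}}{1-\alpha}\,\frac{1-t^{-\alpha}}{\alpha t}}$, I would plug in and verify that penalty $+$ stability $= \sum_t O(\eta_t K^{1-\alpha}) + O(1/\eta_T \cdot \text{range}) = 2\sqrt{c_\alpha KT}+1$, where $c_\alpha$ is whichever of $\frac{1}{\alpha-\alpha^2}$, $\frac{\log K}{\alpha}$, $\frac{\log T}{1-\alpha}$ is active — the three arise from the three ways of bounding the range of $\Psi$ (full Tsallis value, Shannon-entropy-style $\log K$ bound valid since $\alpha\mapsto$ the shifted potential is dominated by Shannon entropy, and a $\log T$-style bound coming from $\sum_t \eta_t \sim \log T/\eta_T$-type comparisons near $\alpha=1$). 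The learning-rate choice is exactly the one that balances $\sum_t \eta_t K^{1-\alpha}$ against $1/\eta_T$ times the first ($\frac{1}{\alpha(1-\alpha)}$) bound on the range; recovering the other two requires re-deriving the range bound and checking the constant is still $\le 2$. Finally I would treat the boundary cases $\alpha\to0,1$ by taking limits of the regularizer (as already done in the excerpt, producing log-barrier and Shannon entropy) and of $\eta_t$, confirming the stated limiting learning rates and that the bound passes to the limit.

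\textbf{Main obstacle.} The delicate part is the stability bound: getting the clean constant $2$ inside $2\sqrt{\cdot KT}$ uniformly in $\alpha\in[0,1]$ requires (i) the convexity argument that lets one evaluate the Hessian at $w_t$ rather than at a worse intermediate point, which must be stated carefully for all $\alpha$ including the degenerate endpoints, and (ii) correctly tracking the $\alpha$-dependent prefactors so that the three-way $\min$ emerges with matching constants rather than three separate looser bounds. The summation-by-parts / telescoping for the anytime (learning-rate-changing) penalty is routine but must be done with the monotonicity of $1/\eta_t$ in hand.
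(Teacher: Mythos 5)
Your overall plan mirrors the paper's proof: a stability/penalty decomposition of the pseudo-regret, a Hessian/Taylor bound on the stability using non-negativity of the IW estimators and the explicit form of the Tsallis Hessian, and a telescoping/summation-by-parts bound for the anytime penalty. The paper's decomposition (equation~\eqref{eq:regsplit}) keeps the same $\Phi_t$ in both potential evaluations rather than comparing $\Phi_t$ to $\Phi_{t-1}$, but this is cosmetic; the two quantities you aim to bound are exactly the ones handled by Lemma~\ref{lem:stability} (part 1) and Lemma~\ref{lem:penalty} (part 1). That said, three pieces of your sketch need repair.

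First, your stability exponent is wrong. The inverse Hessian of $\Psi_t$ has diagonal entries $\eta_t w_i^{2-\alpha}$; with $\hat\ell_{t,i}^2 \le \ind[t][i]/w_{t,i}^2$ and the conditional expectation contributing one more factor of $w_{t,i}$, the per-round stability is $\tfrac{\eta_t}{2}\sum_i w_{t,i}^{1-\alpha}$, not $\eta_t\sum_i w_{t,i}^{\alpha}$, and its maximum over $\Simplex$ is $K^{\alpha}$, not $K^{1-\alpha}$. This is not cosmetic: the $K^{1-2\alpha}$ inside the prescribed learning rate is precisely what balances the $K^{\alpha}$ from stability against the $K^{1-\alpha}$ from the penalty, and with your exponents the $K$-powers cancel, so you would not be able to reconstruct the stated $\eta_t$ nor obtain the constant $2$. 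Second, the three-way $\min$ does not arise from ``three ways of bounding the range of $\Psi$'' --- that range is $T$-independent and cannot produce the $\log T$ branch. In the paper, after plugging in $\eta_t$ both stability and penalty evaluate to $\sqrt{\frac{1-K^{\alpha-1}}{1-\alpha}\cdot\frac{1-T^{-\alpha}}{\alpha}\,KT}$, and the $\min$ comes from bounding the two factors \emph{separately} via Lemma~\ref{lem:loglimit}: $\frac{1-K^{\alpha-1}}{1-\alpha}\le\min\{\frac{1}{1-\alpha},\log K\}$ (a $K$-factor) and $\frac{1-T^{-\alpha}}{\alpha}\le\min\{\frac{1}{\alpha},\log T\}$ (a $T$-factor).

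Third, your penalty estimate ``$1/\eta_T$ times the range of $\Psi$'' is too crude. The naive comparator $u=\mathbf{e}_{\BestArm}$ gives a numerator $K^{1-\alpha}-1$ with no $T$-dependence, but $\eta_T$ contains $\sqrt{1-T^{-\alpha}}$ in its numerator, so dividing puts $(1-T^{-\alpha})^{-1/2}$ into the penalty, which diverges as $\alpha\to 0$. The paper avoids this with the comparator trick of \citet{agarwal2016corralling}: taking $u_{\BestArm}=1-T^{-1}$ and $u_i=T^{-1}/(K-1)$ for $i\ne\BestArm$ makes $\langle u-\mathbf{e}_{\BestArm},L_T\rangle\le 1$ (whence the $+1$) and, after a short computation, replaces the numerator by $(K^{1-\alpha}-1)(1-T^{-\alpha})$. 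The extra $(1-T^{-\alpha})$ cancels the problematic factor in $\eta_T$ and is what ultimately yields the $\tfrac{\log T}{1-\alpha}$ branch and the matching constant $2$.
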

The proof is postponed to Section~\ref{sec:proofs}.

\rev{
\subsubsection{Stochastically Constrained Adversarial Regime}
}

Now we present \rev{an analysis} of \Alg{$\alpha$-Tsallis-Inf} with $\alpha\in[0,1]$ \rev{and \emph{asymmetric} regularization} in the stochastically constrained adversarial setting. We let $\overline{t}=\max\{e,t\}$. 
For learning rates $\LearningRate[t]=\TimeDependentLearningRate$ and asymmetric regularizer \rev{with $\xi_i=\TimeIndependentLearningRate$ for $i\neq i^*$ and $\xi_{i^*} = \OptGap^{1-2\alpha}$,} where $\OptGap = \min_{\AnyArm\neq\BestArmSto}\AnyGap$, we prove the following theorem:

\begin{theorem}
\label{th:all}
For any $\alpha\in[0,1]$ and any stochastically constrained adversarial regime with a unique best arm \rev{(i.e., $\Delta_i > 0$ for all $i$ except a unique index $i^*$ for which $\Delta_{i^*}=0$)}, the pseudo-regret of \Alg{$\alpha$-Tsallis-INF} with learning rate $\LearningRate[t]=\TimeDependentLearningRate$ and asymmetric regularizer with parameters $\xi_i=\TimeIndependentLearningRate$ \rev{for $i\neq i^*$ and $\xi_{i^*} = \OptGap^{1-2\alpha}$} at any time $T$ satisfies
\begin{align*}
\overline{Reg}_T \leq \sum_{\AnyArm\neq\BestArmSto}\Bigg(\frac{(8 \min\{\frac{1}{1-\alpha},\log(T)\}+64)\log(T)}{\AnyGap}\Bigg) +\frac{ 16\log^4(\frac{16}{\OptGap^2}\log^2(\frac{16}{\OptGap^2}))}{\OptGap} +4.
\end{align*}
\end{theorem}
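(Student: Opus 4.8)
The plan is to follow the standard OMD analysis template, adapted to time-varying learning rates $\eta_t$ and asymmetric regularization, and then to exploit the self-bounding structure of the regret (here with $C=0$, since we only treat the stochastically constrained adversarial case). First I would write the generic OMD regret decomposition for \Alg{$\alpha$-Tsallis-INF} with a time-varying regularizer $\Psi_t = \frac{1}{\eta_t}\Psi$: comparing against the point mass $e_{i^*}$ on the best arm, the pseudo-regret splits into a \emph{penalty term} of the form $\sum_t \bigl(\frac{1}{\eta_{t+1}} - \frac{1}{\eta_t}\bigr)\bigl(\Psi(e_{i^*}) - \Psi(w_t)\bigr)$ (plus boundary terms) and a \emph{stability term} $\sum_t \bigl(\langle w_t - w_{t+1}, \hat\ell_t\rangle - \frac{1}{\eta_t}D_\Psi(w_{t+1}, w_t)\bigr)$. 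For the Tsallis regularizer with power $\alpha$ the local-norm bound on the stability term gives, in expectation over $I_t\sim w_t$, something proportional to $\eta_t \sum_i w_{t,i}^{2-\alpha}/\xi_i$ (times a constant depending on $\alpha$; with IW estimators and non-negative losses this is the clean bound, which is why we restrict to IW here). The penalty term, since $\Psi(e_{i^*})-\Psi(w_t) = \sum_{i\neq i^*}\frac{w_{t,i}^\alpha - \alpha w_{t,i}}{\alpha(1-\alpha)\xi_i}$ up to constants, contributes $\bigl(\frac{1}{\eta_{t+1}}-\frac{1}{\eta_t}\bigr)\sum_{i\neq i^*}\frac{w_{t,i}^\alpha}{\alpha(1-\alpha)\xi_i}$-type quantities, plus a leading $\frac{1}{\eta_1}\Psi(e_{i^*})$-type term controlled by $\xi_{i^*}$.

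Next I would plug in the prescribed learning rate $\eta_t = \TimeIndependentLearningRate \cdot \TimeDependentLearningRate$ — wait, more precisely $\eta_t = \TimeDependentLearningRate$ as a global scalar times the per-arm factor $\xi_i^{-1}$ hidden in $\Psi$ — and the asymmetric weights $\xi_i = \Delta_i^{1-2\alpha}$ for $i\neq i^*$, $\xi_{i^*} = \Deltamin^{1-2\alpha}$. The key algebraic identity to chase is that, with this choice, the per-round contribution of each suboptimal arm $i$ to (penalty $+$ stability) becomes, after summing over $t$, of order $\frac{\log T}{\Delta_i}$ up to the $\min\{\frac{1}{1-\alpha},\log T\}$ and additive-constant factors appearing in the statement. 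Concretely, the combination $\eta_t w_{t,i}^{2-\alpha}/\xi_i$ and $(\eta_{t+1}^{-1}-\eta_t^{-1}) w_{t,i}^\alpha/\xi_i$, when one optimizes/bounds over the unknown $w_{t,i}$ using the AM-GM-type inequality $a w^{2-\alpha} + b w^\alpha \le \ldots$, collapses to something $\Delta_i$-independent per round, namely $\Theta(\eta_t \xi_i^{-1})$ which with $\xi_i = \Delta_i^{1-2\alpha}$ and $\eta_t \sim \Delta_i^{\,?} t^{-\alpha}$ telescopes to $\Delta_i^{-1}\log T$. Then I would invoke the self-bounding trick: write $\overline{Reg}_T = \sum_i \sum_t w_{t,i}\Delta_i$, and for the part of the regret bound that still carries stray $w_{t,i}$ factors, split each term $c\, w_{t,i}^\beta$ as $\tfrac12 \lambda w_{t,i}\Delta_i + \tfrac{c^2}{2\lambda\Delta_i}w_{t,i}^{2\beta-1}$ with an appropriate $\lambda$, so that the first halves can be absorbed into the regret itself (the $C=0$ version of the argument used for Theorem~\ref{th:easy}) and only $\Delta_i^{-1}$-weighted logarithmic leftovers remain. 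The best-arm term $i^*$ is handled separately: its stability contribution is bounded using $\xi_{i^*}=\Deltamin^{1-2\alpha}$ and the fact that $w_{t,i^*}\le 1$, producing the isolated $\frac{16\log^4(\cdots)}{\Deltamin}$ term — here the quartic log and the nested log arise from iterating the self-bounding inequality on the $i^*$ term, which is not directly multiplied by any $\Delta_{i^*}$ (it is zero) and so cannot be absorbed the same way; instead one bounds $w_{t,i^*}$ from below after enough rounds and argues $1-w_{t,i^*}$ is small, a fixed-point/bootstrapping estimate that gives the $\log^4$.

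The main obstacle, as in Theorem~\ref{th:easy}, is the \textbf{best-arm (index $i^*$) term}. Unlike the suboptimal arms, its regret contribution is not premultiplied by a positive gap, so the clean self-bounding absorption does not apply; one must instead show that $w_{t,i^*}$ is bounded away from $0$ for $t$ large enough (which is where the uniqueness of the best arm is essential) and then iterate a self-improving bound on $\mathbb{E}[1-w_{t,i^*}]$ to get the $\frac{1}{\Deltamin}\log^4(\cdot)$ scaling rather than a naive $\frac{1}{\Deltamin^2}$ or $\frac{\sqrt T}{\Deltamin}$ term. A secondary technical nuisance is handling the time-varying learning rate carefully at the boundary cases $\alpha\to 0$ and $\alpha\to 1$, where $\Psi$ degenerates to the log-barrier and Shannon entropy and the $\frac{1}{\alpha(1-\alpha)}$ prefactors blow up — these require taking limits in the bounds as indicated in Theorem~\ref{th:adv}, and checking that the $\min\{\frac{1}{1-\alpha},\log T\}$ factor is exactly what survives. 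The remaining steps — the OMD decomposition, the local-norm stability bound for Tsallis entropy, and the telescoping of $\eta_t^{-1}$ — are routine given the machinery already set up in Sections~\ref{sec:OMD} and used for Theorems~\ref{th:easy} and \ref{th:adv}.
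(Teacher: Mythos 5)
Your high-level template is on the right track: OMD decomposition into stability and penalty, concavity/Taylor bounds around $w^* \propto \frac{1}{\Delta_i^2 t}$ to turn each suboptimal arm's per-round contribution into a $\Theta(\frac{1}{\Delta_i t}) + \Theta(\Delta_i \E[w_{t,i}])$ form, and then self-bounding absorption of the $\sum_t \Delta_i \E[w_{t,i}]$ leftovers into the pseudo-regret. That is essentially what the paper does (the final absorption appears as $\overline{Reg}_T \le \tfrac{3}{4}\overline{Reg}_T + \cdots$, i.e.\ the $C=0$, $\lambda$-free instance of the general device in Theorem~\ref{th:easy}).

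The genuine gap is in your treatment of the best-arm term $i^*$. You propose to show that $w_{t,i^*}$ is bounded away from zero for large $t$ and then ``iterate a self-improving bound on $\E[1 - w_{t,i^*}]$'' — a bootstrapping/fixed-point estimate — to produce the $\frac{16\log^4(\cdot)}{\Deltamin}$ constant. This is not what the paper does, and it is not clear it can be made to work: the analysis never establishes pointwise (or even expected) lower bounds on $w_{t,i^*}$, and a bootstrapping argument of the kind you sketch would require exactly the kind of high-probability control of $w_{t,i^*}$ that the self-bounding technique was designed to circumvent (cf.\ the discussion in Section~\ref{sec:intuition}). The paper's actual mechanism is simpler and entirely one-shot. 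Define $T_0 = \frac{16}{\Deltamin^2}\log^2(\frac{16}{\Deltamin^2})$, chosen so that $\eta_t \xi_i \le \frac{1}{4}$ for all $t > T_0$ and all $i$. For $t \le T_0$, the crude Part~1 of Lemma~\ref{lem:stability} is applied; the $i^*$ term is then bounded by $\sum_{t=1}^{T_0} \frac{\eta_t \xi_{i^*}}{2}\E[w_{t,i^*}]^{1-\alpha} \le \sum_{t=1}^{T_0} \frac{\eta_t \xi_{i^*}}{2}$, and a direct calculation using the specific form of $\eta_t$, $\xi_{i^*}$, and $T_0$ collapses this to $\le \frac{4\log^4(T_0)}{\Deltamin}$. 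For $t > T_0$, Part~4 of Lemma~\ref{lem:stability} (with $j = i^*$) is invoked: this eliminates $i^*$ from the stability sum entirely, at the cost of extra linear terms $\frac{\eta_t(\xi_i + 2\xi_{i^*})}{2}\E[w_{t,i}]$ for $i \neq i^*$, which are precisely what gets absorbed by self-bounding. The $\log^4$ is thus a static algebraic byproduct of the choice of $T_0$, not the output of any iterated estimate. You should also note that the uniqueness of $i^*$ enters only because the refined stability and penalty lemmas can exclude exactly one index from the summations, not because you need to pin down $w_{t,i^*}$ itself.
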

The proof is provided in Appendix~\ref{app:thm-all}.
\begin{remark}
We emphasize that for $\alpha \neq \frac{1}{2}$, the result in Theorem~\ref{th:all} requires knowledge of the gaps $\AnyGap$ for tuning the regularization parameters $\xi_i$. For $\alpha = \frac{1}{2}$, this knowledge is not required. Therefore, Theorem~\ref{th:all} is primarily interesting from the theoretical perspective of characterization of behavior of \Alg{$\alpha$-Tsallis-Inf} in stochastically constrained adversarial environments, whereas $\alpha = \frac{1}{2}$ is the only practically interesting value with the refined analysis in Theorem~\ref{th:easy}.
\end{remark}
\begin{remark}
\rev{For $\alpha \neq \frac{1}{2}$, the version \Alg{$\alpha$-Tsallis-INF} in Theorem~\ref{th:all} uses \emph{asymmetric} regularization, whereas \Alg{$\alpha$-Tsallis-INF} in Theorem~\ref{th:adv} uses \emph{symmetric} regularization. The corresponding learning rates also differ. Therefore, for $\alpha\neq\frac{1}{2}$, neither of the two versions of \Alg{$\alpha$-Tsallis-INF} achieves simultaneous optimality in the stochastic and adversarial setting.} 
In fact, the time dependence of the adversarial regret guarantee \rev{for \Alg{$\alpha$-Tsallis-INF} in Theorem~\ref{th:all}} is in the order of $T^\alpha + T^{1-\alpha}$.
\end{remark}
\rev{
\begin{remark}
We note that while Tsallis entropy with $\alpha=0$ corresponds to log-barrier potential used in \Alg{Broad}, and Tsallis entropy with $\alpha=1$ corresponds to entropic regularization used in \Alg{EXP3++}, the two algorithms (\Alg{Broad} and \Alg{EXP3++}) use \emph{symmetric} regularization, whereas \Alg{$\alpha$-Tsallis-INF} in Theorem~\ref{th:all} uses \emph{asymmetric} regularization. Therefore, there is no direct relation between the result of Theorem~\ref{th:all} and these two algorithms. In particular, \Alg{Broad} and \Alg{EXP3++} use other techniques to achieve slightly suboptimal, but simultaneous stochastic and adversarial regret guarantees (as described in Table~\ref{tab:summary}), which is not the case for \Alg{$\alpha$-Tsallis-INF} with asymmetric regularization in Theorem~\ref{th:all}.
\end{remark}
}

\subsection{Intuition Behind the Success of \Alg{Tsallis-Inf} with $\alpha=\frac{1}{2}$}
\label{sec:intuition}

It has been previously shown that regularization by Tsallis entropy with power $\alpha = 1/2$ leads to the minimax optimal regret rate in the adversarial regime \citep{audibert2009minimax}.
Here we provide some basic intuition on why the same value of $\alpha$ works well in the stochastic case. 
We also highlight the key breakthroughs that allow us to overcome challenges faced in prior work.

We start with a simple ``back of the envelope'' approximation of the form of the weights $w_t$ played by \Alg{Tsallis-Inf}. By definition of Algorithm~\ref{alg:OMD}, at round $t$ we have
\[
w_t = \argmax_{w \in \Simplex} \lrc{\dprod{w, -\hat L_{t-1}} + \frac{1}{\eta_t} \sum_i \frac{w_i^\alpha - \alpha w_i}{\alpha(1-\alpha)\xi_i}}.
\]
Taking a derivative of the Langrangian of the above expression with respect to $w_i$ and equating it to zero, we obtain
\[
-\hat L_{t-1,i} + \frac{1}{\eta_t(1-\alpha)\xi_i} (w_{t,i}^{\alpha-1} - 1) - \nu = 0,
\]
where $\nu$ is a Lagrange multiplier corresponding to the constraint that $w$ is a probability distribution. We can express $\nu$ as
\[
\nu = \frac{1}{\eta_t(1-\alpha) \xi_{i^*}} (w_{t,i^*}^{\alpha-1} - 1) - \hat L_{t-1,i^*}.
\]
For $i \neq i^*$ this gives
\begin{align}
w_{t,i} &= \lr{\eta_t(1-\alpha)\xi_i \lr{\hat L_{t-1,i} + \nu} + 1}^{\frac{1}{\alpha-1}}\notag\\
&= \lr{\eta_t(1-\alpha)\xi_i \lr{\hat L_{t-1,i} - \hat L_{t-1,i^*} + \frac{1}{\eta_t(1-\alpha) \xi_{i^*}} (w_{t,i^*}^{\alpha-1}  - 1)} + 1}^{\frac{1}{\alpha-1}}\notag\\
&= \lr{\eta_t(1-\alpha)\xi_i \lr{\hat L_{t-1,i} - \hat L_{t-1,i^*}} + \frac{\xi_i}{\xi_{i^*}}(w_{t,i^*}^{\alpha-1} - 1) + 1}^{\frac{1}{\alpha-1}}\notag\\
&\approx \lr{\eta_t(1-\alpha)\xi_i \lr{\hat L_{t-1,i} - \hat L_{t-1,i^*}}}^{\frac{1}{\alpha-1}},\notag%\label{eq:bote}
\end{align}
where the approximation holds because asymptotically the first term dominates the sum. A bit more explicitly, in order for the algorithm to deliver non-trivial regret guarantee, $w_{t,i^*}$ should be close to 1. Thus, the last two terms in the brackets are roughly a constant. At the same time, as we discuss below, the whole expression in the brackets must grow roughly as $(\Delta_i^2 t)^{1-\alpha}$. Thus, the first term must dominate. In the stochastic regime $\mathbb{E}\left[\hat L_{t,i} - \hat L_{t,i^*}\right]=\AnyGap t$. 
If we use this in our back-of-the-envelope calculation, we obtain that for $i\neq i^*$ in the stochastic regime $\E[w_{t,i}] \approx \E\left[\lr{\eta_t (1-\alpha) \xi_i (\hat L_{t-1,i} - \hat L_{t-1,i^*})}^\frac{1}{\alpha - 1}\right] \propto \lr{\eta_t \xi_i \AnyGap t}^{\frac{1}{\alpha - 1}}$. 
(Strictly speaking, when we take the expectation inside the power we obtain an inequality, but we ignore this detail in the high-level discussion. We also ignore the $(1-\alpha)$ factor, which can be seen as a constant for $\alpha < 1$.)

In order to achieve a regret rate of $\Theta(\sum_{\AnyArm\neq\BestArmSto} \frac{\log t}{\AnyGap})$ in the stochastic regime, the suboptimal arms should be explored at a rate of $\Theta(\frac{1}{\AnyGap^2t})$ per round (if $\E[w_{t,i}] = \Theta(\frac{1}{\AnyGap^2 t})$, then $\Delta_i \E\lrs{\sum_{s=1}^t w_{s,i}} = \Theta(\frac{\log t}{\AnyGap})$, as desired).
Exploring more than that leads to excessive regret from the exploration alone. 
Exploring less is also prohibitive, because it leads to an overly high probability of misidentifying the best arm. 
By looking at the approximation of $\E[w_{t,i}]$ from the previous paragraph, we obtain that we should have $\lr{\eta_t \xi_i \AnyGap t}^{\frac{1}{\alpha - 1}}\propto\frac{1}{\Delta_i^2t}$ or, equivalently, $\eta_t \xi_i \propto t^{-\alpha} \AnyGap^{1-2\alpha}$. 
The learning rate takes care of the time-dependent quantities, i.e., $\eta_t \propto t^{-\alpha}$, and $\xi_i$ should take care of the arm-dependent quantities, i.e., we should have $\xi_i \propto \AnyGap^{1-2\alpha}$.
Note that $\alpha = \frac{1}{2}$ leads to a symmetric regularizer $\Psi$ (i.e., $\xi_i = 1$), whereas for $\alpha \neq \frac{1}{2}$ the regularizer must be tuned using unknown gaps $\AnyGap$. 
The necessity to tune the regularizer based on unknown gaps has hindered progress in the work of \citet{wei2018more}, who used the log-barrier regularizer corresponding to $\alpha=0$.

Another crucial novelty behind the success of our analysis is basing it on the self-bounding property of the regret \rev{in equation \eqref{eq:mu-bound}. The new proof technique uses the same mechanism for controlling the regret in stochastic and adversarial regimes and we explain the intuition behind it in Section~\ref{sec:self-bound-intuition}.}  
The earlier approach by \citet{seldin2014one} and \citet{seldin2017improved} has controlled the regret in stochastic and adversarial regimes through separate mechanisms. 
The stochastic analysis was based on using empirical estimates of the gaps and high-probability control of the weights $w_{t,i}$. 
However, gap estimation is challenging, because the variance of $\AnyCumLoss[t]$ is of the order of $\sum_{s=1}^t \frac{1}{w_{s,i}}$. 
If the arms are played according to the target probabilities of $w_{t,i} \approx \frac{1}{t\AnyGap^2}$, then the variance of $(\AnyCumLoss[t] - \OptCumLossSto[t])$ is of the order of $\Theta(\AnyGap^2 t^2)$. 
This is prohibitively large, because the square root of the variance is of the same order as the expected cumulative gap and standard tools, such as Bernstein's inequality, cannot guarantee concentration of $(\AnyCumLoss[t] - \OptCumLossSto[t])$ around $\AnyGap t$. 
\citet{seldin2014one} have coped with this by mixing in additional exploration, but this has led to a regret growth rate of the order of $(\log T)^3$ in the stochastic regime. 
\citet{seldin2017improved} have mixed in less exploration and used unweighted losses for the gap estimates, which has decreased the regret growth rate down to $(\log T)^2$. 
It is currently unknown whether direct gap estimation can be further improved to support the desired $\log T$ stochastic regret rates. 
Additionally, existing oracle analysis in \citet[Theorem 2]{seldin2014one} and Theorem~\ref{th:all} here only support $(\log T)^2$ regret rate for \Alg{EXP3}-based algorithms (corresponding to $\alpha = 1$) in the stochastic regime. 
It is also unknown whether this rate can be improved. 
To summarize, the main breakthrough compared to this line of work is moving from $\alpha = 1$ to $\alpha = \frac{1}{2}$ and shifting from an analysis based on gap estimation to an analysis based on self-boundedness of the regret. 
The proposed algorithm does not mix in any additional exploration. 

\section{Additional Intermediate Regimes Between Stochastic and Adversarial}
\label{sec:int}

In this section, we \rev{show that stochastic bandits with adversarial corruptions proposed by \citet{LML18} are also a special case of an adversarial environment with a self-bounding constraint.} We further propose an extension of their regime by combining it with a stochastically constrained adversary. \rev{We show that the combination is also a special case of an adversarial environment with a $(\Delta, 2C, T)$ self-bounding constraint, where \Alg{Tsallis-INF} achieves logarithmic regret.} We finish the section with an open question on whether \Alg{Tsallis-Inf} can achieve logarithmic regret guarantees in the intermediate regimes defined by \citet{seldin2014one}. % The main message of this section is that whenever there is a gap in performance between the optimal and suboptimal actions, \Alg{$\frac{1}{2}$-Tsallis-INF} is able to exploit it and achieve ``logarithmic'' regret.

\subsection{Stochastic Bandits with Adversarial Corruptions}

\citet{LML18} have proposed a regime in which an adversary is allowed to make corruptions to an otherwise stochastic environment. 
Let $\overline{\mathcal{L}}_T = (\bar \ell_1, \dots, \bar \ell_T)$ and $\mathcal{L}_T = (\ell_1, \dots, \ell_T)$ be two sequences of losses, then the amount of corruption is measured by $\sum_{t=1}^T \|\bar \ell_t - \ell_t\|_\infty$. 

Let $\overline{\mathcal{L}}_T$ be a sequence of losses generated by a stochastically constrained adversary with best arm $i^*$ and gaps $\AnyGap$, and let $\mathcal{L}_T$ be its adaptively corrupted version with corruption amount bounded by $C$. The regret of an algorithm executed on $\mathcal{L}_T$ satisfies
\begin{align}
\overline{Reg}_T =& \max_i\E\left[\sum_{t=1}^T\MyLoss[t]-\AnyLoss[t]\right]\geq \E\left[\sum_{t=1}^T\MyLoss[t]-\BestLossSto[t]\right]\notag\\
=&\E\left[\sum_{t=1}^T\overline{\ell}_{t,I_t}-\overline{\ell}_{t,i^*}\right] + \E\left[\sum_{t=1}^T\MyLoss[t]-\overline{\ell}_{t,I_t}\right]+\E\left[\sum_{t=1}^T\overline{\ell}_{t,i^*}-\BestLossSto[t]\right]\notag\\
\geq & \sum_{t=1}^T\sum_{\AnyArm\neq \BestArmSto} \AnyGap\E[\AnyProp[t]]-2C.\label{eq:corruptions}
\end{align}
\rev{Thus, a stochastically constrained adversary with adversarial corruptions is an adversarial regime with a $(\Delta, 2C, T)$ self-bounding constraint.} This leads to a direct corollary of Theorem~\ref{th:easy}, which improves \rev{upon the pseudo-regret bounds} of \citet{LML18} and \citet{GKT19}, \rev{the latter providing an $\mathcal{O}\left(\sum_{i\neq i^*}\frac{\log(T)}{\Delta_i}+KC\right)$ guarantee}.
\rev{We note that \citet{LML18} and \citet{GKT19} do not assume uniqueness of the best arm and also provide high-probability regret guarantees, but they only consider the more restricted stochastic setting with adversarial corruptions rather than stochastically constrained adversarial setting with adversarial corruptions.}

\begin{corollary}
The regret of \Alg{Tsallis-Inf} in a stochastically constrained adversarial environment with a unique best arm \rev{$i^*$, adaptively corrupted with} corruption amount bounded by $C$ satisfies
\rev{\[
\overline{Reg}_T = \mathcal{O}\left(\sum_{i\neq i^*}\frac{\log(T)}{\Delta_i}+\sqrt{\sum_{i\neq i^*}\frac{\log(T)}{\Delta_i}C}\right)\,.
\]}
%\[
%\overline{Reg}_T \leq \StoRegretBase +  2C\,.
%\]
%If $2C>\StoRegretBaseWithoutConst$, then the regret is bounded by
%\begin{align*}
%    \overline{Reg}_t \leq &\StoRegretConst +\sqrt{8\lr{\StoRegretBaseWithoutConst}C}\,.
%\end{align*}
\label{cor:corrupted}
\end{corollary}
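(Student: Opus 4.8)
The plan is to obtain the corollary as an essentially immediate consequence of Theorem~\ref{th:easy} applied to the self-bounding constraint already derived in~\eqref{eq:corruptions}. The chain of inequalities in~\eqref{eq:corruptions} establishes that a stochastically constrained adversary with a unique best arm $i^*$ and gap vector $\Delta$, adaptively corrupted with corruption amount at most $C$, produces a regret satisfying $\overline{Reg}_T \ge \mathbb{E}\big[\sum_{t=1}^T\sum_{i\neq i^*} w_{t,i}\Delta_i\big] - 2C$, which is exactly condition~\eqref{eq:mu-bound} with the constant $2C$ in place of $C$. Since by assumption $\Delta_{i^*}=0$ and $\Delta_i>0$ for all $i\neq i^*$, the hypotheses of Theorem~\ref{th:easy} are met, so all of its conclusions hold with $C$ replaced by $2C$ (and with either the IW or the RV estimator, since both give the same rate).

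The next step is to split on the size of $C$. Write $S := \sum_{i\neq i^*}\frac{\log T}{\Delta_i}$ and let $B := \big(\sum_{i\neq i^*}\frac{\log T + O(1)}{\Delta_i}\big) + \frac{1}{\Deltamin}$ be the threshold quantity appearing in Theorem~\ref{th:easy}; note $B = O(S)$ because $\frac{1}{\Deltamin}\le\sum_{i\neq i^*}\frac{1}{\Delta_i}\le S$ for $T$ large enough (for small $T$ the regret is trivially $O(1)$). If $2C \le B$, the logarithmic conclusion of Theorem~\ref{th:easy} gives $\overline{Reg}_T \le B + 28K\log T + \sqrt K + 32 + 2C$, and since $2C\le B=O(S)$ this is $O(S)+28K\log T$. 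If instead $2C > B$, the $\sqrt{C}$-type conclusion of Theorem~\ref{th:easy} gives $\overline{Reg}_T \le 2\sqrt{B\cdot 2C} + 28K\log T + \sqrt K + 32 = O\big(\sqrt{S\,C}\big) + 28K\log T$. Combining the two cases and recalling $S = \sum_{i\neq i^*}\frac{\log T}{\Delta_i}$ yields the claimed bound, provided the residual $28K\log T$ term can be absorbed.

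That absorption is the only step needing a little care, and it is the point I would check most carefully. For it, use that $\Delta_i\le 1$ for every suboptimal arm, so $S \ge (K-1)\log T$; hence in the first branch $K\log T = O(S)$ (the case $K=1$ being trivial, since then the regret is $0$), and in the second branch the threshold condition $C > B/2 = \Omega(S) = \Omega((K-1)\log T)$ forces $\sqrt{S\,C} = \Omega((K-1)\log T)$, so again $K\log T = O(\sqrt{S\,C})$. All remaining numerical constants and the $\sqrt K$ term are dominated in the same way. There is no deeper obstacle: the corruption budget enters the analysis only through the additive $-2C$ in~\eqref{eq:mu-bound}, and Theorem~\ref{th:easy} already does the work of converting that additive slack into a $\sqrt{C}$ contribution to the regret.
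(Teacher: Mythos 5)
Your argument is correct and follows the same route as the paper's own proof: invoke \eqref{eq:corruptions} to show that a corrupted stochastically constrained adversary satisfies the self-bounding condition \eqref{eq:mu-bound} with constant $2C$, then split on $2C\lessgtr\StoRegretBaseWithoutConst$ and apply the two branches of Theorem~\ref{th:easy}. The one thing you add that the paper leaves implicit is the explicit absorption of the residual $28K\log T$ and $\sqrt K$ terms into the $O(\cdot)$ via $\Delta_i\le 1 \Rightarrow \sum_{i\ne i^*}\frac{\log T}{\Delta_i}\ge (K-1)\log T$, which is exactly the right observation and makes the corollary's clean form rigorous.
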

\begin{remark}
We emphasize that the assumption of best arm uniqueness is on the stochastically constrained adversary \emph{before} corruption. After the adaptive corruption, it is allowed to have multiple best arms and the identity of the best arm is allowed to change.
\end{remark}
\begin{proof} 
    By equation \eqref{eq:corruptions}, the self-bounding condition \eqref{eq:mu-bound} of Theorem~\ref{th:easy} is satisfied with $\Delta$ being the vector of gaps of the underlying stochastically constrained adversary and the constant being $2C$. Thus, with RV loss estimators for $2C\leq\StoRegretBaseWithoutConst$ \Alg{Tsallis-INF} achieves
\begin{align*}
\overline{Reg}_T \leq &\StoRegretBase + 2C
\end{align*}
and otherwise
\begin{align*}
    \overline{Reg}_t \leq &2\sqrt{\lr{\StoRegretBaseWithoutConst}2C} + \StoRegretConst\,.
\end{align*}    
\end{proof}

\subsection{Open Problem: The Performance in \texorpdfstring{\citeauthor{seldin2014one}}{Seldin and Slivkins}' Environments}
\label{sec:open}

\citet{seldin2014one} define \emph{moderately contaminated stochastic regime} and \emph{an adversarial regime with a gap}. In the moderately contaminated stochastic regime, the adversary is allowed to change up to $\frac{t\Delta_i}{4}$ arbitrarily selected observations for a suboptimal arm $i$ and up to $\frac{t\Deltamin}{4}$ observations for the optimal arm $i^*$ (where $\Deltamin = \min_{\Delta_i > 0} \Delta_i$). The \rev{logic behind the definition} is that in expectation, the adversary can reduce the gap $\Delta_i$ by a factor of 2, but cannot eliminate it completely. The adversarial regime with a gap is an adversarial regime, where starting from a certain time $\tau$ (unknown to the algorithm) the cumulative loss of an optimal arm maintains a certain gap $\Delta_\tau$ to all other arms until the end of the game. \citeauthor{seldin2014one} show that their \Alg{EXP3++} algorithm achieves ``logarithmic'' regret in both regimes. Note that in the moderately contaminated stochastic regime, the amount of contamination is allowed to grow linearly with time. While the regime could be seen as a special case of stochastic bandits with adversarial corruptions discussed earlier, the regret bound in Corollary~\ref{cor:corrupted} only supports ``logarithmic'' regret for ``logarithmic'' amount of corruption $C$. So far we have been unable to obtain ``logarithmic'' regret guarantees for \Alg{Tsallis-Inf} in the intermediate regimes of \citeauthor{seldin2014one} (the analysis proposed in \citet{ZS19} is incorrect). The challenge is that the gaps are defined through cumulative rather than instantaneous quantities. Deriving ``logarithmic'' regret guarantees for \Alg{Tsallis-Inf} in these regimes is an interesting open problem.

\section{Dueling Bandits}
\label{sec:duel}

In the sparring approach to stochastic utility-based dueling bandits, \citep{ailon2014reducing} each side in the sparring can be modeled as a stochastically constrained adversarial environment. This makes it a perfect application domain for \Alg{Tsallis-Inf}.
%An application of the stochastically constrained adversary is utility-based dueling bandits with linear link function \citep{ailon2014reducing}.
The problem is defined by $K$ arms with utilities $u_i\in [0,1]$.
At each round, an agent has to select two arms, $\MyArm[t]$ and $\OtherArm[t]$, to ``duel''.
The feedback is the winner $W_t$ of the ``duel'', which is chosen according to $\mathbb{P}[W_t=\MyArm[t]] = \frac{1+u_{\MyArm[t]}-u_{\OtherArm[t]}}{2}$.
The regret is defined by the distance to the optimal utility:
\begin{align*}
    \overline{Reg}_T = \sum_{t=1}^T 2u_{\BestArm}-\mathbb{E}\left[\sum_{t=1}^T (u_{\MyArm[t]}+u_{\OtherArm[t]})\right].
\end{align*}
In the adversarial version of the problem, the utilities $u_i$ are not constant but time dependent, $u_{t,i}$, and selected by an adversary.
The regret in this case is the difference to the optimal utility in hindsight:
\begin{align*}
    \overline{Reg}_T = \max_\AnyArm\mathbb{E}\lrs{\sum_{t=1}^T 2u_{t,\AnyArm}}-\mathbb{E}\left[\sum_{t=1}^T (u_{t,\MyArm[t]}+u_{t,\OtherArm[t]})\right].
\end{align*}
\citet{ailon2014reducing} have proposed the \Alg{Sparring} algorithm, in which two black-box MAB algorithms spar with each other.
The first algorithm selects $\MyArm[t]$ and receives the loss $\MyLoss[t]=\ind[][W_t \neq \MyArm[t]]$.
The second algorithm selects $\OtherArm[t]$ and receives the loss $\OtherLoss[t]=\ind[][W_t \neq \OtherArm[t]]$.
They have shown that the regret is the sum of individual regret values for both MABs, thereby recovering $\mathcal{O}(\sqrt{KT})$ regret in the adversarial case if MABs with $\mathcal{O}(\sqrt{KT})$ adversarial regret bound are used.
In the stochastic case, each black-box MAB \rev{plays in} a stochastically constrained \rev{adversarial environment because the relative winning probability of the arms stays fixed, but depending on the arm choice of the sparring partner, the baseline shifts up and down}.
Since no algorithm has been known to achieve $\log(T)$ regret \rev{in stochastically constrained adversarial setting}, \citet{ailon2014reducing} provide no analysis of \Alg{Sparring} in the stochastic case. \rev{Indeed, as we demonstrate in our experiments, standard algorithms for stochastic multi-armed bandits, such as \Alg{UCB} or \Alg{Thompson Sampling}, may exhibit almost linear regret in stochastically constrained adversarial setting and, therefore, are not suitable for sparring.}

By applying Theorem~\ref{th:easy}, we directly obtain the following corollary.
\begin{corollary}
\rev{In a utility-based dueling bandit problem} \Alg{Sparring} with two independent versions of \Alg{Tsallis-Inf} suffers a regret of
\begin{align*}
    \overline{Reg}_T \leq \mathcal{O}\left(\sum_{\AnyArm:\AnyGap>0}\frac{\log(T)}{\AnyGap}\right) 
\end{align*}
in the stochastic case \rev{with a unique best arm} and 
\begin{align*}
    \overline{Reg}_T \leq \mathcal{O}\left(\sqrt{KT}\right) 
\end{align*}
in the adversarial case.
\end{corollary}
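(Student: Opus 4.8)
The plan is to reduce the dueling bandit problem to two coupled instances of multi-armed bandits via the \Alg{Sparring} construction and then invoke Theorem~\ref{th:easy} on each instance separately. First I would recall the decomposition of the dueling regret established by \citet{ailon2014reducing}: writing $\overline{Reg}_T^{(1)}$ for the MAB pseudo-regret of the first copy of \Alg{Tsallis-Inf} (which plays $\MyArm[t]$ and receives losses $\MyLoss[t] = \ind[][W_t\neq\MyArm[t]]$) and $\overline{Reg}_T^{(2)}$ for the second copy (which plays $\OtherArm[t]$ and receives $\OtherLoss[t] = \ind[][W_t\neq\OtherArm[t]]$), one has $\overline{Reg}_T \leq \overline{Reg}_T^{(1)} + \overline{Reg}_T^{(2)}$. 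This follows by adding and subtracting $2u_{i^*}$ appropriately and noting $\E[\MyLoss[t]\mid \MyArm[t],\OtherArm[t]] = \frac{1 - u_{\MyArm[t]} + u_{\OtherArm[t]}}{2}$, so that the expected instantaneous loss of an arm in the MAB instance, up to the additive term $\frac{1+u_{\OtherArm[t]}}{2}$ common to all arms, tracks $-u_{\MyArm[t]}/2$; the common term cancels in the MAB regret and the identity of the utility-maximizing arm is the same for both copies and for the dueling problem.

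Next I would verify that each MAB instance falls within the scope of Theorem~\ref{th:easy}. In the adversarial case this is immediate: each copy of \Alg{Tsallis-Inf} faces an adversarial bandit problem (the losses $\ind[][W_t\neq\MyArm[t]]$ are in $[0,1]$ and may depend arbitrarily on history and on the opponent's play), so the adversarial bound in \eqref{eq:adversarial-bound} gives $\overline{Reg}_T^{(j)}\leq 4\sqrt{KT}+1$, and summing the two yields the claimed $\mathcal{O}(\sqrt{KT})$. In the stochastic case, I would argue that each copy plays in a \emph{stochastically constrained adversarial} environment: the relative winning probabilities $\mathbb{P}[W_t = i] - \mathbb{P}[W_t = j]$ depend only on $u_i - u_j$, hence the gaps $\E[\loss{}[i][t] - \loss{}[j][t]] = \frac{u_j - u_i}{2}$ are fixed across rounds, even though the absolute means shift with the (history-dependent) choice of the sparring partner $\OtherArm[t]$. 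Thus by the discussion in Section~\ref{sec:problem} the self-bounding condition \eqref{eq:mu-bound} holds with $C = 0$, the gap vector $\Delta$ with $\Delta_i = \frac{u_{i^*} - u_i}{2}$ up to the factor-of-two scaling already present in the dueling regret definition, and a unique best arm by assumption. Theorem~\ref{th:easy} then gives $\overline{Reg}_T^{(j)} \leq \mathcal{O}\lr{\sum_{i:\Delta_i>0}\frac{\log T}{\Delta_i}}$ for each copy, and summing gives the stated bound (the factor of two between dueling gaps $\Delta_i$ and MAB gaps only affects constants absorbed into the $\mathcal{O}$).

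The main obstacle — and the point most worth spelling out carefully — is the verification that the induced MAB instance genuinely satisfies the stochastically constrained adversarial definition rather than merely the stochastic one: the opponent copy of \Alg{Tsallis-Inf} is an adaptive, internally-randomized process whose play $\OtherArm[t]$ depends on the entire shared history, so the losses seen by the first copy are neither i.i.d.\ nor oblivious. What saves us is that the self-bounding property \eqref{eq:mu-bound} is robust to exactly this kind of coupling, since it constrains only the \emph{expected} per-round gap and not the distribution of the absolute losses; this is precisely the motivation, highlighted in Section~\ref{sec:problem}, for stating Theorem~\ref{th:easy} in terms of the self-bounding constraint rather than the stochastic model. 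A minor secondary point is bookkeeping the factors of $2$ and the additive per-round constant $\frac{1+u_{\OtherArm[t]}}{2}$ through the regret decomposition, but these are routine and do not affect the order of the bound.
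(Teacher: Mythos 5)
Your proposal follows the same route the paper intends: decompose the dueling regret into the two sparring MAB regrets, observe that each copy of \Alg{Tsallis-Inf} faces a stochastically constrained adversary (because the gaps $\E[\ell_{t,i}-\ell_{t,j}]=\frac{u_j-u_i}{2}$ are fixed even though the baseline shifts with the opponent's play), and invoke Theorem~\ref{th:easy} with $C=0$ on each copy; the paper itself states only ``by applying Theorem~\ref{th:easy} we directly obtain'' and leaves exactly these details to the reader, so you have essentially written out the proof the paper omits. You also correctly identify and address the real subtlety—the opponent is adaptive and internally randomized, so the induced instance is not i.i.d.—and you explain correctly why the self-bounding framework handles this.

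One small arithmetic correction: the decomposition should read $\overline{Reg}_T = 2\bigl(\overline{Reg}_T^{(1)}+\overline{Reg}_T^{(2)}\bigr)$, not $\overline{Reg}_T\leq\overline{Reg}_T^{(1)}+\overline{Reg}_T^{(2)}$, since each per-round utility gap $u_{i^*}-u_{I_t}$ equals twice the corresponding MAB loss gap $\E[\ell_{t,I_t}-\ell_{t,i^*}\mid J_t]=\frac{u_{i^*}-u_{I_t}}{2}$. Combined with the factor-of-two relation $\Delta_i^{\mathrm{MAB}}=\frac{1}{2}\Delta_i^{\mathrm{duel}}$ that you already note, the net constant is $4$, which of course vanishes in the $\mathcal O(\cdot)$, so the conclusion is unaffected—but the inequality you wrote is in the wrong direction.
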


\section{Proofs}
\label{sec:proofs}
In this section, we first revise the general proof framework of OMD and provide a compact summary of how to modify it to obtain stochastic guarantees. 
Afterward, we provide proofs of Theorems~\ref{th:easy} and \ref{th:adv}.
A proof of Theorem~\ref{th:all} along with proofs of all the lemmas in this section are provided in the appendix.

\subsection{High-Level Overview of OMD Modification for Stochastic Analysis}
\label{sec:self-bound-intuition}

We follow the standard OMD analysis \citep[Chapter 28]{LS19bandit-book} and introduce the potential function $\Phi_t(-L) = \max_{w\in\Simplex}\{\left\langle w,-L\right\rangle - \Psi_t(w)\}$ to decompose the regret into {\it stability} and {\it penalty} terms.
\begin{equation}
\begin{aligned}
\overline{Reg}_T &= \E\left[\sum_{t=1}^T \left(\MyLoss[t]-\OptLoss[t]\right)\right] \\
&=\underbrace{\E\left[\sum_{t=1}^T\MyLoss[t]+ \Phi_t(- \CumLoss[t]) - \Phi_t(-\CumLoss[t-1]) \right]}_{stability}+\underbrace{\E\left[\sum_{t=1}^T - \Phi_t(- \CumLoss[t]) + \Phi_t(-\CumLoss[t-1]) -\OptLoss[t]\right]}_{penalty}.
\end{aligned}
\label{eq:regsplit}
\end{equation}
The OMD analysis bounds the {\it stability} and {\it penalty} terms separately.
For Tsallis-entropy regularizers, \citet{abernethy2015fighting} have proven the following bounds:
\begin{align*}
&stability \leq \sum_{t=1}^T \LearningRate[t]\sum_{\AnyArm=1}^K f(\E[\AnyProp[t]]),\\
&penalty \leq \sum_{t=1}^T (\LearningRate[t+1]^{-1}-\LearningRate[t]^{-1})\sum_{\AnyArm=1}^K g(\E[\AnyProp[t]]),
\end{align*}
where $f(x)$ and $g(x)$ are proportional to $x^{1-\alpha}$ and $x^{\alpha}$, respectively. 
Adversarial bounds that scale with $\sqrt{T}$ are obtained by applying $\sum_{i=1}^K f(\E[\AnyProp[t]]) \leq \max_{w\in\Simplex}\sum_{i=1}^K f(w)$, $\sum_{i=1}^K g(\E[\AnyProp[t]]) \leq \max_{w\in\Simplex}\sum_{i=1}^K g(w)$, and choosing an appropriate learning rate. \rev{In particular, for $\alpha=1/2$ we have $f(x) \propto \sqrt{x}$ and $g(x)  \propto \sqrt{x}$ and we use $\eta_t  \propto 1/\sqrt{t}$, for which $\eta_{t+1}^{-1} - \eta_t^{-1} = \Theta(1/\sqrt{t})$. This gives
\[
\overline{Reg}_T \leq \sum_{t=1}^T c\frac{1}{\sqrt{t}} \sum_{i=1}^K \sqrt{\E[\AnyProp[t]]} \leq \sum_{t=1}^T c\frac{1}{\sqrt{t}} \max_{z \in \Delta^{K-1}} \sum_{i=1}^K \sqrt{z_i} \leq  \sum_{t=1}^T c\frac{1}{\sqrt{t}} \sqrt{K} \leq 2c \sqrt{KT},
\]
where $c$ is a small constant and we replace $\E[\AnyProp[t]]$ with $z_i$ in the maximization.
}

The main insight of the paper is that the same framework can be used to obtain logarithmic bounds in the stochastic case.
\rev{ 
The key novelty is that if we constrain the maximization of $\E[\AnyProp[t]]$ by the self-bounding property of the regret \eqref{eq:mu-bound}, the space of solutions excludes the worst-case scenario, where the regret grows with the square root of the time horizon. For simplicity, we first explain the approach with $C=0$. By the self-bounding property \eqref{eq:mu-bound}, we then have $\Reg\geq \sum_{t=1}^T \sum_{i\neq i^*} \Delta_i \E[\AnyProp[t]] = \sum_{t=1}^T \sum_{i} \Delta_i \E[\AnyProp[t]]$ (since $\Delta_{i^*}=0$ by definition), which we can use to write
\begin{equation}
\Reg \leq 2\Reg - \sum_{t=1}^T \sum_{i} \Delta_i \E[\AnyProp[t]].
\label{eq:bound-intuition}
\end{equation}
The negative contributions $-\Delta_i \E[\AnyProp[t]]$ are used to achieve better control of the growth of $\E[\AnyProp[t]]$, but they are only helpful for $i$ with $\Delta_i > 0$, i.e., only for $i\neq i^*$. Therefore, we derive refined} bounds for the stability and penalty terms:
\begin{align*}
&stability \leq \sum_{t=1}^T \LearningRate[t]\sum_{\AnyArm\neq\BestArmSto} \tilde{f}(\E[\AnyProp[t]]),\\
&penalty \leq \sum_{t=1}^T (\LearningRate[t+1]^{-1}-\LearningRate[t]^{-1})\sum_{\AnyArm\neq \BestArmSto} g(\E[\AnyProp[t]]),
\end{align*}
\rev{where the summation excludes the best arm $i^*$, which has no negative contribution in \eqref{eq:bound-intuition}. The cost of excluding the best arm is an addition of a linear term to $f: \tilde{f}(x) = f(x)+c'x \leq (1+c')f(x)$, where $c'$ is a small constant. In particular, for $\alpha = \frac{1}{2}$ and learning rate $\eta_t  \propto 1/\sqrt{t}$ we have
\begin{align*}
\Reg &\leq 2\Reg - \sum_{t=1}^T \sum_{i\neq i^*} \Delta_i\E[\AnyProp[t]]\\
&\leq \sum_{t=1}^T \sum_{i\neq i^*} \lr{2c_1\frac{1}{\sqrt t}\sqrt{\E[\AnyProp[t]]} - \Delta_i \E[\AnyProp[t]]}\\
&\leq \sum_{t=1}^T \sum_{i\neq i^*} \max_z \lr{2c_1\frac{1}{\sqrt t}\sqrt{z}- \Delta_i z}\\
&\leq \sum_{t=1}^T \sum_{i\neq i^*} \frac{c_2}{\Delta_i t}\\ 
&= O\lr{\sum_{i\neq i^*} \frac{\log T}{\Delta_i}},
\end{align*}
where $c_1$ and $c_2$ are small constants and in the second line we used the refined stability and penalty bounds to bound $2\Reg$. The negative contribution is exploited in the maximization in the third line, which is now done coordinate-wise and the constraint that $w_t$ is a probability distribution is dropped.

We assume uniqueness of the zero-entry in $\Delta$, because currently we are only able to exclude one arm from the summation in the refined bound on stability. Had there been multiple arms with $\Delta_i=0$, they would have no negative contributions to control $\E[\AnyProp[t]]$. The challenge in excluding more than one arm from the summation is explained in Lemma~\ref{lem:stability}, where we derive the refined bound.

In the more general analysis with $C>0$, we introduce a parameter $\lambda$ and write $\Reg\leq(1+\lambda)\Reg - \lambda\lr{\sum_{t=1}^T\sum_{i\neq i^*} \Delta_i\E[\AnyProp[t]] - C}$. We use $\lambda$ for optimizing the dependence on $C$. The parameter $\lambda$ can also be seen as a Lagrange multiplier in a constrained optimization problem of maximizing the regret bound (stability bound + penalty bound) under the self-bounding constraint that (stability bound + penalty bound) $\geq \sum_{t=1}^T \sum_{i\neq i^*} \Delta_i \E[\AnyProp[t]]-C$.
}

\subsection{Key Lemmas}

The proofs of Theorems~\ref{th:easy}, \ref{th:adv}, and \ref{th:all} are based on the following two lemmas that bound the {\it stability} and {\it penalty} terms. 
The proofs of the lemmas are provided in Appendix~\ref{App:lemma-proofs}.
\rev{
\begin{lemma}
\label{lem:stability}
For a positive learning rate, the instantaneous \emph{stability} of \Alg{$\alpha$-Tsallis-Inf} satisfies at any time $t$ 
\begin{align*}
    \E\left[\MyLoss[t] +\Phi_{t}(-\CumLoss[t])-\Phi_{t}(-\CumLoss[t-1])\right] \leq \begin{cases}
    \min\left\{\sum_{\AnyArm=1}^K\frac{\LearningRate[t]\xi_i}{2}\E\left[\AnyProp[t]\right]^{1-\alpha},1\right\},&
    \mbox{if 1.}\\
    \frac{\eta_t^2}{2}+\sum_{i=1}^K\frac{\eta_t}{2}\E[\AnyProp[t]]^\frac{1}{2}(1-\E[\AnyProp[t]]),&
    \mbox{if 2.}\\
    \frac{7\eta_t^2}{8}K+\sum_{i=1}^K\frac{\eta_t}{8}\E[\AnyProp[t]]^\frac{1}{2}(1-\E[\AnyProp[t]]),&
    \mbox{if 3.}\\
    \sum_{\AnyArm\neq j}\lr{\frac{\LearningRate[t]\xi_i}{2}\E\left[\AnyProp[t]\right]^{1-\alpha} + \frac{\LearningRate[t](\xi_i+2\xi_j)}{2}\E\left[\AnyProp[t]\right]},&
    \mbox{if 4.}\,,
    \end{cases}
\end{align*}
where 
\begin{enumerate}
    \item $\hat L_t$ is based on IW estimators. The inequality holds for any $\eta_t>0$ and $\alpha\in[0,1]$.
    \item $\hat L_t$ is based on IW estimators,  $1 \geq \eta_t > 0$, and $\smash{\alpha=\frac{1}{2}}$.
    \item $\hat L_t$ is based on RV estimators,  $1 \geq \eta_t > 0$, and $\smash{\alpha=\frac{1}{2}}$.
    \item $\hat L_t$ is based on IW estimators and $\smash{\eta_t\xi_i\leq\frac{1}{4}}$ for all $i$. The inequality holds for any $j$ and $\alpha\in[0,1]$.
\end{enumerate}
\end{lemma}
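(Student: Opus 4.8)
The plan is to bound the instantaneous stability term by exploiting the variational characterization $\Phi_t(-\hat L_{t-1}) = \langle w_t, -\hat L_{t-1}\rangle - \Psi_t(w_t)$ together with a second-order Taylor expansion of $\Phi_t$. Writing $\hat L_t = \hat L_{t-1} + \hat\ell_t$, the difference $\Phi_t(-\hat L_t) - \Phi_t(-\hat L_{t-1})$ equals $\langle \nabla\Phi_t(-\hat L_{t-1}), -\hat\ell_t\rangle + \frac{1}{2}\langle \hat\ell_t, \nabla^2\Phi_t(-\tilde L)\hat\ell_t\rangle$ for some point $\tilde L$ on the segment between $\hat L_{t-1}$ and $\hat L_t$, by Taylor's theorem with Lagrange remainder. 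Since $\nabla\Phi_t(-\hat L_{t-1}) = w_t$, the first-order term combines with $\langle w_t, \hat\ell_t\rangle$ (which equals $\ell_{t,I_t}$ in expectation over $I_t\sim w_t$ for IW estimators), and so, after taking conditional expectation over $I_t$, the whole stability term reduces to the expected Hessian quadratic form $\frac{1}{2}\E[\langle \hat\ell_t, \nabla^2\Phi_t(-\tilde L)\hat\ell_t\rangle]$. The key point is that $\nabla^2\Phi_t = (\nabla^2\Psi_t)^{-1}$ restricted to the tangent space of the simplex (a standard fact for Legendre-type conjugates), and for the $\alpha$-Tsallis regularizer $\nabla^2\Psi_t$ is diagonal with entries $\frac{1}{\eta_t(1-\alpha)\xi_i}(1-\alpha)w_i^{\alpha-2} = \frac{w_i^{\alpha-2}}{\eta_t\xi_i}$, so the inverse-Hessian is essentially diagonal with entries $\eta_t\xi_i w_i^{2-\alpha}$; one needs the standard argument (e.g.\ as in \citeauthor{abernethy2015fighting}) that dropping the projection onto the simplex tangent space only loses a constant and that the relevant point $\tilde L$ can be replaced by $\hat L_{t-1}$ because the loss estimator only increases coordinates and $\Phi_t$ is such that the Hessian at $-\tilde L$ is dominated coordinatewise by the Hessian at $-\hat L_{t-1}$, i.e.\ $w_{t,i}$ only appears.

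Once stability is reduced to $\frac12\E[\sum_i \eta_t\xi_i w_{t,i}^{2-\alpha}\hat\ell_{t,i}^2]$, I would plug in the estimator variances. For IW estimators, $\E[\hat\ell_{t,i}^2 \mid \mathcal F_{t-1}] = \frac{\ell_{t,i}^2}{w_{t,i}} \leq \frac{1}{w_{t,i}}$, giving the bound $\frac{\eta_t}{2}\sum_i \xi_i w_{t,i}^{1-\alpha}$, which after taking outer expectation and applying Jensen's inequality ($x\mapsto x^{1-\alpha}$ concave for $\alpha\in[0,1]$) yields $\sum_i \frac{\eta_t\xi_i}{2}\E[w_{t,i}]^{1-\alpha}$; the alternative bound of $1$ comes from the trivial observation that the stability term is at most $\E[\ell_{t,I_t}] \leq 1$ since $\Phi_t(-\hat L_t) \leq \Phi_t(-\hat L_{t-1})$ when $\hat\ell_t \geq 0$ (losses only grow, potential is nonincreasing in each coordinate-decrease). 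That establishes case 1. For case 2 ($\alpha=1/2$, $\eta_t \leq 1$, IW), I would sharpen the bound: instead of $\ell_{t,i}^2 \leq 1$ use that the quadratic-form estimate keeps the $-w_{t,i}^2$ correction from $\E[(\mathbf 1_{I_t=i}-w_{t,i})^2]$ visible, i.e.\ one gets a term proportional to $w_{t,i}^{1/2}(1-w_{t,i})$ plus a small $\eta_t^2$ overhead absorbing the truncation/boundary effects; the precise split $\frac{\eta_t^2}{2} + \sum_i\frac{\eta_t}{2}\E[w_{t,i}]^{1/2}(1-\E[w_{t,i}])$ comes from carefully tracking the Hessian remainder and again using concavity/Jensen. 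Case 3 is the same computation with RV estimators: here $\E[\hat\ell_{t,i}^2]$ is reduced by the $\mathbb B_t(i)$ shift — the second moment becomes roughly $\frac{(\ell_{t,i}-\mathbb B_t(i))^2}{w_{t,i}} + \mathbb B_t(i)^2$-type terms — and using $\mathbb B_t(i) = \frac12$ on the arms where variance reduction is active ($w_{t,i}\geq\eta_t^2$) cuts the leading coefficient from $\frac12$ to $\frac18$, at the cost of a larger $\frac{7\eta_t^2}{8}K$ overhead from the arms where $w_{t,i} < \eta_t^2$ (there are at most $K$ of them and each contributes $O(\eta_t^2)$), and one must also check the negativity constraint $\hat\ell_{t,i} \geq -\frac12\eta_t^{-2}$ is compatible with the Taylor bound staying valid (this is exactly the condition \citeauthor{ZL19} identified).

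For case 4 — the asymmetric-regularization bound used in Theorem~\ref{th:all} — I would redo the Taylor argument but now must handle the replacement of $\tilde L$ by $\hat L_{t-1}$ more carefully, because with asymmetric $\xi_i$ the potential's curvature in coordinate $i$ depends on all coordinates through the normalization, and one cannot simply decouple. The standard trick is to pick a reference arm $j$ and bound the Hessian quadratic form by splitting: the contribution of coordinate $i\neq j$ against itself gives $\frac{\eta_t\xi_i}{2}w_{t,i}^{2-\alpha}\hat\ell_{t,i}^2$, while the cross-terms and the $j$-coordinate are controlled using $\eta_t\xi_i \leq 1/4$ (the hypothesis of case 4), which keeps $w_{t,i}$ bounded away from the degenerate regime and makes the "second term dominates" estimate from the intuition section (Section~\ref{sec:intuition}) rigorous. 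This yields the extra linear term $\frac{\eta_t(\xi_i + 2\xi_j)}{2}\E[w_{t,i}]$ on top of $\frac{\eta_t\xi_i}{2}\E[w_{t,i}]^{1-\alpha}$, and summing over $i\neq j$ gives the claimed bound. I expect the \textbf{main obstacle} to be case 4 and, within the other cases, the justification that the Hessian remainder evaluated at the intermediate point $\tilde L$ is controlled by $w_t = \nabla\Phi_t(-\hat L_{t-1})$ rather than by the (potentially much larger, since losses only grow and hence weights only shrink) value at $-\hat L_t$; this monotonicity-of-curvature argument is the technical heart, and it is also precisely where the restriction to excluding only \emph{one} arm from the summation enters — if two arms both have $\Delta_i = 0$ there is no clean way to choose the reference arm $j$ so that the cross-term bookkeeping closes, which is the challenge the remark after Theorem~\ref{th:easy} alludes to and which I would flag explicitly in the proof.
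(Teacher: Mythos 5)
You have correctly identified the high-level skeleton (a second-order expansion of the potential and a bound on the resulting quadratic form using the inverse Hessian $\operatorname{diag}(\eta_t\xi_i w_{t,i}^{2-\alpha})$), and your argument for Part 1 — taking the raw quadratic form, using the IW second moment $\E[\hat\ell_{t,i}^2\mid\mathcal F_{t-1}]\le w_{t,i}^{-1}$, and applying Jensen — essentially reproduces the paper's Part-1 derivation. Your justification for the crude bound by $1$ (nonnegativity of IW estimators plus monotonicity of $\Phi_t$) is also the one the paper uses.

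However, your proposal misses the single device that makes Parts 2, 3, and 4 work: shifting the loss estimator by a scalar multiple of the all-ones vector. Because $\Phi_t(\cdot + x\mathbf 1_K)=\Phi_t(\cdot)+x$, the stability term can be rewritten for \emph{any} $x\in\R$ as $\langle w_t,\hat\ell_t-x\mathbf 1_K\rangle + \Phi_t(\nabla\Psi_t(w_t)-\hat\ell_t+x\mathbf 1_K)-\Phi_t(\nabla\Psi_t(w_t))$, and then bounded by the Bregman divergence $D_{\Psi_t^*}$ of $\Psi_t^*$ (unconstrained) around the shifted arguments, with the resulting quadratic form in $(\hat\ell_t-x\mathbf 1_K)$ rather than $\hat\ell_t$. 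The paper chooses $x=\ell_{t,I_t}$ in Parts 2–3 and $x=\ind[][I_t=j]\,\ell_{t,j}$ in Part 4. This choice is what makes the coordinate $I_t$ (resp.\ $j$, when it was played) contribute $(\hat\ell_{t,I_t}-\ell_{t,I_t})^2=\ell_{t,I_t}^2(1-w_{t,I_t})^2/w_{t,I_t}^2$, producing the crucial $(1-w_{t,i})$ damping factor in the bound, and what makes the other coordinates have $|\hat\ell_{t,i}-x|\le 1$ (so they contribute only a small linear term or an $\eta_t^2$-size residual). Without the shift, the quadratic form in the raw $\hat\ell_{t,i}$ only yields the Part-1 bound $\sum_i w_{t,i}^{1-\alpha}$; your ``second-order Taylor of $\Phi_t$ plus track the $-w_{t,i}^2$ correction'' description does not recover this, because the IW second moment is $\ell_{t,i}^2/w_{t,i}$ regardless — the $(1-w_{t,i})$ factor is produced by the shift, not by the variance of the indicator.

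Relatedly, your treatment of Part 4 conflates the role of the reference arm $j$ (``pick $j$ and do cross-term bookkeeping'') with its actual role: $j$ fixes the shift, and since the shift can be nonzero (when $I_t=j$), the shifted losses can be negative, which is precisely why the paper needs a separate lemma (its Lemma~\ref{lem:wrange}) to control how much the weights (and hence the Hessian eigenvalues) can grow under a unit decrease in the effective cumulative loss, under the hypothesis $\eta_t\xi_i\le\frac14$. You gesture at ``$\eta_t\xi_i\le 1/4$ keeps $w_{t,i}$ bounded away from the degenerate regime'' without this mechanism. Similarly for Parts 2–3, the paper does not carry out a generic Taylor argument at all: because $\alpha=\tfrac12$ admits an explicit closed form for $\Psi_t^*$, the paper writes the potential difference exactly and expands to third order (Lemma~\ref{lem:reduced variance stability}), yielding an extra $\frac{\eta_t^2}{2}w_{t,i}^2|\ell_{t,I_t}-\hat\ell_{t,i}|_+^3$ correction; your description of the residual as a generic ``truncation/boundary effect'' and of the RV second moment as ``$\frac{(\ell_{t,i}-\mathbb B_t(i))^2}{w_{t,i}}+\mathbb B_t(i)^2$'' is both imprecise and insufficient to obtain the stated constants $\frac{\eta_t}{8}$ and $\frac{7\eta_t^2}{8}K$. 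Finally, the reason only one arm can be excluded is not ``cross-term bookkeeping'': it is that the shift $x$ is a single scalar, so only the contribution of one arm's played loss can be cancelled in the quadratic form.
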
}
The first part of the Lemma is due to \citet{abernethy2015fighting}.
The remaining parts are non-trivial refinements that are crucial for our analysis, as outlined in the previous section. \rev{The first inequality is used in the proof of Theorem~\ref{th:adv}, the second and third inequalities are used for the two results in Theorem~\ref{th:easy}, and the last inequality is used in the proof of Theorem~\ref{th:all}. In the proof of Theorem~\ref{th:easy} we use $\E[\AnyProp[t]]^\frac{1}{2}(1-\E[\AnyProp[t]]) \leq \E[\AnyProp[t]]^\frac{1}{2}$ for $i\neq i^*$ and  $\E[\BestPropSto[t]]^\frac{1}{2}(1-\E[\BestPropSto[t]) \leq (1-\E[\BestPropSto[t]]) = \sum_{i\neq i^*} \E[\AnyProp[t]]$ for $i^*$. This eliminates $\E[w_{t,i^*}]$ from the regret bound and allows to exploit the self-bounding property. The approach only allows to eliminate one arm from the regret bound, which is the reason we rely on the assumption of uniqueness of the best arm.}

\begin{lemma}
\label{lem:penalty}
For any $\alpha \in [0,1]$ and any unbiased loss estimators the \emph{penalty} term of \Alg{$\alpha$-Tsallis-Inf} satisfies:
\begin{enumerate}
\item For the symmetric regularizer and a non-increasing sequence of positive learning rates $\eta_1,\eta_2,\dots$
\begin{align*}
\E\left[\sum_{t=1}^T \lr{\Phi_t(-\CumLoss[t-1])-\Phi_t(-\CumLoss[t]) -\OptLoss[t]}\right] \leq \frac{(K^{1-\alpha}-1)(1-T^{-\alpha})}{(1-\alpha)\alpha \LearningRate[T]} + 1.
\end{align*}
\item For an arbitrary regularizer, a non-increasing sequence of positive learning rates $\eta_1,\eta_2,\dots$, and any $x\in [1,\infty]$  
\end{enumerate}
\vspace{-0.3cm}
\begin{align*}
\hspace{0.4cm}&\E\left[\sum_{t=1}^T \lr{\Phi_t(-\CumLoss[t-1])-\Phi_t(-\CumLoss[t]) -\OptLoss[t]}\right] \\
&\leq \frac{1-T^{-\alpha x}}{\alpha}\sum_{\AnyArm\neq\BestArm}\lr{\frac{\E[\AnyProp[1]]^\alpha-\alpha\E[\AnyProp[1]]}{\LearningRate[1]\xi_i(1-\alpha)}+\sum_{t=2}^T\lr{\frac{1}{\LearningRate[t]}-\frac{1}{\LearningRate[t-1]}}\frac{\E[\AnyProp[t]]^\alpha-\alpha\E[\AnyProp[t]]}{\xi_i(1-\alpha)}} + T^{1-x}.
\end{align*}
\end{lemma}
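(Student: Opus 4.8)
The final statement to prove is Lemma~\ref{lem:penalty}, which bounds the \emph{penalty} term $\E\left[\sum_{t=1}^T \lr{\Phi_t(-\CumLoss[t-1])-\Phi_t(-\CumLoss[t]) -\OptLoss[t]}\right]$ in two cases: for symmetric regularizers with non-increasing learning rates, and for arbitrary regularizers with a free parameter $x\in[1,\infty]$.

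\textbf{Plan of proof.} The plan is to unfold the telescoping structure hidden in the penalty term. The key observation is that $\Phi_t$ differs from $\Phi_{t-1}$ only through the learning rate $\eta_t$, and since $\Psi_t = \frac{1}{\eta_t}\Psi$, we have $\Phi_t(-L) = \max_{w\in\Simplex}\{\langle w,-L\rangle - \frac{1}{\eta_t}\Psi(w)\}$. First I would add and subtract intermediate terms to rewrite
\[
\sum_{t=1}^T \lr{\Phi_t(-\CumLoss[t-1])-\Phi_t(-\CumLoss[t])} = \sum_{t=1}^T\lr{\Phi_t(-\CumLoss[t-1]) - \Phi_{t-1}(-\CumLoss[t-1])} + \Phi_1(-\CumLoss[0]) - \Phi_T(-\CumLoss[T]),
\]
using a telescoping identity (with $\Phi_0$ or $\Phi_{T+1}$ handled as boundary terms). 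The term $\Phi_1(-\hat L_0) = \Phi_1(0) = -\frac{1}{\eta_1}\min_w \Psi(w)$, and $-\Phi_T(-\hat L_T) \leq \langle e_{i^*}, \hat L_T\rangle + \frac{1}{\eta_T}\Psi(e_{i^*})$ (or a limit thereof, since $\Psi$ may blow up at vertices, which is where the $+1$ and $T^{1-x}$ slack terms come from). Combined with $-\E[\sum_t \OptLoss[t]] = -\E[\langle e_{i^*}, \hat L_T\rangle]$ by unbiasedness, the linear-in-loss contributions cancel, leaving a bound purely in terms of the regularizer evaluated at a near-vertex point and the sum of $\lr{\Phi_t - \Phi_{t-1}}$ differences.

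\textbf{Handling the $\Phi_t - \Phi_{t-1}$ differences.} Since both $\Phi_t$ and $\Phi_{t-1}$ are conjugates of scalings of the same $\Psi$, and $\eta_t \leq \eta_{t-1}$ (non-increasing), I would use the fact that $\Phi_t(-L) - \Phi_{t-1}(-L) = \max_w\{\langle w,-L\rangle - \frac{1}{\eta_t}\Psi(w)\} - \max_w\{\langle w,-L\rangle - \frac{1}{\eta_{t-1}}\Psi(w)\}$. Evaluating the first max at $w_t$ (the actual OMD iterate, which attains it) and lower-bounding the second max at $w_t$ as well, one gets $\Phi_t(-\CumLoss[t-1]) - \Phi_{t-1}(-\CumLoss[t-1]) \leq \lr{\frac{1}{\eta_{t-1}} - \frac{1}{\eta_t}}\Psi(w_t) = -\lr{\frac{1}{\eta_t} - \frac{1}{\eta_{t-1}}}\Psi(w_t)$. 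Now plug in the explicit form $\Psi(w) = -\sum_i \frac{w_i^\alpha - \alpha w_i}{\alpha(1-\alpha)\xi_i}$ and take expectations; using concavity of $x\mapsto x^\alpha$ and Jensen to pull the expectation inside gives $\E[\Psi(w_t)] \geq -\sum_i \frac{\E[w_{t,i}]^\alpha - \alpha \E[w_{t,i}]}{\alpha(1-\alpha)\xi_i}$ (with the sign working out so the bound goes the right way). For part 1, the symmetric case, I would then further bound $-\Psi(w_t) \leq -\min_w \Psi(w) + (\text{something})$ or more directly bound $\sum_i \E[w_{t,i}]^\alpha \leq K^{1-\alpha}$ by Jensen/power-mean over the simplex, collapsing the whole sum and leaving $\frac{(K^{1-\alpha}-1)(1-T^{-\alpha})}{(1-\alpha)\alpha\eta_T} + 1$ after telescoping $\sum_t \lr{\frac{1}{\eta_t}-\frac{1}{\eta_{t-1}}} = \frac{1}{\eta_T} - \frac{1}{\eta_1}$ and bounding $\frac{1}{\eta_1}$ appropriately (this is where $1-T^{-\alpha}$ comes from via the learning-rate form, though here the $\eta$ is left general so it is really $\frac{1}{\eta_T}$ minus a lower-order term). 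For part 2, the arbitrary regularizer, I would keep the $\E[w_{t,i}]^\alpha - \alpha\E[w_{t,i}]$ terms intact, exclude the best arm $i^*$ from the sum (its contribution at the comparator vertex $e_{i^*}$ is what cancels, roughly $\Psi(e_{i^*})$ only charges the $i\neq i^*$ coordinates), and carry the $\frac{1}{\eta_t}-\frac{1}{\eta_{t-1}}$ weights through without collapsing them — yielding exactly the stated per-arm sum, with the $\frac{1-T^{-\alpha x}}{\alpha}$ prefactor and $T^{1-x}$ remainder arising from the care needed in approaching the vertex $e_{i^*}$ where $\Psi$ is only finite in the limit (one approaches it along $(1-\epsilon)e_{i^*} + \frac{\epsilon}{K-1}\sum_{j\neq i^*}e_j$ with $\epsilon \sim T^{-x}$, balancing the two error contributions).

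\textbf{Main obstacle.} The routine part is the telescoping and the conjugate-duality manipulation. The delicate part I expect to be the boundary handling at the vertex $e_{i^*}$: since $\Psi$ has unbounded gradient (and, for $\alpha$ near $0$, unbounded value) at the simplex vertices, the inequality $-\Phi_T(-\hat L_T) \leq \langle e_{i^*}, \hat L_T\rangle + \frac{1}{\eta_T}\Psi(e_{i^*})$ must be replaced by an evaluation at a perturbed interior point, and the perturbation parameter has to be chosen to produce precisely the claimed $T^{1-x}$ (resp.\ $+1$) slack while not inflating the main term — this is the step where the free parameter $x$ enters and where sign bookkeeping with the $\alpha w_i$ linear correction in $\Psi$ (added precisely to make $\Psi(e_i)$ finite, as the excerpt notes) is easy to get wrong. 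A secondary subtlety is ensuring the Jensen step $\E[\Psi(w_t)] \geq \Psi(\E[w_t])$ — valid since $\Psi$ is \emph{convex}, so the inequality is genuinely $\E[\Psi(w_t)] \geq \Psi(\E[w_t])$ the wrong way for a direct bound — actually one wants to bound $-\E[\Psi(w_t)]$ from above, i.e.\ use $-\E[\Psi(w_t)] \leq -\Psi(\E[w_t])$, which requires checking that the contribution enters with a sign making convexity (not concavity) the relevant property; tracking this through the scaling $\frac{1}{\eta_t}-\frac{1}{\eta_{t-1}} \geq 0$ is the place to be careful.
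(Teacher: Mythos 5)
Your telescoping decomposition and the conjugate-duality manipulation are essentially the paper's approach (the paper's Lemma~\ref{lem:tpot-basic} lands at exactly the formula $-\frac{\Psi(\Prop[1])}{\eta_1}-\sum_{t=2}^T(\eta_t^{-1}-\eta_{t-1}^{-1})\Psi(\Prop[t])+\frac{\Psi(u)}{\eta_T}+\langle u-\mathbf e_{\BestArm},L_T\rangle$ that your index-shifted telescoping also produces), your Jensen step is in the correct direction, and your Part~1 plan matches the paper's, which uses the near-vertex $u$ with $u_{\BestArm}=1-T^{-1}$ following the trick of \citet{agarwal2016corralling}. So Part~1 you would carry through.

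There is, however, a genuine gap in your Part~2 sketch, and it is exactly at the spot you yourself flag as delicate. You claim the optimal arm drops out of the per-arm sum because "$\Psi(\mathbf e_{\BestArm})$ only charges the $i\neq\BestArm$ coordinates" — but this is backwards: $\Psi(\mathbf e_{\BestArm}) = -\frac{1}{\alpha\xi_{\BestArm}}$ is supported \emph{only} on coordinate $\BestArm$, with every $i\neq\BestArm$ contributing zero. More importantly, with a \emph{fixed} near-vertex comparator $v=(1-\epsilon)\mathbf e_{\BestArm}+\frac{\epsilon}{K-1}\sum_{j\neq\BestArm}\mathbf e_j$ there is no clean cancellation of the $\BestArm$ term of $\Psi(v)-\Psi(w_t)$: when $w_{t,\BestArm}>1-\epsilon$ (which is the typical regime), $(w_{t,\BestArm}^\alpha-\alpha w_{t,\BestArm})-((1-\epsilon)^\alpha-\alpha(1-\epsilon))>0$ and you get a leftover positive contribution that does not telescope. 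The paper's actual mechanism is different and nontrivial: it takes $v$ to be the \emph{minimizer} of $\Psi$ over the shrunk simplex $\{(1-T^{-x})\mathbf e_{\BestArm}+T^{-x}w:w\in\Simplex\}$, introduces the \emph{time-dependent} anchor $v_t=(1-T^{-x})\mathbf e_{\BestArm}+T^{-x}w_t$, and uses $\Psi(v)\leq\Psi(v_t)$. Only after this substitution does the $\BestArm$ coordinate drop out cleanly, because $v_{t,\BestArm}\geq w_{t,\BestArm}$ while $w\mapsto w^\alpha-\alpha w$ is increasing on $[0,1]$, so that coordinate of $\Psi(v_t)-\Psi(w_t)$ is non-positive; the $i\neq\BestArm$ coordinates satisfy $v_{t,i}=T^{-x}w_{t,i}$ and $T^{-x}\leq T^{-\alpha x}$, which is where the $\frac{1-T^{-\alpha x}}{\alpha}$ prefactor comes from. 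This anchored-comparator trick is the new ingredient in Part~2 and is not contained in your sketch; without it the per-arm sum over $i\neq\BestArm$ with coefficients $\E[w_{t,i}]^\alpha-\alpha\E[w_{t,i}]$ does not emerge.
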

The first part of the Lemma is a straightforward improvement of the penalty bound in \citet{abernethy2015fighting} with the techniques from \citet{agarwal2016corralling}.
The second part is again a crucial refinement. 
\rev{It is obtained by exploiting the negative contribution of $\Psi_T(\mathbf{e}_{i^*})$ in an intermediate step of the proof, 
which \citet{abernethy2015fighting} trivially bounded by $0$. 
}

\subsection{Proofs of Theorems~\ref{th:easy} and \ref{th:adv}}

Now we are ready to present proofs of the main theorems.

\begin{proof}{\bf  of Theorem~\ref{th:easy}}
We provide a proof of regret bounds for \Alg{Tsallis-INF} with RV estimators.
The analysis of \Alg{Tsallis-INF} with IW estimators in the adversarial case is analogous to the proof of Theorem~\ref{th:adv} and under the self-bounding constraint \eqref{eq:mu-bound}, it is analogous to the analysis of RV estimators with the bound in Part~3 of Lemma~\ref{lem:stability} replaced by the bound in Part~2. Therefore, the proofs of both results for the IW estimators are omitted.

To analyze the regret, we start by bounding the {\it stability} term.
We use Lemma~\ref{lem:stability}. For $t < 16$ we have $\eta_t>1$ and the RV estimators are equivalent to IW estimators.
Thus, we can apply the first part of the lemma to bound the instantaneous stability by $1$.
For $t\geq 16$, we use the third part of the lemma.
\begin{align}
stability &= \E\left[\sum_{t=1}^T\MyLoss[t]+ \Phi_t(- \CumLoss[t]) - \Phi_t(-\CumLoss[t-1]) \right]\notag\\
&\leq 15+\sum_{t=16}^T\left(\frac{7\eta_t^2}{8}K+\sum_{i=1}^K\frac{\eta_t}{8}\sqrt{\E[\AnyProp[t]]}(1-\E[\AnyProp[t]])\right)\notag\\
&\leq 15+14K\log(T)+\sum_{t=16}^T\sum_{i=1}^K\frac{\sqrt{\E[\AnyProp[t]]}(1-\E[\AnyProp[t]])}{2\sqrt{t}}\,.\label{eq:stability departure}
\end{align}

\paragraph{Adversarial bound.}
We  bound $\sum_{i=1}^K\sqrt{\E[\AnyProp[t]]}(1-\E[\AnyProp[t]])\leq \sum_{i=1}^K\sqrt{\E[\AnyProp[t]]} \leq \sqrt{K}$, where the last step holds by simple maximization. Then we have
\begin{align*}
stability &\leq 15+14K\log(T)+\sum_{t=16}^T\frac{\sqrt{K}}{2\sqrt{t}}\\
&\leq 15+14K\log(T)+\sqrt{KT}\,.
\end{align*}
For the \emph{penalty} term, we use the first part of Lemma~\ref{lem:penalty} to obtain
\begin{align*}
    penalty \leq \sqrt{KT} +1\,.
\end{align*}
Combining \emph{stability} and \emph{penalty} completes the proof.

\paragraph{Bound under the self-bounding constraint \eqref{eq:mu-bound}.}
We continue bounding the \emph{stability} up from equation~\eqref{eq:stability departure}.
For $i\neq i^*$, we use $\sqrt{\E[\AnyProp[t]]}(1-\E[\AnyProp[t]])\leq \sqrt{\E[\AnyProp[t]]}$. For $i^*$, we use $\sqrt{\E[\BestPropSto[t]]}(1-\E[\BestPropSto[t]]) \leq (1-\E[\BestPropSto[t]]) = \sum_{i\neq i^*}\E[\AnyProp[t]]$. For a constant $0 < \lambda \leq 1$ that will be specified at a later stage of the proof and $t \leq T_0 = \left\lceil(\frac{1}{\lambda\OptGap})^2\right\rceil$, we further bound the last expression as $\sum_{i\neq i^*}\E[\AnyProp[t]] \leq 1$. Altogether, this gives
\begin{align*}
    \sum_{t=16}^T\sum_{i=1}^K\frac{\sqrt{\E[\AnyProp[t]]}(1-\E[\AnyProp[t]])}{2\sqrt{t}}
    &\leq \sum_{t=16}^{T_0}\frac{1}{2\sqrt{t}} + \sum_{i\neq i^*}\left(\sum_{t=T_0+1}^{T}\frac{\E[\AnyProp[t]]}{2\sqrt{t}}+\sum_{t=16}^{T}\frac{\sqrt{\E[\AnyProp[t]]}}{2\sqrt{t}}\right)\\
    &\leq \sqrt{T_0} + \sum_{i\neq i^*}\left(\sum_{t=1}^{T}\frac{\sqrt{\E[\AnyProp[t]]}}{2\sqrt{t}}+\sum_{t=T_0+1}^{T}\frac{\E[\AnyProp[t]]}{2\sqrt{t}}\right)\,
\end{align*}
and
\begin{align*}
    stability \leq 15+14K\log(T)+\sqrt{T_0} + \sum_{i\neq i^*}\left(\sum_{t=1}^{T}\frac{\sqrt{\E[\AnyProp[t]]}}{2\sqrt{t}}+\sum_{t=T_0+1}^{T}\frac{\E[\AnyProp[t]]}{2\sqrt{t}}\right)\,.
\end{align*}
In order to bound the \emph{penalty} term, we use the second part of Lemma~\ref{lem:penalty} with $x=\infty$. At the end of the derivation we use Lemma~\ref{lem:sum}, by which $\sum_{t=2}^\infty\lr{\sqrt{t}- \sqrt{t-1} - \frac{1}{2\sqrt{t}}} \leq \frac{1}{4}$.
%Note that $2\sqrt{T} = 2+\int_{t=1}^T\frac{1}{\sqrt{t}}\,dt\leq 2+\sum_{t=1}^T\frac{1}{\sqrt{t}}$. 
\begin{align}
penalty &= \E\left[\sum_{t=1}^T - \Phi_t(- \CumLoss[t]) + \Phi_t(-\CumLoss[t-1]) -\OptLossSto[t] \right]\notag\\
&\leq 4\sum_{\AnyArm\neq\BestArmSto}\lr{\frac{\sqrt{\E[\AnyProp[1]]}-\frac{1}{2}\E[\AnyProp[1]]}{\LearningRate[1]}+\sum_{t=2}^T\lr{\frac{1}{\LearningRate[t]}-\frac{1}{\LearningRate[t-1]}}\left(\sqrt{\E[\AnyProp[t]]}-\frac{1}{2}\E[\AnyProp[t]]\right)}\notag\\
&= \sum_{\AnyArm\neq\BestArmSto}\lr{\lr{\sqrt{\E[\AnyProp[1]]}-\frac{1}{2}\E[\AnyProp[1]]}+\sum_{t=2}^T\lr{\sqrt{t}-\sqrt{t-1}}\lr{\sqrt{\E[\AnyProp[t]]}-\frac{1}{2}\E[\AnyProp[t]]}}\notag\\
&=\sum_{t=1}^T\lr{\sum_{\AnyArm\neq\BestArmSto}\frac{\sqrt{\E[\AnyProp[t]]}-\frac{1}{2}\E[\AnyProp[t]]}{2\sqrt{t}}}+\sum_{\AnyArm\neq\BestArmSto}\Bigg(\frac{\sqrt{\E[\AnyProp[1]]}-\frac{1}{2}\E[\AnyProp[1]]}{2}\notag\\ &\hspace{4cm}+\sum_{t=2}^T\lr{\sqrt{t}-\sqrt{t-1}-\frac{1}{2\sqrt{t}}}\lr{\sqrt{\E[\AnyProp[t]]}-\frac{1}{2}\E[\AnyProp[t]]}\Bigg)\notag\\
&\leq \sum_{t=1}^T\lr{\sum_{\AnyArm\neq\BestArmSto}\frac{\sqrt{\E[\AnyProp[t]]}-\frac{1}{2}\E[\AnyProp[t]]}{2\sqrt{t}}} +\lr{\frac{1}{2} +\sum_{t=2}^T\lr{\sqrt{t}-\sqrt{t-1}-\frac{1}{2\sqrt{t}}}}\sqrt{K}\notag\\
&\leq \sum_{t=1}^T\lr{\sum_{\AnyArm\neq\BestArmSto}\frac{\sqrt{\E[\AnyProp[t]]}-\frac{1}{2}\E[\AnyProp[t]]}{2\sqrt{t}}}+\frac{3}{4}\sqrt{K}. \notag
\end{align}
Combining \emph{penalty} and \emph{stability} gives the bound
\begin{align*}
    \overline{Reg}_T\leq \sum_{i\neq i^*}\left(\sum_{t=1}^T \frac{\sqrt{\E[\AnyProp[t]]}}{\sqrt{t}} + \sum_{t=T_0+1}^T\frac{\E[\AnyProp[t]]}{4\sqrt{t}}\right) +\sqrt{T_0}+ \underbrace{\frac{3}{4}\sqrt{K}+15+14K\log(T)}_{=:M}\,.
\end{align*}
%By inserting the bounds for {\it stability} and {\it penalty} into equation~\eqref{eq:regsplit} and using the self-bounding property \eqref{eq:mu-bound} in the second line in the derivation below we obtain
By using the self-bounding property \eqref{eq:mu-bound} and $(1+\lambda)\leq 2$ we obtain
\begin{align*}
& \overline{Reg}_T \leq \overline{Reg}_T +\lambda\lr{\overline{Reg}_T -\sum_{t=1}^T\sum_{\AnyArm\neq\BestArmSto}\AnyGap\E[\AnyProp[t]] + C}\\
&\quad  \leq \sum_{\AnyArm\neq\BestArmSto}\left(\sum_{t=1}^T \frac{2\sqrt{\E[\AnyProp[t]]}}{\sqrt{t}}+\sum_{t=T_0+1}^T\frac{\E[\AnyProp[t]]}{2\sqrt{t}}\right)  +2\sqrt{T_0}+ 2 M -\lambda\sum_{t=1}^T\sum_{\AnyArm\neq\BestArmSto}\AnyGap\E[\AnyProp[t]] + \lambda C\\
&\quad = \sum_{\AnyArm\neq\BestArmSto} \lr{\sum_{t=1}^{T_0} \lr{\frac{2\sqrt{\E[\AnyProp[t]]}}{\sqrt{t}} - \lambda\AnyGap\E[\AnyProp[t]]} + \sum_{t=T_0+1}^T \lr{\frac{2\sqrt{\E[\AnyProp[t]]}+ \frac{1}{2}\E[\AnyProp[t]]}{\sqrt{t}} - \lambda\AnyGap\E[\AnyProp[t]]}} \\
&\hspace{10cm}+2\sqrt{T_0}+ 2 M + \lambda C\\
&\quad \leq \sum_{\AnyArm\neq\BestArmSto}\lr{\sum_{t=1}^{T_0} \max_{z\geq 0} \lr{\frac{2\sqrt{z}}{\sqrt{t}} - \lambda\AnyGap z} + \sum_{t=T_0+1}^T \max_{z\geq 0}\lr{\frac{2\sqrt{z}+ \frac{1}{2}z}{\sqrt{t}} - \lambda\AnyGap z}} \\
&\hspace{10cm}
+2\sqrt{T_0}+ 2 M + \lambda C.
\end{align*}
Simple optimization shows that $\max_{z>0} 2\gamma\sqrt{z}-\beta z = \frac{\gamma^2}{\beta}$.
Thus, we have
\begin{align*}
\max_{z\geq 0}\frac{2\sqrt{z}}{\sqrt{t}}-\lambda\AnyGap z = \frac{1}{\lambda\Delta_i t}
\end{align*}
and
\begin{align*}
\max_{z\geq 0}\frac{2\sqrt{z}+\frac{1}{2}z}{\sqrt{t}}-\lambda\AnyGap z &= \frac{1}{(\lambda\AnyGap -\frac{1}{2\sqrt{t}})t}\\
& = \frac{1}{\lambda\Delta_i t} + \frac{1}{(\lambda\AnyGap -\frac{1}{2\sqrt{t}})t} - \frac{1}{\lambda\Delta_i t}\\
& = \frac{1}{\lambda\Delta_i t} + \frac{1}{2\lambda^2\AnyGap^2 t^\frac{3}{2}-\lambda\AnyGap t} \,.
\end{align*}
In order to bound the summation of the above terms, we use the following bound from Lemma~\ref{lem:t-sum} in the appendix:
\begin{align*}
\sum_{t=T_0+1}^T \frac{1}{bt^\frac{3}{2}-ct} \leq \frac{2}{b\sqrt{T_0}-c}\,.
\end{align*}
By definition of $T_0$ we have $\frac{1}{\lambda\Deltamin} \leq \sqrt{T_0} \leq \frac{1}{\lambda\Deltamin} + 1$ and 
\[
\frac{1}{\lambda^2\Delta_i^2 \sqrt{T_0} - \frac{1}{2}\lambda\Delta_i} 
= \frac{2}{\lambda\Delta_i(2\lambda\Delta_i \sqrt{T_0} - 1)}
\leq \frac{2}{\lambda\Delta_i\lr{2 \frac{\Delta_i}{\Deltamin} - 1}}
\leq \frac{2}{\lambda\Delta_i}.
\]
By plugging the calculations into the regret bound above we obtain:
\begin{align*}
\overline{Reg}_T &\leq \sum_{\AnyArm\neq\BestArmSto}\lr{\sum_{t=1}^T\frac{1}{\lambda\AnyGap t} +\sum_{t=T_0+1}^T\frac{1}{2\lambda^2\AnyGap^2 t^{\frac{3}{2}}-\lambda\AnyGap t}} +2\sqrt{T_0}+ 2M+\lambda C\\
&\leq \sum_{\AnyArm\neq\BestArmSto}\lr{\lr{\sum_{t=1}^T\frac{1}{\lambda\AnyGap t}} +\frac{1}{\lambda^2\Delta_i^2 \sqrt{T_0} - \frac{1}{2} \lambda\Delta_i}} +2\sqrt{T_0} + 2M + \lambda C\\
&\leq \sum_{i\neq i^*}\frac{\log(T)+3}{\lambda\Delta_i}+\frac{2}{\lambda\Deltamin} +2(M+1)+ \lambda C.
\end{align*}
Finally, choosing $\lambda = \min\left\{1, \sqrt{\lr{\sum_{i\neq i^*}\frac{\log(T)+3}{\Delta_i}+\frac{2}{\Deltamin}}\Big/C}\right\}$ completes the proof.

\end{proof}

\begin{proof}{\bf of Theorem~\ref{th:adv}}
We start from equation~\eqref{eq:regsplit}.
Since the regularization is symmetric, we have $\xi_i = 1$ for all $i$.
Using Lemma~\ref{lem:stability}, we bound the {\it stability} term as
\begin{align*}
stability &= \E\left[\sum_{t=1}^T\MyLoss[t]+ \Phi_t(- \CumLoss[t]) - \Phi_t(-\CumLoss[t-1]) \right]\leq \sum_{t=1}^T \sum_{\AnyArm=1}^K\frac{\LearningRate[t]}{2}\E\left[\AnyProp[t]\right]^{1-\alpha}\\
&\leq \left(\sum_{t=1}^T\frac{\LearningRate[t]}{2}\right)\max_{z\in\Simplex}\sum_{i=1}^K z_i^{1-\alpha} =  \left(\sum_{t=1}^T\sqrt{\frac{K^{1-2\alpha}-K^{-\alpha}}{1-\alpha}\frac{1-t^{-\alpha}}{\alpha t}}\right)\frac{K^{\alpha}}{2}\\
&\leq  \left(\sum_{t=1}^T\sqrt{\frac{1-K^{\alpha-1}}{1-\alpha}\frac{1-T^{-\alpha}}{\alpha t}}\right)\frac{\sqrt{K}}{2} \leq \sqrt{\frac{1-K^{\alpha-1}}{1-\alpha}\frac{1-T^{-\alpha}}{\alpha}KT}.
\end{align*}

The {\it penalty} is bounded according to Lemma~\ref{lem:penalty}
\begin{align*}
\end{align*}
\begin{align*}
penalty & = \E\left[\sum_{t=1}^T - \Phi_t(- \CumLoss[t]) + \Phi_t(-\CumLoss[t-1]) -\OptLoss[t] \right]\\
&\leq \frac{(K^{1-\alpha}-1)(1-T^{-\alpha})}{(1-\alpha)\alpha\LearningRate[T]} + 1 = \sqrt{\frac{1-K^{\alpha-1}}{1-\alpha}\frac{1-T^{-\alpha}}{\alpha}KT} +1.
\end{align*}

The proof is completed by noting that
the first factor is bounded by $\sqrt{\frac{1}{1-\alpha}}$ and monotonically increasing in $\alpha$ with the limit $\lim_{\alpha\rightarrow 1}\sqrt{\frac{1-K^{\alpha-1}}{1-\alpha}}=\sqrt{\log(K)}$ (details in Lemma~\ref{lem:loglimit} in the appendix).
By the same argument, the second factor is  bounded by $\sqrt{\frac{1}{\alpha}}$ and monotonically decreasing in $\alpha$ with the limit
$\lim_{\alpha\rightarrow 0}\sqrt{\frac{1-T^{-\alpha}}{\alpha}}= \sqrt{\log(T)}$.

\end{proof}

%\ys{I am not sure whether we need this comment here.} 
%In the proof of the final theorem, Theorem~\ref{th:all}, provided in the appendix we do not explicitly use a maximization $\max_w \frac{w^{\alpha}}{t^{1-\alpha}}+\frac{w^{1-\alpha}}{t^\alpha} - \Delta w$.
%Instead, we use the concavity of the functions $f(w)=w^{\alpha}$ and $f(w)= w^{1-\alpha}$ to bound these terms by the first order Taylor approximation $f(w) \leq f(w^*)+f'(w^*)(w-w*)$.
%Implicitly this is equivalent to taking a maximum with respect to some Lagrange multiplier $\lambda$.
%We do that to simplify the analysis.

\section{Experiments}
\label{sec:experiments}
We provide an empirical comparison of \Alg{Tsallis-Inf} \rev{with IW and with RV loss estimators} with the classical algorithms for stochastic bandits, \Alg{Ucb1} \citep[with parameter $\alpha=1.5$]{auer2002finite} and \Alg{Thompson Sampling} \citep{thompson1933likelihood}\footnote{Another leading stochastic algorithm, \Alg{KL-UCB} \citep{CGM+13}, has performed comparably to \Alg{Thompson Sampling} in our experiments and, therefore, is not reported in the figures.}, and the classical algorithm for adversarial bandits, \Alg{Exp3}, implemented for the losses \citep{bubeck2012regret}.
We also compare with the state-of-the-art algorithms for stochastic and adversarial bandits, \Alg{EXP3++} with parametrization proposed by \citet{seldin2017improved} and \Alg{Broad} \citep{wei2018more}.
The pseudo-regret is estimated by 100 repetitions of the corresponding experiments and two standard deviations of the \rev{empirical pseudo-regret, $\sum_{t=1}^T\Delta_{I_t}$, over the 100 repetitions} are depicted by the shaded areas on the plots. 
We always show the first $10000$ time steps on a linear plot and then the time steps from $10^4$ to $10^7$ on a separate log-log plot.

%The first two experiments illustrate the superiority of \Alg{Tsallis-Inf}. 
%In the first two experiments we use the same number of arms $K=8$, a single best arm, and the same gap $\Delta=0.125$ for all suboptimal arms. 
The first experiment, shown in Figures~\ref{fig:both} and \ref{fig:both-log}, is a standard stochastic MAB, where the mean rewards are $(1+\Delta)/2$ for the single optimal arm and $(1-\Delta)/2$ for all the suboptimal arms. The number of arms $K$ and the gaps $\Delta$ are varied as described in the figures.
Unsurprisingly, \Alg{Thompson Sampling} exhibits the lowest regret, but \Alg{Tsallis-Inf} \rev{with RV estimators follows closely behind and outperforms all other competitors by a large margin. \Alg{Tsallis-INF} with IW estimators} takes a confident \rev{third} place, while \Alg{UCB1}, \Alg{EXP3}, and \Alg{EXP3++} fall roughly in the same league. \Alg{Broad} suffers from extremely large constant factors and is out of question for practical applications.

\begin{figure}
    \centering
    \fontsize{0.1pt}{0.12pt}
    \def\svgwidth{\columnwidth}
    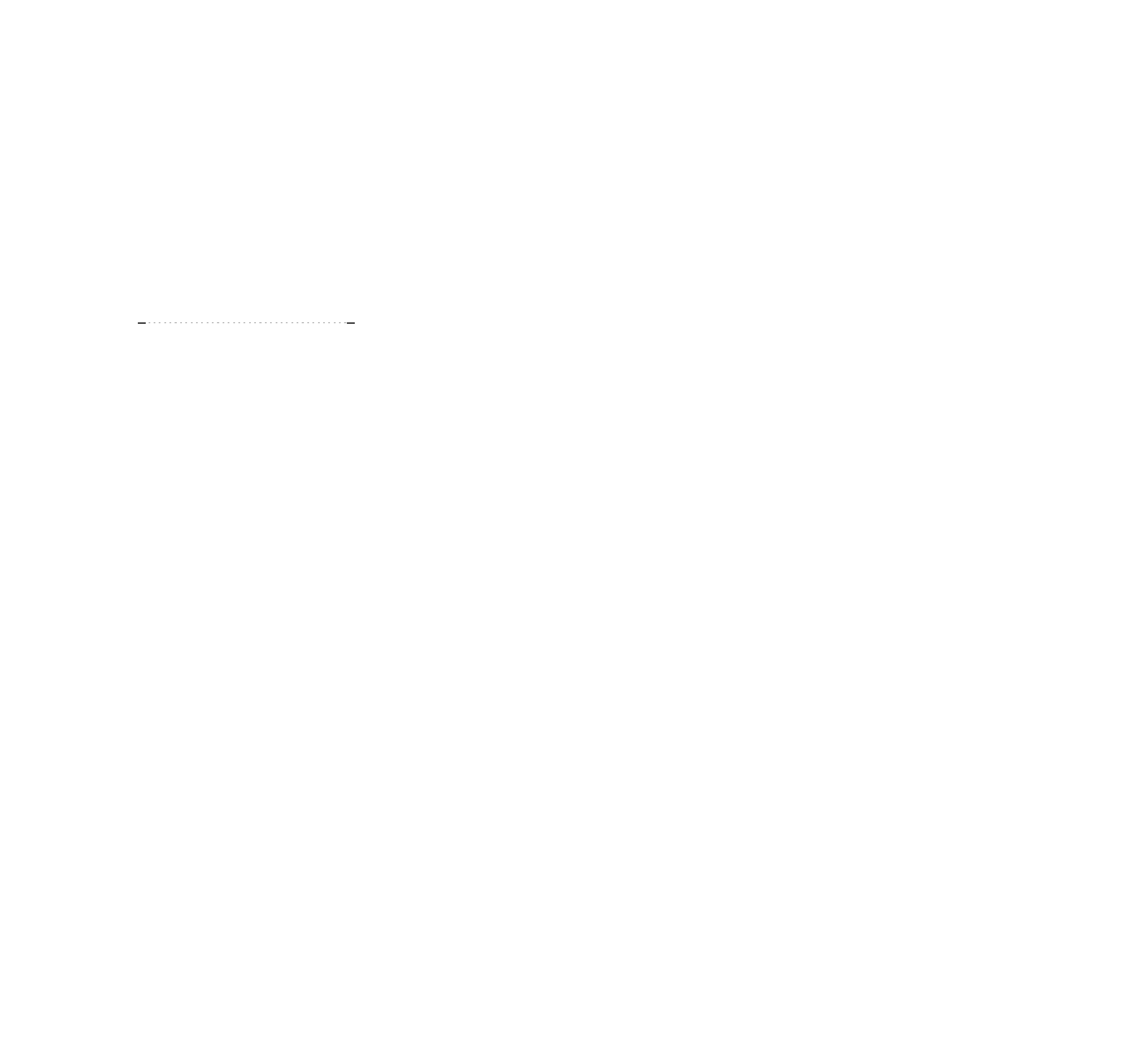
    \caption{Comparison of \Alg{Tsallis-Inf} \rev{with IW and with RV estimators} with \Alg{Thompson Sampling}, \Alg{Ucb1}, \Alg{Exp3}, \Alg{Exp3++}, and \Alg{Broad} in a stochastic environment with fixed mean losses of $\frac{1-\Delta}{2}$ for the optimal arm and $\frac{1+\Delta}{2}$ for all sub-optimal arms. The experiment is repeated for different number of arms $K$ and different gaps $\Delta$. The figure shows the first $10000$ time steps on a linear plot. The pseudo-regret is estimated by 100 repetitions and we depict 2 standard deviations of the empirical pseudo-regret by the shaded areas.}
    \label{fig:both}
\end{figure}

\begin{figure}
    \centering
    \fontsize{0.1pt}{0.12pt}
    \def\svgwidth{\columnwidth}
    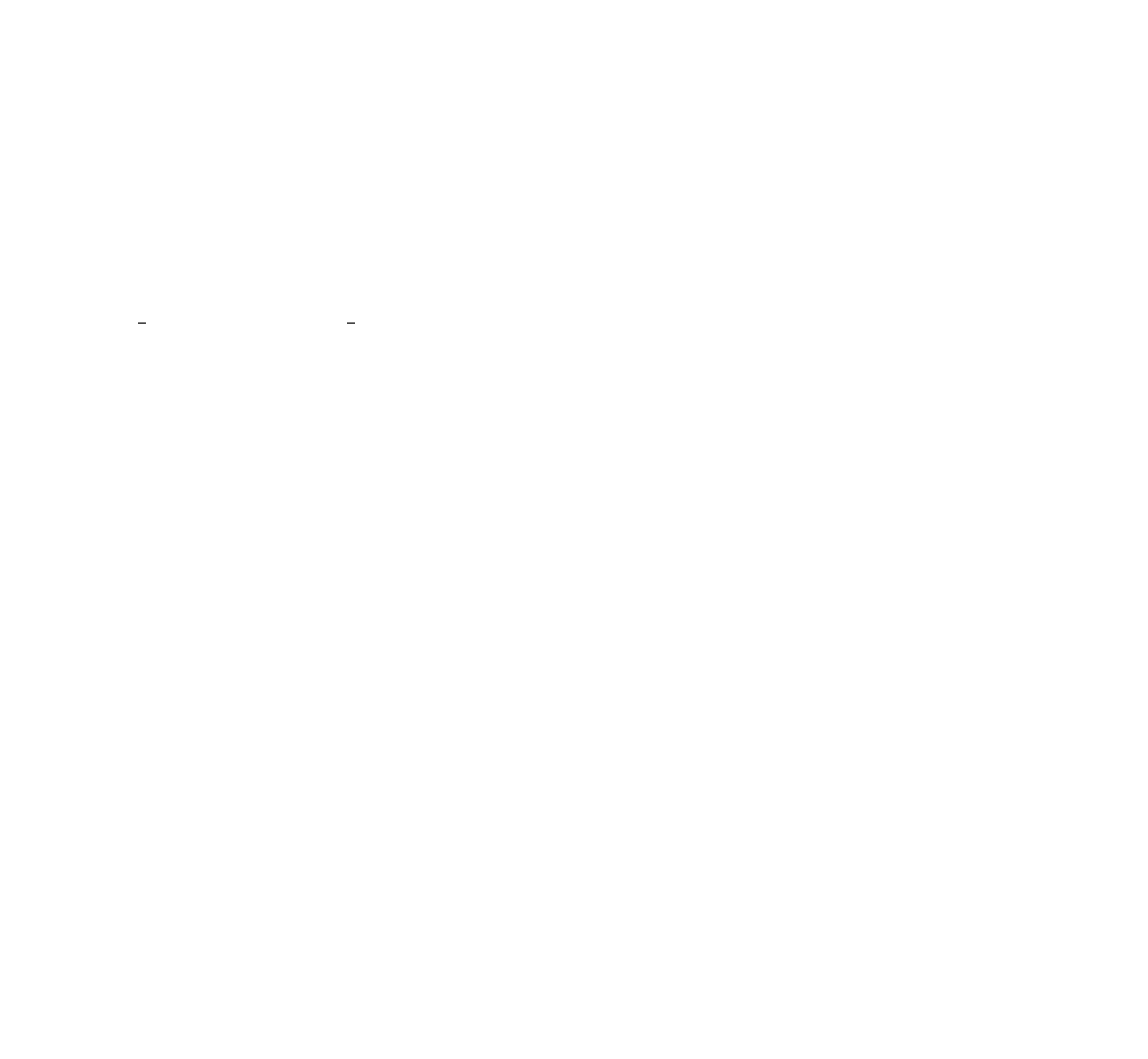
    \caption{Comparison of \Alg{Tsallis-Inf} \rev{with IW and with RV estimators} with \Alg{Thompson Sampling}, \Alg{Ucb1}, \Alg{Exp3}, \Alg{Exp3++}, and \Alg{Broad} in a stochastic environment with fixed mean losses of $\frac{1-\Delta}{2}$ for the optimal arm and $\frac{1+\Delta}{2}$ for all sub-optimal arms. The experiment is repeated for different number of arms $K$ and different gaps $\Delta$. The figure shows the time steps from $10^4$ to $10^7$ on a log-log plot. The pseudo-regret is estimated by 100  repetitions and we depict 2 standard deviations of the empirical pseudo-regret by the shaded areas.}
    \label{fig:both-log}
\end{figure}

\begin{figure}
    \centering
    \fontsize{0.1pt}{0.12pt}
    \def\svgwidth{\columnwidth}
    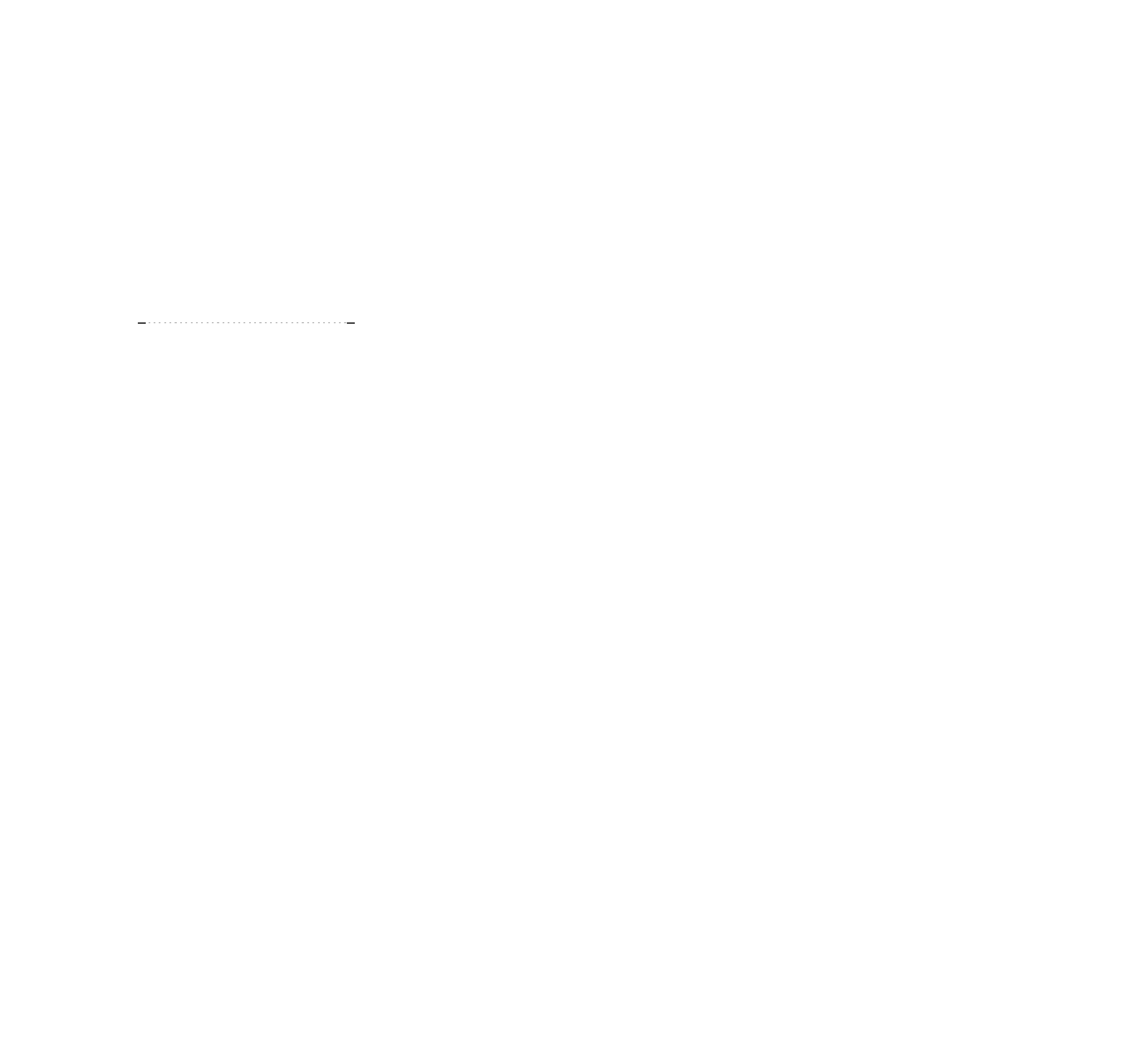
    \caption{Comparison of \Alg{Tsallis-Inf} \rev{with IW and with RV estimators} with \Alg{Thompson Sampling}, \Alg{Ucb1}, \Alg{Exp3}, \Alg{Exp3++}, and \Alg{Broad} in a stochastically constrained adversarial environment. The environment (unknown to the agent) alternates between two stochastic settings. In the first setting the expected loss of the optimal arm is 0 and $\Delta$ for sub-optimal arms. In the second the expected losses are $1-\Delta$ and $1$, respectively. The time between alternations increases exponentially (with factor $1.6$) after each switch. The experiment is repeated for different number of arms $K$ and different gaps $\Delta$. The figure shows the first $10000$ time steps on a linear plot. The pseudo-regret is estimated by 100 repetitions and we depict 2 standard deviations of the empirical pseudo-regret by the shaded areas.}
    \label{fig:both-adv}
\end{figure}

\begin{figure}
    \centering
    \fontsize{0.1pt}{0.12pt}
    \def\svgwidth{\columnwidth}
    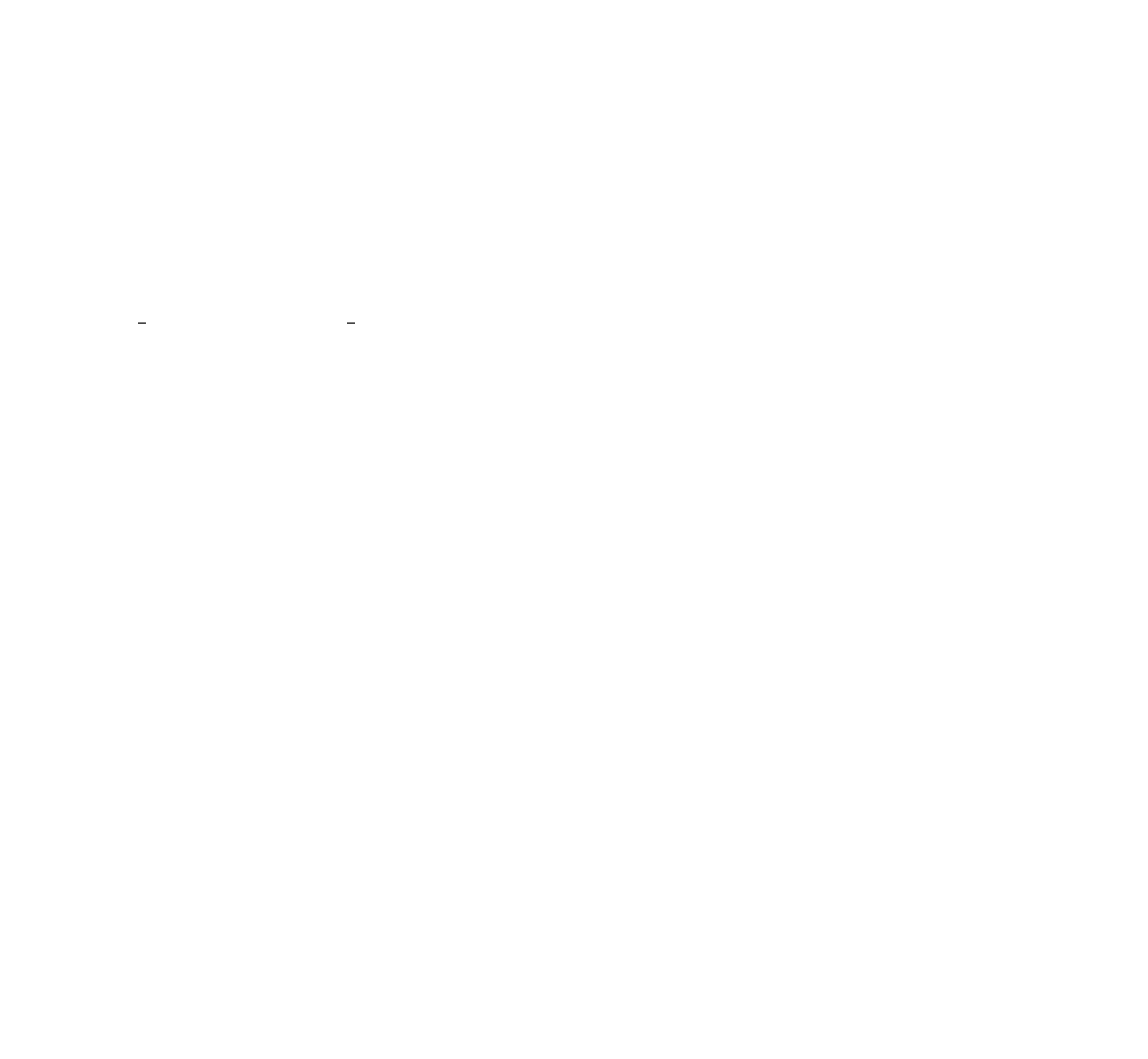
    \caption{Comparison of \Alg{Tsallis-Inf} \rev{with IW and with RV estimators} with \Alg{Thompson Sampling}, \Alg{Ucb1}, \Alg{Exp3}, \Alg{Exp3++}, and \Alg{Broad} in a stochastically constrained adversarial environment. The environment (unknown to the agent) alternates between two stochastic settings. In the first setting the expected loss of the optimal arm is 0 and $\Delta$ for sub-optimal arms. In the second the expected losses are $1-\Delta$ and $1$ respectively. The time between alternations increases exponentially (with factor $1.6$) after each switch. The experiment is repeated for different number of arms $K$ and different gaps $\Delta$. The figure shows the time steps from $10^4$ to $10^7$ on a log-log plot. The pseudo-regret is estimated by 100 repetitions and we depict 2 standard deviations of the empirical pseudo-regret by the shaded areas.}
    \label{fig:both-adv-log}
\end{figure}

The second experiment, shown in Figures~\ref{fig:both-adv} and \ref{fig:both-adv-log}, simulates stochastically constrained adversaries.
The mean loss of (optimal arm, all sub-optimal arms) switches between $(1-\Delta,1)$ and $(0,\Delta)$, while staying unchanged for phases that are increasing exponentially in length.
Both \Alg{Ucb1} and \Alg{Thompson-Sampling} suffer almost linear regret.
To the best of our knowledge, this is the first empirical evidence clearly demonstrating that \Alg{Thompson Sampling} is unsuitable for adversarial regimes.
All other algorithms are almost unaffected by the shifting of the means.

\rev{Both experiments confirm that \Alg{Tsallis-INF} with IW estimators achieves logarithmic regret in stochastic and stochastically constrained adversarial environments and that RV estimators significantly improve the constants.}

\subsection{Multiple Optimal Arms}
Since our theoretical results for the stochastic setting do not include multiple optimal arms, we explore this setting empirically. 
We use a single suboptimal arm with a mean loss of $9/16$.
All other arms are optimal with a mean loss of $7/16$.
We run the experiment with 1000 repetitions and increase the number of arms.
Figure~\ref{fig:copies} clearly shows that the regret does not suffer if the optimal arm is not unique.
\rev{On the opposite}, we observe that the regret decreases with the growth of the number of suboptimal arms. 
Therefore, we conjecture that the requirement of uniqueness is merely an artifact of the analysis.
\begin{figure}
    \centering
    \fontsize{10pt}{12pt}
    \def\svgwidth{\columnwidth}
    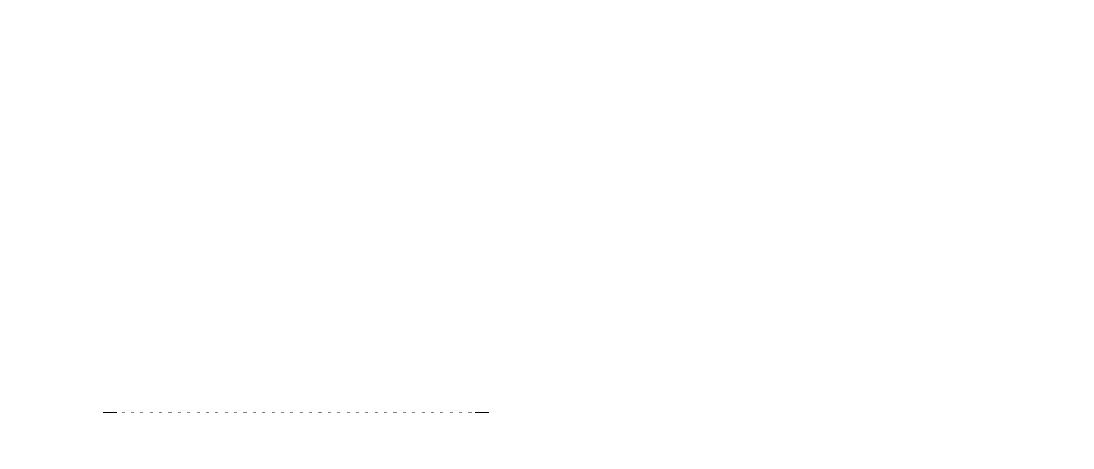
    \caption{Increasing number of copies of the best arm}
    \label{fig:copies}
\end{figure}

\section{Discussion}
\label{sec:dis}
We have presented a \rev{general analysis} of online mirror descent algorithms regularized by Tsallis entropy with $\alpha\in[0,1]$.
As the main contribution, we have shown that the special case of $\alpha=\frac{1}{2}$ achieves optimality in both adversarial and stochastic regimes, while being oblivious to the environment at hand.
Thereby, we have closed logarithmic gaps to lower bounds, which were present in existing best-of-both-worlds algorithms.
We introduced a novel proof technique based on the self-bounding property of the regret, circumventing the need of controlling the variance of loss estimates. 
We have provided an empirical evidence that our algorithm \rev{outperforms} \Alg{UCB1} in stochastic environments and is significantly more robust than \Alg{UCB1} and \Alg{Thompson Sampling} in non-i.i.d.\ settings.
\rev{We have introduced an adversarial regime with a self-bounding constraint, which includes} stochastically constrained adversaries and adversarially corrupted stochastic bandits \rev{as special cases and improved regret bounds for the latter two regimes}. We have also shown that \Alg{Tsallis-Inf} can be applied to achieve stochastic and adversarial optimality in utility-based dueling bandits.

A weak point of the current \rev{analysis} is the \rev{assumption} on uniqueness of \rev{the zero entry in a vector of suboptimality gaps in the adversarial regime with a self-bounding constraint. In stochastic and stochastically constrained adversarial settings, it corresponds to assumption of uniqueness of }the best arm.
Our experiments suggest that this is most likely an artifact of the analysis and we aim to address this shortcoming in future work.

\rev{Another open question is whether it is possible to close the remaining factor $2$ gap between the upper and lower gap-dependent asymptotic regret bounds, either by improving the upper bound in the stochastic regime or deriving a tighter lower bound for the adversarial regime with a self-bounding constraint.}

One more open question is whether logarithmic regret is achievable by \Alg{Tsallis-Inf} in the intermediate regimes defined by \citet{seldin2014one}. We have \rev{discussed} this question in more detail in Section~\ref{sec:open}.

An additional direction for future research is the application of \Alg{Tsallis-Inf} to other problems. 
The fact that the algorithm relies solely on importance weighted losses makes it a suitable candidate for partial monitoring games. 
One step in this direction has already been taken by \citet{ZLW19}. 

\acks{We would like to thank Chloé Rouyer for pointing out several bugs in the previous version of the work \citep{ZS19} and Haipeng Luo for the idea on how to improve our regret bounds for stochastic bandits with adversarial corruptions in the large $C$ case. We are also grateful to the anonymous reviewers for their comments. We acknowledge partial support by the Independent Research Fund Denmark, grant number 9040-00361B.}
%\newpage

\appendix

\section{Asymptotic Lower Bound}
\label{app:asym}
If the optimal arm has mean reward $\frac{1}{2}$ and suboptimal arms have the gaps $\AnyGap$ then the following lower bound for any consistent algorithm follows from \citet[Theorem 2]{lai1985asymptotically}
\begin{align*}
\lim_{t\rightarrow \infty}\frac{\overline{Reg}_t}{\log(t)} \geq \sum_{\AnyArm:\AnyGap>0}\frac{\AnyGap}{\operatorname{kl}(\frac{1}{2}+\AnyGap,\frac{1}{2})}.
\end{align*}
For any $\AnyGap\in[0,0.5]$ the $\operatorname{kl}$ term can be upper bounded as
\begin{align*}
\operatorname{kl}(\frac{1}{2}+\AnyGap,\frac{1}{2})\leq 2\AnyGap^2+3\AnyGap^3,
\end{align*}
which can be verified by taking Taylor's expansion at $\AnyGap=0$.
Therefore, 
\begin{align*}
\sum_{\AnyArm:\AnyGap>0}\frac{\AnyGap}{\operatorname{kl}(\frac{1}{2}+\AnyGap,\frac{1}{2})}
&\geq \sum_{\AnyArm:\AnyGap>0}\frac{1}{2\AnyGap+3\AnyGap^2}\\
&= \frac{1}{2}\sum_{\AnyArm:\AnyGap>0}\frac{1}{\AnyGap}-\sum_{\AnyArm:\AnyGap>0}\frac{\frac{3}{2}\AnyGap}{2\AnyGap+3\AnyGap^2}\geq \frac{1}{2}\lr{\sum_{\AnyArm:\AnyGap>0}\frac{1}{\AnyGap}-\frac{3}{2}K}.
\end{align*}
\rev{Thus, for any consistent algorithm we obtain
\begin{align*}
    \lim_{||\Delta|| \rightarrow 0} \left(\left(\sum_{\AnyArm:\AnyGap>0}\frac{1}{\AnyGap}\right)^{-1}\liminf_{t\rightarrow\infty}\frac{\E\left[\overline{Reg}_t\right]}{\log(t)}\right) &\geq \lim_{||\Delta|| \rightarrow 0} \left(\frac{1}{2}-\frac{3}{4}\left(\sum_{\AnyArm:\AnyGap>0}\frac{1}{\AnyGap}\right)^{-1}K\right)=\frac{1}{2}\,,
\end{align*}
since $\lim_{||\Delta||\rightarrow 0} \left(\sum_{\AnyArm:\AnyGap>0}\AnyGap^{-1}\right)^{-1}K = 0$.}

\section{Technical Lemmas}

\begin{lemma}
\label{lem:sum}
\[
\sum_{t=2}^\infty \lr{\sqrt{t} - \sqrt{t-1} - \frac{1}{2\sqrt{t}}} \leq \frac{1}{4}.
\]
\end{lemma}

\begin{proof}
We have $\sqrt{t} - \sqrt{t-1} = \frac{(\sqrt{t} - \sqrt{t-1})(\sqrt{t}+\sqrt{t-1})}{\sqrt{t}+\sqrt{t-1}} = \frac{1}{\sqrt{t}+\sqrt{t-1}}$
and $\frac{1}{\sqrt{t} + \sqrt{t-1}} - \frac{1}{2\sqrt{t}} = \frac{1}{2\sqrt{t}(\sqrt{t}+\sqrt{t-1})^2} \leq \frac{1}{8(t-1)^{3/2}}$. Summation of the latter is related to the Riemann zeta function $\zeta\lr{\frac{3}{2}} = \sum_{t=1}^\infty \frac{1}{t^{3/2}} \leq 2.62$. To get a slightly tighter bound, we count the first two terms explicitly and bound the rest using Riemann zeta function:
\begin{align*}
\sum_{t=2}^\infty \lr{\sqrt{t} - \sqrt{t-1} - \frac{1}{2\sqrt{t}}} &= \sqrt{2} - 1 - \frac{1}{2\sqrt{2}} + \sqrt{3} - \sqrt{2} - \frac{1}{2\sqrt{3}} + \sum_{t=4}^\infty \lr{\sqrt{t} - \sqrt{t-1} - \frac{1}{2\sqrt{t}}}\\
&\leq \sqrt{3}-1-\frac{1}{2\sqrt{2}} - \frac{1}{2\sqrt{3}} + \frac{1}{8}\sum_{t=3}^\infty \frac{1}{t^{3/2}}\\
&= \sqrt{3}-1-\frac{1}{2\sqrt{2}} - \frac{1}{2\sqrt{3}} + \frac{1}{8}\lr{\zeta\lr{\frac{3}{2}} - 1 - \frac{1}{2\sqrt{2}}}\\
&\leq \frac{1}{4}.
\end{align*}
\end{proof}

\begin{lemma}
\label{lem:loglimit}
For any $y > 0$ and $x>0$ the function $\frac{1-y^{-x}}{x}$ is non-increasing in $x$ and has the limit
\begin{align*}
\lim_{x\rightarrow 0} \frac{1-y^{-x}}{x} = \log(y),
\end{align*}
therefore, $\frac{1-y^{-x}}{x} \leq \min\{x^{-1},\log(y)\}$.
\end{lemma}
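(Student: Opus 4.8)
The plan is to establish the three assertions---monotonicity in $x$, the value of the limit, and the combined bound---separately; only the monotonicity requires a genuine computation, and even that reduces to a one-line elementary inequality.

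First, fixing $y>0$, I would write $g(x) := \frac{1-y^{-x}}{x} = \frac{1-e^{-x\log y}}{x}$ for $x>0$ and differentiate:
\[
g'(x) = \frac{x(\log y)e^{-x\log y} - \bigl(1-e^{-x\log y}\bigr)}{x^2} = \frac{(1+x\log y)e^{-x\log y}-1}{x^2}.
\]
With the substitution $u := x\log y \in \mathbb{R}$, the numerator equals $(1+u)e^{-u}-1$, which is nonpositive precisely because $1+u \le e^{u}$ for all real $u$. Hence $g'(x)\le 0$ on $(0,\infty)$, so $g$ is non-increasing (and strictly decreasing unless $y=1$, in which case $g\equiv 0$).

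Next, for the limit I would use $e^{-x\log y} = 1 - x\log y + O(x^2)$ as $x\to 0$, giving $1-y^{-x} = x\log y + O(x^2)$ and therefore $\lim_{x\to 0^+} g(x) = \log y$; equivalently, one application of L'Hôpital's rule gives $\lim_{x\to 0^+} g(x) = \lim_{x\to 0^+}(\log y)e^{-x\log y} = \log y$.

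Finally, I would combine the two facts. Since $g$ is non-increasing on $(0,\infty)$, its supremum is $\lim_{x\to 0^+} g(x) = \log y$, so $g(x)\le \log y$ for every $x>0$. Independently, $y^{-x} = e^{-x\log y}>0$ forces $1-y^{-x}<1$, hence $g(x) < \frac{1}{x}$. Taking the minimum of the two upper bounds yields $\frac{1-y^{-x}}{x}\le \min\{x^{-1},\log y\}$, as required. There is no real obstacle here; the only point worth isolating is that signing $g'$ collapses to the inequality $1+u\le e^{u}$, and everything else is routine.
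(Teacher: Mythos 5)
Your proof is correct and follows essentially the same route as the paper: sign the derivative via an elementary exponential inequality ($1+u \le e^u$, equivalent to the paper's $z \le e^z - 1$), then compute the limit by L'Hôpital (or Taylor). The only addition is your explicit derivation of the final $\min$ bound, which the paper leaves implicit; this is a welcome but minor clarification.
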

\begin{proof}
Taking the derivative and using the inequality $z \leq e^z-1$:
\begin{align*}
\frac{\partial}{\partial x}\left(\frac{1-y^{-x}}{x}\right) &= \frac{\log(y)y^{-x}x-(1-y^{-x})}{x^2}\leq \frac{(e^{\log(y)x}-1)y^{-x}-(1-y^{-x})}{x^2} = 0.
\end{align*}
The limit by L'H\^{o}pital's rule is
\begin{align*}
\lim_{x\rightarrow 0} \frac{1-y^{-x}}{x} = \lim_{x\rightarrow 0} \frac{\log(y)y^{-x}}{1} = \log(y).
\end{align*}
\end{proof}

\begin{lemma}
\label{lem:t-sum}
For any $b > 0$ and $c >0$ and $T_0,T \in \mathbb{N}$, such that $T_0 < T$ and $b\sqrt{T_0}>c$, it holds that
\begin{align*}
\sum_{t=T_0+1}^T\frac{1}{bt^\frac{3}{2}-ct} \leq \frac{2}{b\sqrt{T_0}-c}\,.
\end{align*}
\end{lemma}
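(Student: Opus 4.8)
The statement to prove is the technical summation bound in Lemma~\ref{lem:t-sum}: for $b>0$, $c>0$, integers $T_0<T$ with $b\sqrt{T_0}>c$, we have $\sum_{t=T_0+1}^T \frac{1}{bt^{3/2}-ct} \leq \frac{2}{b\sqrt{T_0}-c}$. My plan is the standard integral-comparison approach: bound the sum by an integral and evaluate the integral in closed form.

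First I would observe that the summand $f(t) = \frac{1}{bt^{3/2}-ct} = \frac{1}{t(b\sqrt{t}-c)}$ is positive and decreasing on $[T_0,\infty)$ under the hypothesis $b\sqrt{T_0}>c$ (both factors $t$ and $b\sqrt t - c$ are positive and increasing there, so their product is increasing and the reciprocal is decreasing). Hence the usual comparison gives
\[
\sum_{t=T_0+1}^T \frac{1}{bt^{3/2}-ct} \leq \int_{T_0}^{T} \frac{dt}{bt^{3/2}-ct} \leq \int_{T_0}^{\infty} \frac{dt}{t(b\sqrt{t}-c)}.
\]
Then I would compute the antiderivative. Substituting $u=\sqrt{t}$, so $t=u^2$ and $dt = 2u\,du$, transforms the integrand into $\frac{2u\,du}{u^2(bu-c)} = \frac{2\,du}{u(bu-c)}$. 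A partial-fraction decomposition $\frac{1}{u(bu-c)} = \frac{1}{c}\left(\frac{b}{bu-c} - \frac{1}{u}\right)$ makes this elementary: the integral equals $\frac{2}{c}\bigl(\log(bu-c) - \log u\bigr) = \frac{2}{c}\log\frac{bu-c}{u}$, i.e. $\frac{2}{c}\log\left(b - \frac{c}{\sqrt t}\right)$ up to a constant. Evaluating from $t=T_0$ to $t=\infty$ gives $\frac{2}{c}\left(\log b - \log\left(b-\frac{c}{\sqrt{T_0}}\right)\right) = \frac{2}{c}\log\frac{b}{\,b - c/\sqrt{T_0}\,} = \frac{2}{c}\log\frac{b\sqrt{T_0}}{b\sqrt{T_0}-c}$.

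It then remains to show $\frac{2}{c}\log\frac{b\sqrt{T_0}}{b\sqrt{T_0}-c} \leq \frac{2}{b\sqrt{T_0}-c}$. Writing $s = b\sqrt{T_0}$ (so $s>c>0$), this is $\log\frac{s}{s-c} \leq \frac{c}{s-c}$, equivalently $-\log\bigl(1-\frac{c}{s}\bigr) \leq \frac{c/s}{1-c/s}$; with $x = c/s \in (0,1)$ this is the elementary inequality $-\log(1-x) \leq \frac{x}{1-x}$, which follows from $\log(1-x) \geq \frac{-x}{1-x}$ (both sides vanish at $x=0$, and comparing derivatives $\frac{-1}{1-x} \geq \frac{-1}{(1-x)^2}$ for $x\in(0,1)$), or simply from integrating $\frac{1}{1-t}\leq \frac{1}{(1-x)}$ is not quite it — cleanest is to integrate $\frac{1}{1-t} \le \frac{1}{1-x}$... actually I would just cite $-\log(1-x)=\int_0^x\frac{dt}{1-t}\leq \int_0^x\frac{dt}{1-x}=\frac{x}{1-x}$. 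Chaining the three displayed bounds completes the proof.

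**Main obstacle.** There is no serious obstacle here; the only points requiring a little care are (i) verifying monotonicity of the summand to justify the integral comparison (needs $b\sqrt{T_0}>c$, which is exactly the hypothesis), and (ii) the final scalar inequality $-\log(1-x)\leq x/(1-x)$, which is routine. The computation is entirely mechanical once the substitution $u=\sqrt t$ is chosen.
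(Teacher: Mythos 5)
Your proof is correct and follows essentially the same route as the paper's: bound the sum by $\int_{T_0}^\infty f(t)\,dt$ using monotonicity of $f(t)=\tfrac{1}{t(b\sqrt t-c)}$, evaluate the integral in closed form (the paper states the antiderivative $\tfrac{1}{c}(2\log(b\sqrt t-c)-\log t)$ and verifies it by differentiation, whereas you derive the same expression via $u=\sqrt t$ and partial fractions), and finish with the elementary inequality $\log z\le z-1$, which in your change of variables is exactly $-\log(1-x)\le x/(1-x)$.
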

\begin{proof}
In the domain $(c^2/b^2,\infty)$, the function $f(t) = \frac{1}{bt^\frac{3}{2}-ct}$ is positive, monotonically decreasing and has the antiderivative
\begin{align*}
F(t) = \frac{1}{c}\lr{2\log(b\sqrt{t}-c)-\log(t)}\,,
\end{align*}
which can be verified by taking the respective derivatives.
Therefore, we can bound
\begin{align*}
\sum_{t=T_0+1}^T f(t) &\leq \int_{T_0}^\infty f(t)\,dt 
= \lim_{t\rightarrow\infty}F(t) - F(T_0)
= \frac{1}{c}\lr{2\log(b)-2\log(b\sqrt{T_0}-c)+\log(T_0)} \\
&= \frac{2}{c}\log\lr{\frac{b\sqrt{T_0}}{b\sqrt{T_0}-c}}
\leq \frac{2}{c}\lr{\frac{b\sqrt{T_0}}{b\sqrt{T_0}-c}-1}
=\frac{2}{b\sqrt{T_0}-c}\,.
\end{align*}
\end{proof}

\begin{lemma}
\label{lem:ugly-ineq}
For any $\alpha\in[0,1]$ and $z\geq 1$, it holds that
\begin{align*}
\frac{1-z^{-1+\alpha}}{1-\alpha}\leq (\log z)^\alpha\,,
\end{align*}
\rev{where for $\alpha=1$ we consider it as the limit case $\lim_{\alpha\rightarrow 1}\frac{1-z^{-1+\alpha}}{1-\alpha}\leq \log(z)$ and for $z=1$ and $\alpha=0$ we use the convention $0^0=1$.}
\end{lemma}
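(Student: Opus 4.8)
The plan is to reduce the two-parameter inequality to a one-dimensional problem via the substitution $t = \log z \ge 0$ and $\beta = 1-\alpha \in [0,1]$, under which $z^{-1+\alpha} = e^{-\beta t}$ and the claim becomes $\tfrac{1}{\beta}\bigl(1-e^{-\beta t}\bigr) \le t^{\alpha}$ with $\alpha = 1-\beta$. First I would dispatch the boundary cases so that the interior argument may assume $t>0$ and $\beta\in(0,1)$: for $z=1$ the left side is $0$ while the right side $0^{\alpha}$ equals $0$ (for $\alpha>0$) or $1$ (for $\alpha=0$, by the stated convention), so the inequality is trivial; for $\alpha=0$ the left side is $1-z^{-1}<1=(\log z)^{0}$; and for $\alpha=1$ the left side is the limit $\lim_{x\to0}\tfrac{1-z^{-x}}{x}=\log z$ by Lemma~\ref{lem:loglimit}, which matches $(\log z)^{1}$.

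For the interior case I would introduce $q(t) := t^{\alpha} - \tfrac{1}{\beta}\bigl(1-e^{-\beta t}\bigr)$ on $[0,\infty)$ and show $q\ge 0$. Two cheap facts pin down where a minimum can occur: $q(0)=0$, and $q(t)\to\infty$ as $t\to\infty$ because $\alpha>0$ makes $t^{\alpha}$ dominate the bounded term $\tfrac1\beta(1-e^{-\beta t})$. Hence $q$ attains its infimum over $[0,\infty)$ either at $t=0$, where it equals $0$, or at an interior critical point $t^\star>0$; so it suffices to show $q(t^\star)\ge 0$ at every such critical point.

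The heart of the proof is the critical-point computation. Since $q'(t)=\alpha t^{\alpha-1}-e^{-\beta t}$, the condition $q'(t^\star)=0$ reads $e^{-\beta t^\star}=\alpha (t^\star)^{\alpha-1}$, which lets us eliminate the exponential: substituting it into $q(t^\star)$ and multiplying the target inequality $q(t^\star)\ge0$ by $\beta(t^\star)^{1-\alpha}>0$ turns it into $\beta t^\star+\alpha\ge (t^\star)^{\beta}$. This is precisely weighted AM--GM with weights $\alpha,\beta$ (equivalently concavity of $x\mapsto x^{\beta}$ on $[0,\infty)$ for $\beta\in[0,1]$): $(t^\star)^{\beta}=1^{\alpha}\,(t^\star)^{\beta}\le \alpha\cdot1+\beta\cdot t^\star$. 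This settles the interior case, and together with the boundary cases, the lemma.

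The main obstacle --- and the reason one cannot simply integrate $q'\ge0$ upward from $q(0)=0$ --- is that $q$ is genuinely non-monotone: $q'(t)=\alpha t^{\alpha-1}-e^{-\beta t}$ is $+\infty$ at $0^{+}$, equals $\alpha-e^{-\beta}<0$ at $t=1$, and is positive again for large $t$, so $q$ dips after an initial rise. The device that sidesteps having to chart the shape of $q$ is to evaluate $q$ only at its critical points, where the relation $q'(t^\star)=0$ converts the transcendental term $e^{-\beta t^\star}$ into an algebraic one and collapses everything onto the elementary Bernoulli/AM--GM inequality. A minor but necessary bit of care is the bookkeeping at the endpoints $\alpha\in\{0,1\}$ and at $z=1$ (the $0^{0}$ convention and the $\alpha\to1$ limit), which I would settle first so that the interior argument can freely assume $t>0$, $\beta\in(0,1)$.
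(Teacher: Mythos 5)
Your proof is correct and follows essentially the same route as the paper's: both substitute $\tilde z=\log z$, reduce to showing the function is nonnegative at its interior critical points, and use the critical-point relation $e^{-\beta t^\star}=\alpha(t^\star)^{\alpha-1}$ to eliminate the transcendental term and arrive at the same reduced expression $(t^\star)^\alpha-\tfrac{1-\alpha(t^\star)^{\alpha-1}}{1-\alpha}$. The only difference is in the final step: you close the resulting inequality $\beta t^\star+\alpha\geq(t^\star)^\beta$ directly by weighted AM--GM, while the paper minimizes the reduced expression over all $\tilde z\geq0$ and finds the minimum is $0$ at $\tilde z=1$ by another derivative calculation; your finish is marginally slicker but the argument is the same.
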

\begin{proof}
\rev{For $\alpha = 0$ we have
\[
\frac{1-z^{-1+0}}{1-0}=1-z^{-1} < (\log z)^0 = 1,
\]
where for $z=1$ we use the convention $0^0=1$.
For $\alpha=1$ we consider the limit case
\begin{align*}
\lim_{\alpha\rightarrow 1}\frac{1-z^{-1+\alpha}}{1-\alpha}=\log(z)=(\log z)^1\,.
\end{align*}
It is left} to verify the statement for $\alpha\in(0,1)$.
Consider the function
\begin{align*}
f(z) = (\log z)^\alpha - \frac{1-z^{-1+\alpha}}{1-\alpha}\,.
\end{align*}
The function is continuous for $z \geq 1$, takes the value $0$ at $z=1$ and goes to infinity for $z\rightarrow \infty$.
If there is a point where the function is negative, there must also exist an extreme point.
Setting the derivative to $0$ shows that all extreme points $z^*$ satisfy
\begin{align*}
z^* = \log(z^*)\alpha^\frac{1}{\alpha-1}\,.
\end{align*}
The function values at the extreme points are therefore lower bounded by
\begin{align*}
f(z^*) \geq \min_{z\geq 1} \lr{(\log z)^\alpha -\frac{1-(\log(z)\alpha^\frac{1}{\alpha-1})^{-1+\alpha}}{1-\alpha}}=\min_{\tilde z\geq 0}\lr{\tilde z^\alpha -\frac{1-\tilde z^{-1+\alpha}\alpha}{1-\alpha}}\,,
\end{align*}
where we apply the substitution $\tilde z = \log(z)$. The RHS goes to infinity for $\tilde z\rightarrow 0$ and $\tilde z\rightarrow \infty$, which means that the only extreme point (can be verified by taking the derivative) at $\tilde z=1$ is the minimum.
Since $1^\alpha-\frac{1-1^{-1+\alpha}\alpha}{1-\alpha}=0$, the function $f$ is always positive, which concludes the proof.
\end{proof}

\section{Support Lemmas for Section~\ref{sec:proofs}}
\label{App:lemma-proofs}

\rev{We use $v =  (v_i)_{i=1,\dots,K}$ to denote a column vector $v\in \R^K$ with elements $v_1,\dots,v_K$. We use $\operatorname{diag}(v)$ to denote a $K\times K$ matrix with $v_1,\dots,v_K$ on the diagonal and 0 elsewhere. For a positive semidefinite matrix $M$ we use $||\cdot||_M=\sqrt{\langle\cdot, M\cdot\rangle}$ to denote the canonical norm with respect to $M$.} We also use the following properties of the potential function. 
\begin{align}
&\Psi_t(\Prop) = -\sum_{\AnyArm}\frac{\AnyProp^\alpha-\alpha \AnyProp }{\alpha(1-\alpha)\LearningRate[t]\xi_i},\notag\\
&\nabla\Psi_t(\Prop) = -\left(\frac{\AnyProp^{\alpha-1}-1}{(1-\alpha)\LearningRate[t]\xi_i}\right)_{i=1,\dots,K},\notag\\
&\nabla^2\Psi_t(\Prop) = \operatorname{diag}\left(\left(\frac{\AnyProp^{\alpha-2}}{\LearningRate[t]\xi_i}\right)_{i=1,\dots,K}\right).\notag\\
&\mbox{For }Y\leq 0:\notag\\
&\Psi_t^*(Y) = \max_w \dprod{w,Y} + \frac{1}{\eta_t} \sum_i \frac{w_i^\alpha - \alpha w_i}{\alpha(1-\alpha) \xi_i},\notag\\
&\nabla\Psi_t^*(Y) = \argmax_w \dprod{w,Y} + \frac{1}{\eta_t} \sum_i \frac{w_i^\alpha - \alpha w_i}{\alpha(1-\alpha) \xi_i} = \lr{\lr{-\eta_t(1-\alpha)\xi_i Y_i+1}^{\frac{1}{\alpha-1}}}_{i=1,\dots,K}.\label{eq:nabla-Psi}
\end{align}
%\ys{The above function is \emph{decreasing} in $L$, not \emph{increasing}, as it is stated later.} 
As we have shown in Section~\ref{sec:intuition}, there exists a Lagrange multiplier $\nu$, such that the algorithm picks the probabilities
\begin{align}
&w_t = \nabla \Phi_t(-\hat L_{t-1}) = \nabla \Psi_t^*(-\hat L_{t-1}+\nu\mathbf{1}_K).\label{eq:lagr mult}
\end{align}
$\Psi_t$ is a Legendre function, which implies that its gradient is invertible and $\nabla\Psi_t^{-1} = \nabla\Psi_t^*$ \citep{rockafellar2015convex}. 
Furthermore, by the Inverse Function theorem,  
\begin{align}
\nabla^2\Psi^*_t(\nabla\Psi_t(\Prop))=\nabla^2\Psi_t(\Prop)^{-1} = \operatorname{diag}\left(\LearningRate[t]\xi_i\AnyProp^{2-\alpha}\right)_{i=1,\dots,K}.
\label{eq:Dual-Hessian}
\end{align}
The Bregman divergence associated with a Legendre function $f$ is defined by
$$D_f(x,y) = f(x)-f(y)-\left\langle \nabla f(y), x-y\right\rangle.$$
By Taylor's theorem, it satisfies for some $z \in \operatorname{conv}(x,y)$
\begin{align}
D_f(x,y) \leq \frac{1}{2}||x-y||^2_{\nabla^2f(z)}. \label{eq:taylor}
\end{align}

\subsection{Controlling the {\it stability} Term}
Equation~\eqref{eq:taylor} gives a way of bounding the {\it stability} term.
The following lemma allows to control the eigenvalues of the Hessian $\nabla^2\Psi^*_t$.

\begin{lemma}
\label{lem:wrange}
Let $w \in \Simplex$ and $\tilde w = \nabla\Psi_{t}^*(\nabla\Psi_t(w)-\ell)$. 
\rev{If $\LearningRate[t]\xi_i\leq \frac{1}{4}$ for all $i$, then for all $\ell\in\R^K$ with $\ell_i \geq -1$ for all $i$, it holds that  
$\tilde{w}_i^{2-\alpha} \leq 2w_i^{2-\alpha}$ for all $i$.}
\end{lemma}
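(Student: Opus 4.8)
The plan is to reduce the coordinate-wise claim to a one-variable inequality in $\alpha$ by writing $\tilde w$ out explicitly. Using the formulas $\nabla\Psi_t(w)_i = -\frac{w_i^{\alpha-1}-1}{(1-\alpha)\eta_t\xi_i}$ and $\nabla\Psi_t^*(Y)_i = \bigl(1-\eta_t(1-\alpha)\xi_i Y_i\bigr)^{\frac{1}{\alpha-1}}$ from \eqref{eq:nabla-Psi} and substituting $Y=\nabla\Psi_t(w)-\ell$, the $\eta_t\xi_i$-factors cancel and one obtains, for $\alpha\in[0,1)$,
\[
\tilde w_i^{\alpha-1} = w_i^{\alpha-1} + \eta_t(1-\alpha)\xi_i\ell_i .
\]
Before using this I would check that the right-hand side is strictly positive, which also justifies invoking \eqref{eq:nabla-Psi} even though $\nabla\Psi_t(w)-\ell$ need not be $\le 0$ (it has a positive coordinate exactly when $\ell_i<0$ and $w_i$ is close to $1$): from $w_i\le 1$ we get $w_i^{\alpha-1}\ge 1$, and from $\ell_i\ge -1$ together with $\eta_t\xi_i\le\frac14$ we get $\eta_t(1-\alpha)\xi_i\ell_i\ge -\frac{1-\alpha}{4}\ge -\frac14$, so the right-hand side is at least $\frac34>0$ (coordinates with $w_i=0$ are trivial, since then $\tilde w_i=0$). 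The positivity check is precisely the condition under which the unconstrained maximizer defining $\Psi_t^*$ is attained at an interior point, so the explicit conjugate formula remains valid.

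Next I would divide the displayed identity by $w_i^{\alpha-1}\ge 1$ and use $\ell_i\ge -1$ and $\eta_t(1-\alpha)\xi_i\le\frac{1-\alpha}{4}$ to get
\[
\left(\frac{\tilde w_i}{w_i}\right)^{1-\alpha} = \left(1+\frac{\eta_t(1-\alpha)\xi_i\ell_i}{w_i^{\alpha-1}}\right)^{-1} \le \left(1-\frac{1-\alpha}{4}\right)^{-1} = \frac{4}{3+\alpha},
\]
where the inequality is immediate when $\ell_i\ge 0$ (the left-hand side is then $\le 1$) and otherwise follows from $1+\frac{\eta_t(1-\alpha)\xi_i\ell_i}{w_i^{\alpha-1}}\ge 1-\frac{1-\alpha}{4}>0$. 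Raising both sides to the power $\frac{2-\alpha}{1-\alpha}>0$ then gives $\tilde w_i^{2-\alpha}\le w_i^{2-\alpha}\bigl(\tfrac{4}{3+\alpha}\bigr)^{\frac{2-\alpha}{1-\alpha}}$, so the whole statement reduces to the scalar inequality $\bigl(\tfrac{4}{3+\alpha}\bigr)^{\frac{2-\alpha}{1-\alpha}}\le 2$ for all $\alpha\in[0,1)$.

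For that I would take logarithms and use the elementary bound $-\log(1-x)\le\frac{x}{1-x}$ (i.e. $\log z\le z-1$ at $z=\tfrac{1}{1-x}$) with $x=\frac{1-\alpha}{4}$, which yields $\log\frac{4}{3+\alpha}=-\log\!\bigl(1-\tfrac{1-\alpha}{4}\bigr)\le\frac{1-\alpha}{3+\alpha}$, hence $\frac{2-\alpha}{1-\alpha}\log\frac{4}{3+\alpha}\le\frac{2-\alpha}{3+\alpha}\le\frac23$, the last step because $\alpha\mapsto\frac{2-\alpha}{3+\alpha}$ is decreasing on $[0,1]$. Thus $\bigl(\tfrac{4}{3+\alpha}\bigr)^{\frac{2-\alpha}{1-\alpha}}\le e^{2/3}<2$, which settles $\alpha\in[0,1)$. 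Finally I would handle $\alpha=1$ either by continuity in $\alpha$ or directly: there $\tilde w_i=w_i e^{-\eta_t\xi_i\ell_i}$ and $2-\alpha=1$, and since $-\eta_t\xi_i\ell_i\le\eta_t\xi_i\le\frac14$ we get $\tilde w_i\le w_i e^{1/4}<2w_i$. The main obstacle, such as it is, is the justification of \eqref{eq:nabla-Psi} at arguments with a positive coordinate and the small amount of bookkeeping in the scalar inequality; everything becomes routine once $\tilde w_i^{\alpha-1}=w_i^{\alpha-1}+\eta_t(1-\alpha)\xi_i\ell_i$ and its positivity are established.
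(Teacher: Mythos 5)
Your proof is correct and takes essentially the same route as the paper: both rewrite the gradient relation as $\tilde w_i^{\alpha-1}=w_i^{\alpha-1}+\eta_t(1-\alpha)\xi_i\ell_i$, use $\ell_i\ge -1$ and $w_i^{\alpha-1}\ge 1$ to get $(\tilde w_i/w_i)^{1-\alpha}\le(1-\eta_t\xi_i(1-\alpha))^{-1}$, raise to the power $\frac{2-\alpha}{1-\alpha}$, and then bound the resulting scalar function of $\alpha$ by $2$. The only difference is cosmetic: the paper bounds $(1-\eta_t\xi_i(1-\alpha))^{-\frac{2-\alpha}{1-\alpha}}$ by asserting monotonicity in $\alpha$ and evaluating at $\alpha=0$ to get $16/9$, while you plug in $\eta_t\xi_i=\tfrac14$ and use a logarithmic estimate to get $e^{2/3}$ — both suffice, and your write-up is if anything a bit more careful about the domain of validity of the conjugate formula and the $\alpha=1$ endpoint.
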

\begin{proof}
Since $\nabla \Psi_t$ is the inverse of $\nabla\Psi_t^*$, we have
\begin{align*}
&\nabla\Psi_t(w)_i -\nabla\Psi_t(\tilde w)_i =  \ell_i \geq -1,\\
&\frac{w_i^{\alpha-1}-1}{(1-\alpha)\LearningRate[t]\xi_i } - \frac{\tilde{w}_\AnyArm^{\alpha-1}-1}{(1-\alpha)\LearningRate[t]\xi_i } \leq 1,\\
&\tilde{w}_i^{1-\alpha} \leq \frac{w_i^{1-\alpha}}{1-\LearningRate[t]\xi_i(1-\alpha)w_i^{1-\alpha}}\leq \frac{w_i^{1-\alpha}}{1-\LearningRate[t]\xi_i(1-\alpha)},\\
&\tilde{w}_i^{2-\alpha} \leq \frac{w_i^{2-\alpha}}{\lr{1-\LearningRate[t]\xi_i(1-\alpha)}^{\frac{2-\alpha}{1-\alpha}}}.
\end{align*}
It remains to bound $\lr{1-\LearningRate[t]\xi_i(1-\alpha)}^{-\frac{2-\alpha}{1-\alpha}}$.
Note that this function is monotonically decreasing in $\alpha$, which can be verified by confirming that the derivative is negative in $[0,1]$.
Using the fact that $\LearningRate[t]\xi_i \leq \frac{1}{4}$, we have
\begin{align*}
\lr{1-\LearningRate[t]\xi_i(1-\alpha)}^{-\frac{2-\alpha}{1-\alpha}} \leq \lr{1-\LearningRate[t]\xi_i}^{-2} \leq \frac{4^2}{3^2}\leq 2.
\end{align*}
\end{proof}

For the RV estimators and $\alpha=1/2$, we provide a tighter bound for the stability term by using the following two lemmas. 
For $\alpha = 1/2$ and $\xi_i = 1$ we have
\begin{align}
&\Psi_t(w) = -4\eta_t^{-1}\sum_{i=1}^K(w_i^\frac{1}{2}-\frac{1}{2}w_i)\,,\notag\\
&\nabla\Psi_t(w) = \lr{-2\eta_t^{-1}(w_i^{-\frac{1}{2}}-1)}_{i=1,\dots,K}\,.\label{eq:nabla-Psi-half}
\end{align}
\begin{lemma}
\label{lem:conv conjug}
The convex conjugate of $\Psi_t(w) = -4\eta_t^{-1}\sum_{i=1}^K(w_i^\frac{1}{2}-\frac{1}{2}w_i)$ is
\begin{align*}
&\Psi_t^*(Y) = \begin{cases} \sum_{i=1}^K \frac{2\eta_t^{-1}}{1-\eta_tY_i/2}, &\text{if $Y_i< 2\eta_t^{-1}$ for all $i$,}\\
\infty, & \text{otherwise.}
\end{cases}
\end{align*}
\end{lemma}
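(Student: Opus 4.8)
The plan is to exploit the coordinatewise separability of $\Psi_t$ and reduce the computation to a scalar Fenchel conjugate. Write $\Psi_t(w)=\sum_{i=1}^K\psi(w_i)$ with $\psi(x):=-4\eta_t^{-1}\bigl(x^{1/2}-\tfrac12 x\bigr)$, where the presence of $x^{1/2}$ forces the effective domain of $\psi$ onto $[0,\infty)$ (i.e.\ $\psi(x)=+\infty$ for $x<0$), so the effective domain of $\Psi_t$ is $\R_{\ge 0}^K$. Then the Fenchel conjugate splits: $\Psi_t^*(Y)=\sup_{w\in\R^K}\{\langle w,Y\rangle-\Psi_t(w)\}=\sum_{i=1}^K\psi^*(Y_i)$ with $\psi^*(y)=\sup_{x\ge 0}\{xy-\psi(x)\}=\sup_{x\ge0}\bigl\{(y-2\eta_t^{-1})x+4\eta_t^{-1}\sqrt{x}\bigr\}$. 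So it suffices to evaluate $\psi^*$.

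First I would handle the divergent branch. Let $g(x):=(y-2\eta_t^{-1})x+4\eta_t^{-1}\sqrt{x}$ on $[0,\infty)$. If $y\ge 2\eta_t^{-1}$, both terms of $g$ are nondecreasing and the $\sqrt{x}$ term is unbounded, hence $g(x)\to\infty$ and $\psi^*(y)=+\infty$; since $\Psi_t^*(Y)=\sum_i\psi^*(Y_i)$, a single coordinate with $Y_i\ge 2\eta_t^{-1}$ already makes $\Psi_t^*(Y)=\infty$, giving the ``otherwise'' case.

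For $y<2\eta_t^{-1}$, I would note that $g$ is strictly concave on $(0,\infty)$ because $g''(x)=-\eta_t^{-1}x^{-3/2}<0$, while $g'(x)=(y-2\eta_t^{-1})+2\eta_t^{-1}x^{-1/2}\to+\infty$ as $x\to0^+$ and $\to y-2\eta_t^{-1}<0$ as $x\to\infty$. Hence there is a unique interior maximizer $x^\star$ with $g'(x^\star)=0$, namely $\sqrt{x^\star}=\frac{2\eta_t^{-1}}{2\eta_t^{-1}-y}$. Substituting back, with $a:=y-2\eta_t^{-1}<0$ and $b:=4\eta_t^{-1}$, gives $g(x^\star)=ax^\star+b\sqrt{x^\star}=-\frac{b^2}{4a}=\frac{4\eta_t^{-2}}{2\eta_t^{-1}-y}=\frac{2\eta_t^{-1}}{1-\eta_t y/2}$, which is exactly the claimed per-coordinate expression. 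Summing over $i$ yields the stated formula for $\Psi_t^*$.

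There is essentially no obstacle here: the argument is a routine one-dimensional optimization. The only points that require a little care are (i) correctly identifying the effective domain $\R_{\ge0}^K$ imposed by the $w_i^{1/2}$ terms, so that the coordinatewise conjugation and the positivity of $x^\star$ are legitimate, and (ii) getting the threshold $2\eta_t^{-1}$ and the algebraic simplification of $g(x^\star)$ right. One may optionally remark that $\Psi_t$ is a Legendre function on this domain (justifying the later identity $\nabla\Psi_t^*=(\nabla\Psi_t)^{-1}$ used elsewhere), but this is not needed for the conjugate formula itself.
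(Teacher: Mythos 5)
Your proof is correct and follows essentially the same route as the paper: split the conjugate coordinatewise, observe divergence when $Y_i\geq 2\eta_t^{-1}$, and otherwise compute the unique interior maximizer by first-order conditions and substitute back. You supply a bit more detail than the paper (explicitly noting the effective domain $\R_{\geq 0}^K$ and verifying strict concavity), but these are just filled-in steps of the same argument, and your algebra checks out.
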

\begin{proof}
\begin{align*}
    \Psi_t^*(Y)&=\sup_{w\in\R^K} \langle w, Y\rangle - \Psi_t(w)\\
    &=\sum_{i=1}^K \sup_{w\in\R} w(Y_i-2\eta_t^{-1})+4\eta_t^{-1}w^\frac{1}{2}\,.
\end{align*}
For $Y_i\geq 2\eta^{-1}$, the term goes to infinity as $w\rightarrow\infty$.
Otherwise, the maximum is obtained by $w = \frac{1}{(1-\frac{1}{2}\eta_tY_i)^2}$, which concludes the proof.
\end{proof}
Using the explicit form of the convex conjugate, we can show a general bound on the stability.
\begin{lemma}
\label{lem:reduced variance stability}
Let $\alpha = 1/2$ and $\xi_i =1$. Then for any $x$, such that $\min_i \eta_t(\AnyImpLoss[t]-x)w_{t,i}^\frac{1}{2} \geq -1$, the instantaneous stability satisfies
\begin{align*}
    \langle \Prop[t],\ImpLoss[t] \rangle +\Phi_{t}(-\CumLoss[t])-\Phi_{t}(-\CumLoss[t-1]) \leq \sum_{i=1}^K \frac{\eta_t}{2}\AnyProp[t]^\frac{3}{2}(\AnyImpLoss[t]-x)^2 + \frac{\eta_t^2}{2} \AnyProp[t]^2\left|x-\AnyImpLoss[t]\right|_+^3\,,
\end{align*}
where $|z|_+ = \max\{z,0\}$.
\end{lemma}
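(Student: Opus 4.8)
The plan is to run the standard potential-based OMD stability argument \citep[Chapter 28]{LS19bandit-book}, but to evaluate the resulting Bregman divergence \emph{exactly} rather than through a second-order Taylor estimate; this is possible because Lemma~\ref{lem:conv conjug} gives $\Psi_t^*$ in closed, separable form. Write $\hat\ell_t$ and $\hat L_t$ for the loss-estimate and cumulative-loss vectors, $w_t=\nabla\Phi_t(-\hat L_{t-1})$ for the weights, and $v := \hat\ell_t - x\mathbf{1}_K$.

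The first step is to note that the instantaneous stability is itself a Bregman divergence: since $w_t=\nabla\Phi_t(-\hat L_{t-1})$ and $\hat L_t=\hat L_{t-1}+\hat\ell_t$, the definition of $D_{\Phi_t}$ gives $\langle w_t,\hat\ell_t\rangle+\Phi_t(-\hat L_t)-\Phi_t(-\hat L_{t-1}) = D_{\Phi_t}(-\hat L_t,-\hat L_{t-1})$. The second, and main, step is to transfer this to the \emph{unconstrained} conjugate $\Psi_t^*$. By Lagrangian duality for the simplex constraint (strong duality holds since the objective is convex and the equality constraint $\sum_i w_i=1$ is affine with a Slater point), $\Phi_t(y) = \min_{\mu}\{\mu+\Psi_t^*(y-\mu\mathbf{1}_K)\}$; at $y=-\hat L_{t-1}$ the minimising multiplier is the Lagrange multiplier $\nu$ from the weight characterisation derived in Section~\ref{sec:intuition}, so with $Z := -\hat L_{t-1}+\nu\mathbf{1}_K$ we have $w_t=\nabla\Psi_t^*(Z)$ and $\Phi_t(-\hat L_{t-1})=-\nu+\Psi_t^*(Z)$. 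For the other endpoint $-\hat L_t$ we do \emph{not} minimise over $\mu$ but plug in the multiplier shifted by $x$, obtaining the upper bound $\Phi_t(-\hat L_t)\le -\nu-x+\Psi_t^*(Z-v)$; the hypothesis $\eta_t(\hat\ell_{t,i}-x)w_{t,i}^{1/2}\ge -1$ is precisely what keeps $Z-v$ inside the finite domain $\{Y : Y_i<2\eta_t^{-1}\ \text{for all } i\}$ of $\Psi_t^*$. Substituting both facts and using $\langle w_t,\mathbf{1}_K\rangle=1$ to cancel the $x$-terms gives $D_{\Phi_t}(-\hat L_t,-\hat L_{t-1})\le D_{\Psi_t^*}(Z-v,Z)$.

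It then remains to evaluate $D_{\Psi_t^*}(Z-v,Z)$. Because $\Psi_t^*(Y)=\sum_i g(Y_i)$ with $g(y)=2\eta_t^{-1}/(1-\eta_t y/2)$, the divergence splits coordinate-wise, $D_{\Psi_t^*}(Z-v,Z)=\sum_i D_g(Z_i-v_i,Z_i)$. Writing $m_i := 1-\eta_t Z_i/2$, so that $g'(Z_i)=m_i^{-2}=w_{t,i}$ and hence $m_i=w_{t,i}^{-1/2}>0$, and $q_i := \eta_t v_i/2$, a short simplification of $g(Z_i-v_i)-g(Z_i)+g'(Z_i)v_i$ collapses the one-dimensional divergence to
\[
D_g(Z_i-v_i,Z_i)=\frac{2q_i^2}{\eta_t\,m_i^2\,(m_i+q_i)}.
\]
One then checks, coordinate by coordinate, that this is at most $\tfrac{\eta_t}{2}w_{t,i}^{3/2}v_i^2+\tfrac{\eta_t^2}{2}w_{t,i}^2|x-\hat\ell_{t,i}|_+^3$: for $q_i\ge 0$ the second term is $0$ and the inequality reduces to $m_i\le m_i+q_i$; for $q_i<0$, clearing the positive denominators reduces it to $-q_i(m_i+2q_i)\ge 0$, which is exactly the hypothesis rewritten as $2q_i\ge -m_i$ (recall $w_{t,i}^{1/2}=m_i^{-1}$). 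Summing over $i$ completes the proof.

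The main obstacle is the second step: passing from the Bregman divergence of the \emph{constrained} conjugate $\Phi_t=(\Psi_t+\mathcal{I}_{\Simplex})^*$ to that of the \emph{unconstrained} conjugate $\Psi_t^*$, which requires handling two different Lagrange multipliers (one optimal at $-\hat L_{t-1}$, one deliberately shifted by $x$ at $-\hat L_t$) and recognising that the shifts cancel against $\langle w_t,x\mathbf{1}_K\rangle$. The coordinate-wise step is then routine, with the single caveat that one must use the exact value of $D_g$ rather than a mean-value bound on $g''$: the latter loses a constant factor and would not yield the stated bound in the regime $\hat\ell_{t,i}<x$.
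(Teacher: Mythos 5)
Your proposal is correct and takes essentially the same route as the paper: the paper also passes from $\Phi_t$ to $\Psi_t^*$ (via shift-invariance $\Phi_t(L+c\mathbf{1}_K)=\Phi_t(L)+c$, the inequality $\Phi_t\le\Psi_t^*$, and the equality $\Phi_t(\nabla\Psi_t(w))=\Psi_t^*(\nabla\Psi_t(w))$ for $w$ on the simplex), arrives at exactly your quantity $D_{\Psi_t^*}(Z-v,Z)=\sum_i\frac{\eta_t}{2}w_{t,i}^{3/2}(\hat\ell_{t,i}-x)^2\bigl(1+\frac{\eta_t}{2}(\hat\ell_{t,i}-x)w_{t,i}^{1/2}\bigr)^{-1}$ by closed-form evaluation, and then bounds the inverse factor by $1+\eta_t|x-\hat\ell_{t,i}|_+w_{t,i}^{1/2}$ using the hypothesis. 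Your Bregman-divergence and Lagrangian-duality framing and your coordinate-wise case analysis are an algebraically equivalent repackaging of the paper's direct computation, not a different method.
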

\begin{proof}
By equation~\eqref{eq:lagr mult} and since $\nabla \Psi_t^{-1} = \nabla \Psi^*_t$ and $w_t = \nabla\Phi_t(-\hat L_{t-1})=\nabla(\Psi_t+\mathcal{I}_{\Simplex})^*(-\hat L_{t-1})$, there exists a Lagrange multiplier $\nu$ such that
\begin{align*}
    -\CumLoss[t-1] = \nabla\Psi_t(w_t) - \nu\mathbf{1}_K\,.
\end{align*}
Furthermore, $\Phi_t(-L - \nu\mathbf{1}_K) = \Phi_t(-L) -\nu$, since the maximization over $w$ is restricted to the probability simplex.
Using these two properties, we have for any $x\in\mathbb{R}$
\begin{align*}
    \langle \Prop[t],\ImpLoss[t] \rangle +\Phi_{t}(-\CumLoss[t])-\Phi_{t}(-&\CumLoss[t-1]) \\
    &= \langle \Prop[t],\ImpLoss[t] \rangle +\Phi_{t}(-\ImpLoss[t]+\nabla\Psi_t(\Prop[t])-\nu\mathbf{1}_K)-\Phi_{t}(\nabla\Psi_t(\Prop[t])-\nu\mathbf{1}_K)\\
    &=\langle \Prop[t],\ImpLoss[t] \rangle +\Phi_{t}(-\ImpLoss[t]+\nabla\Psi_t(\Prop[t]))-\Phi_{t}(\nabla\Psi_t(\Prop[t]))\\
    &=\langle \Prop[t],\ImpLoss[t]-x\mathbf{1}_K \rangle +\Phi_{t}(x\mathbf{1}_K-\ImpLoss[t]+\nabla\Psi_t(\Prop[t]))-\Phi_{t}(\nabla\Psi_t(\Prop[t]))\\
    &\leq \langle \Prop[t],\ImpLoss[t]-x\mathbf{1}_K \rangle +\Psi_{t}^*(x\mathbf{1}_K-\ImpLoss[t]+\nabla\Psi_t(\Prop[t]))-\Psi^*_{t}(\nabla\Psi_t(\Prop[t]))\,,
\end{align*}
where the last line uses $\Phi_t(\nabla\Psi_t(w))=\Psi^*_t(\nabla\Psi_t(w))$ for any $w\in\Delta^{K-1}$, which holds because the argmax in both terms is $w$, and the inequality $\Phi_t(L) \leq \Psi_t^*(L)$, which holds because $\Phi_t$ is a constrained version of $\Psi_t^*$.

Using the explicit expression for the convex conjugate in Lemma~\ref{lem:conv conjug} and the explicit expression for $\nabla \Psi_t(w)$ in equation~\eqref{eq:nabla-Psi-half} and assuming that $x$ is in the range defined in the statement of Lemma~\ref{lem:reduced variance stability}, which ensures that the convex conjugate is bounded, we have
\begin{align}
    &\langle \Prop[t],\ImpLoss[t]-x\mathbf{1}_K \rangle +  \Psi_{t}^*(x\mathbf{1}_K-\ImpLoss[t]+\nabla\Psi_t(\Prop[t]))-\Psi^*_{t}(\nabla\Psi_t(\Prop[t])) \notag\\
    &\hspace{3cm}= \sum_{i=1}^K \AnyProp[t](\AnyImpLoss[t]-x) + \frac{2}{\eta_t}\left(\AnyProp[t]^{-\frac{1}{2}} +\frac{\eta_t}{2}(\AnyImpLoss[t]-x) \right)^{-1}-\frac{2}{\eta_t}\AnyProp[t]^\frac{1}{2}\notag\\
    &\hspace{3cm}= \sum_{i=1}^K \frac{2}{\eta_t}\AnyProp[t]^\frac{1}{2}\left( \frac{\eta_t}{2}(\AnyImpLoss[t]-x)\AnyProp[t]^\frac{1}{2}+\left(1 +\frac{\eta_t}{2}(\AnyImpLoss[t]-x)\AnyProp[t]^\frac{1}{2} \right)^{-1} -1\right)\notag\\
    &\hspace{3cm} = \sum_{i=1}^K \frac{\eta_t}{2}\AnyProp[t]^\frac{3}{2}(\AnyImpLoss[t]-x)^2\left(1 +\frac{\eta_t}{2}(\AnyImpLoss[t]-x)\AnyProp[t]^\frac{1}{2} \right)^{-1}.\label{eq:step1}
\end{align}
From $\min_i \eta_t(\AnyImpLoss[t]-x)\AnyProp[t]^\frac{1}{2} \geq -1$ it follows that 
$\forall i:\,(1 +\frac{\eta_t}{2}(\AnyImpLoss[t]-x)\AnyProp[t]^\frac{1}{2})^{-1}\leq 2$, so
\begin{align}
    \left(1 +\frac{\eta_t}{2}(\AnyImpLoss[t]-x)\AnyProp[t]^\frac{1}{2} \right)^{-1} &= 1 - \frac{\eta_t}{2}(\AnyImpLoss[t]-x)\AnyProp[t]^\frac{1}{2}
    \left(1 +\frac{\eta_t}{2}(\AnyImpLoss[t]-x)\AnyProp[t]^\frac{1}{2}\right)^{-1}\notag\\
    &\leq 1 +\eta_t|x-\AnyImpLoss[t]|_+\AnyProp[t]^\frac{1}{2}\,.\label{eq:step2}
\end{align}
Combining equations~\eqref{eq:step1} and \eqref{eq:step2} completes the proof.
\end{proof}

Now we have all the tools to prove the main {\it stability} lemma.
\begin{proof}{\bf of Lemma~\ref{lem:stability}}
\rev{We begin by proving the first and the last part of the lemma followed by the second and third. 
\paragraph{First part of the lemma.} 
First, we bound the \emph{stability} by $1$.
By convexity of $\Phi_t$, we have
\begin{align*}
    \MyLoss[t] +\Phi_{t}(-\CumLoss[t])-\Phi_{t}(-\CumLoss[t-1])\leq \MyLoss[t]-\langle\nabla \Phi_{t}(-\CumLoss[t]),\ImpLoss[t]\rangle\leq 1\,,
\end{align*}
where the second inequality uses the non-negativity of the IW estimators.}

Recall that $\Prop[t] = \nabla\Phi_{t}(-\CumLoss[t-1])$ and $\MyLoss[t] = \left\langle 
\Prop[t],\ImpLoss[t]\right\rangle$.
Furthermore, $\Phi_t(L+x\mathbf{1}_K) = \Phi_t(L) + x$, where $\mathbf{1}_K$ is a vector of $K$ ones, since we take the argmax over probability distributions.
Finally, from equation~\eqref{eq:lagr mult} follows the existence of a constant $c_t$, such that $\nabla \Psi_t(w_t) = -\hat L_{t-1}+c_t\mathbf{1}_K$.
Hence, for any $x\in\mathbb{R}$
\begin{align}
&\E\left[\MyLoss[t] +\Phi_{t}(-\CumLoss[t])-\Phi_{t}(-\CumLoss[t-1])\right]\nonumber\\ 
&\hspace{2cm}= \E\left[\left\langle\Prop[t],\ImpLoss[t]\right\rangle +\Phi_{t}(-\CumLoss[t])-\Phi_{t}(-\CumLoss[t-1])\right]\nonumber\\
&\hspace{2cm}= \E\left[\left\langle\Prop[t],\ImpLoss[t]\right\rangle +\Phi_{t}(\nabla \Psi_t(w_t)-\ImpLoss[t])-\Phi_{t}(\nabla \Psi_t(w_t))\right]\nonumber\\
&\hspace{2cm}= \E\left[\left\langle\Prop[t],\ImpLoss[t]-x\mathbf{1}_K\right\rangle +\Phi_{t}(\nabla \Psi_t(w_t)-\ImpLoss[t]+x\mathbf{1}_K)-\Phi_{t}(\nabla\Psi_t(w_t))\right]\nonumber\\
&\hspace{2cm}\leq\E\left[\left\langle\Prop[t],\ImpLoss[t]-x\mathbf{1}_K\right\rangle +\Psi^*_{t}(\nabla \Psi_t(w_t)-\ImpLoss[t]+x\mathbf{1}_K)-\Psi^*_{t}(\nabla\Psi_t(w_t))\right]\label{eq:9-1}\\
&\hspace{2cm}= \E\left[D_{\Psi_t^*}(\nabla\Psi_t(w_t)-\ImpLoss[t]+x\mathbf{1}_K,\nabla\Psi_t(w_t))\right]\nonumber\\
&\hspace{2cm}\leq \E\left[\max_{z \in \operatorname{conv}(\nabla\Psi_t(w_t),\nabla\Psi_t(w_t)-\ImpLoss[t]+x\mathbf{1}_K)}\frac{1}{2}||\ImpLoss[t]-x\mathbf{1}_K||^2_{\nabla^2\Psi^*_t(z)}\right] \label{eq:9-2}\\
&\hspace{2cm}=\E\left[\max_{w \in \operatorname{conv}(\Prop[t],\nabla\Psi_t^*(\nabla\Psi_t(w_t)-\ImpLoss[t]+x\mathbf{1}_K))}\frac{1}{2}||\ImpLoss[t]-x\mathbf{1}_K||^2_{\nabla^2\Psi_t(w)^{-1}}\right]\label{eq:9-3}\\
&\hspace{2cm}\leq
\E\left[\sum_{\AnyArm=1}^K\max_{w_i \in [\AnyProp[t],\nabla\Psi_t^*(\nabla\Psi_t(w_t)-\ImpLoss[t]+x\mathbf{1}_K)_i]}\frac{\LearningRate[t]\xi_i}{2}(\AnyImpLoss[t]-x)^2w_i^{2-\alpha}\right],\nonumber
\end{align}
where in equation \eqref{eq:9-1} we have $\Phi_t(x)\leq \Psi_t^*(x)$, because $\Phi_t$ is a constrained version of $\Psi_t^*$, and $\Phi_t(\nabla \Psi_t(w_t)) = \Psi^*_t(\nabla \Psi_t(w_t))$, because $\argmax_{w\in\R^K}\ip{w,\nabla \Psi_t(w_t)}-\Psi(w)=w_t$ is in the probability simplex and the constraint is inactive. 
Inequality \eqref{eq:9-2} follows by equation \eqref{eq:taylor}, and \eqref{eq:9-3} by equation \eqref{eq:Dual-Hessian}. 

In order to prove the first part of the Lemma, we set $x=0$ and observe that $\nabla\Psi^*_t(\nabla\Psi_t(w_t)-\ImpLoss[t])_i \leq \nabla\Psi^*_t(\nabla\Psi_t(w_t))_i = \AnyProp[t]$ because of non-negativity of the losses and the fact that $\nabla\Psi^*_t$ is monotonically increasing, see \eqref{eq:nabla-Psi}. 
(The observation implies that the highest value of $w_i \in [\AnyProp[t],\nabla\Psi_t^*(\nabla\Psi_t(w_t)-\ImpLoss[t])_i]$ is $\AnyProp[t]$.) 
Since the importance weighted losses are $0$ for the arms that were not played, we have
\begin{multline*}
\E\left[\sum_{\AnyArm=1}^K\max_{w_i \in [\AnyProp[t],\nabla\Psi_t^*(\nabla\Psi_t(w_t)-\ImpLoss[t])_i]}\frac{\LearningRate[t]\xi_i}{2}\AnyImpLoss[t]^2w_i^{2-\alpha}\right] 
= \E\left[\sum_{\AnyArm=1}^K \frac{\LearningRate[t]\xi_i}{2}\AnyImpLoss[t]^2\AnyProp[t]^{2-\alpha}\right]\\
=\frac{\LearningRate[t]\xi_i}{2}\E\left[\sum_{\AnyArm=1}^K \frac{\ell_{t,i}^2}{\AnyProp[t]^2}\AnyProp[t]^{2-\alpha}\ind[t][\AnyArm]\right]
=\frac{\LearningRate[t]\xi_i}{2}\E\left[\sum_{\AnyArm=1}^K \frac{\ell_{t,i}^2}{\AnyProp[t]^2}\AnyProp[t]^{3-\alpha}\right]
\leq \sum_{\AnyArm=1}^K\frac{\LearningRate[t]\xi_i}{2}\E\left[\AnyProp[t]\right]^{1-\alpha},
\end{multline*}
where we use that $\E[\ind[t][i] | \ell_1,\dots,\ell_{t-1}, \MyArm[1],\dots,\MyArm[t-1]] = \AnyProp[t]$.  
The last inequality follows by Jensen's inequality.

\paragraph{Fourth part of the lemma.} We set $x=\ind[t][j]\ell_{t,j}$.
In the calculation below, for the events $\MyArm[t]\in \{1,\dots,K\}\setminus j$, we have $x=0$ and use the same derivation as in the previous case.
When $\MyArm[t] = j$, for $i\neq j$ we have $\hat \ell_{t,i} - x = -x \geq -1$ and for $j$ we have $\hat \ell_{t,j} - x \geq 0$.
For $i\neq j$ we use Lemma~\ref{lem:wrange} to bound 
$\lr{\nabla\Psi_t^*(\nabla\Psi_t(w_t)-\ImpLoss[t]+x\mathbf{1}_K)_i}^{2-\alpha} \leq 2 w_{t,i}^{2-\alpha}$ and for $j$ we use $\nabla\Psi^*_t(\nabla\Psi_t(w_t)-\ImpLoss[t])_j \leq \nabla\Psi^*_t(\nabla\Psi_t(w_t))_j = w_{t,j}$.
\begin{align*}
&\E\left[\sum_{\AnyArm=1}^K\max_{\tilde{w}_i \in [\AnyProp[t],\nabla\Psi_t^*(\nabla\Psi_t(w_t)-\ImpLoss[t]+x\mathbf{1}_K)_i]}\frac{\LearningRate[t]\xi_i}{2}(\AnyImpLoss[t]-x)^2\tilde{w}_i^{2-\alpha}\right]\\
&\hspace{0.8cm}\leq \sum_{\AnyArm\neq j}\frac{\LearningRate[t]\xi_i}{2}\E\left[\AnyProp[t]\right]^{1-\alpha} + \E\left[\ind[t][j]\left(\frac{\eta_{t}\xi_j}{2}\left(\frac{\ell_{t,j}}{w_{t,j}}-\ell_{t,j}\right)^2w_{t,j}^{2-\alpha} + \sum_{\AnyArm\neq j}\frac{\LearningRate[t]\xi_i}{2}\ell_{t,j}^2 2\AnyProp[t]^{2-\alpha} \right)\right]\\
&\hspace{0.8cm}\leq\sum_{\AnyArm\neq j}\frac{\LearningRate[t]\xi_i}{2}\E\left[\AnyProp[t]\right]^{1-\alpha} +\E\left[ \frac{\eta_{t}\xi_j}{2}(1-w_{t,j})^2w_{t,j}^{1-\alpha}+\sum_{i\neq j}\LearningRate[t]\xi_i\AnyProp[t]^{2-\alpha}w_{t,j}\right]\\
&\hspace{0.8cm}\leq\sum_{\AnyArm\neq j}\lr{\frac{\LearningRate[t]\xi_i}{2}\E\left[\AnyProp[t]\right]^{1-\alpha} +\frac{\eta_{t}(\xi_j + 2 \xi_i)}{2}\E\left[\AnyProp[t]\right]},
\end{align*}
where in the last step for the middle term we use $(1-w_{t,j})^2 w_{t,j}^{1-\alpha} \leq 1 - w_{t,j} = \sum_{i\neq j} w_{t,i}$ and for the last term $w_{t,i}^{2-\alpha} \leq 1$.

\rev{
\paragraph{Second part of the lemma.} 
We set $x=\MyLoss[t]$ and first verify that Lemma~\ref{lem:reduced variance stability} can be applied.
We have for any $i$:
\begin{align*}
    & \eta_t(\AnyImpLoss[t]-x)w_{t,i}^\frac{1}{2}\geq -\eta_t w_{t,i}^\frac{1}{2}\geq -\eta_t\geq -1\,,
\end{align*}
where the last inequality is by the assumption of the lemma. Since $\E[\MyLoss[t]]=\E[\langle\Prop[t],\ell_t\rangle \rangle]=\E[\langle\Prop[t],\ImpLoss[t]\rangle \rangle]$, by applying Lemma~\ref{lem:reduced variance stability} we have
\begin{align}
    \E\left[\MyLoss[t]+\Phi_t(-\hat L_t) - \Phi_t(-\hat L_{t-1})\right]
    &\leq \E\left[\sum_{i=1}^K \frac{\eta_t}{2}\AnyProp[t]^\frac{3}{2}(\AnyImpLoss[t]-\MyLoss[t])^2 + \frac{\eta_t^2}{2} \AnyProp[t]^2\left|\MyLoss[t]-\AnyImpLoss[t]\right|_+^3\right]\notag\\
    &\leq \frac{\eta_t}{2}\E\left[ \MyProp[t]^{-\frac{1}{2}}(1-\MyProp[t])^2+\sum_{i\neq \MyArm[t]} \AnyProp[t]^\frac{3}{2} \right]+\frac{\eta_t^2}{2}\label{eq:iw 1}\\
    &= \frac{\eta_t}{2}\E\left[ \MyProp[t]^{-\frac{1}{2}}(1-\MyProp[t])^2+\lr{\sum_{i=1}^K \AnyProp[t]^\frac{3}{2}} - \MyProp[t]^\frac{3}{2} \right]+\frac{\eta_t^2}{2}\notag\\
    &= \frac{\eta_t}{2}\E\left[\sum_{i=1}^K \AnyProp[t]^\frac{1}{2}(1-\AnyProp[t])^2 + (1-\AnyProp[t])\AnyProp[t]^\frac{3}{2} \right]+\frac{\eta_t^2}{2}\label{eq:iw 2}\\
    &= \frac{\eta_t}{2}\E\left[\sum_{i=1}^K \AnyProp[t]^\frac{1}{2}(1-\AnyProp[t]) \right]+\frac{\eta_t^2}{2}\notag\\    
    &\leq \frac{\eta_t}{2}\sum_{i=1}^K\E[ \AnyProp[t]]^\frac{1}{2}(1-\E[\AnyProp[t]])+\frac{\eta_t^2}{2}\label{eq:iw 3}\,,
\end{align}
where equation~\eqref{eq:iw 1} uses non-negativeness of IW estimators and boundedness of the losses in $[0,1]$, by which $\left|\MyLoss[t]-\AnyImpLoss[t]\right|_+ \leq 1$, and explicit form of $\hat \ell_{t,i}$ for the first term in the summation;
 equation~\eqref{eq:iw 2} uses the conditional probability of $I_t=i$, which is $\AnyProp[t]$; and equation~\eqref{eq:iw 3} follows by concavity of the function $f(z) = z^{\frac{1}{2}}(1-z)$ and Jensen's inequality.
 
\paragraph{Third part of the lemma.} 
We set $x=\MyLoss[t]$ and first verify that Lemma~\ref{lem:reduced variance stability} can be applied. Recall that $\mathbb{B}_t(i) := \frac{1}{2}\ind[][w_{t,i} \geq \eta_t^2]$. 
For $i\neq I_t$ we have $\AnyImpLoss[t] = \mathbb{B}_t(i)$ and 
\begin{align*}
    \eta_t(\AnyImpLoss[t]-x)w_{t,i}^\frac{1}{2}= \eta_t(\mathbb{B}_t(i)-\MyLoss[t])w_{t,i}^\frac{1}{2}\geq -\eta_t \geq -1\,,
\end{align*}
while for $I_t$ we have $\MyImpLoss[t] = \frac{\MyLoss[t]-\mathbb{B}_t(I_t)}{\MyProp[t]}+\mathbb{B}_t(I_t)$ and 
\begin{align*}
    \eta_t(\MyImpLoss[t]-\MyLoss[t])w_{t,i}^\frac{1}{2}= \eta_t(\MyLoss[t]-\mathbb{B}_t(I_t))(\frac{1}{\MyProp[t]}-1)\MyProp[t]^\frac{1}{2}\geq -\eta_t\mathbb{B}_t(I_t)\MyProp[t]^{-\frac{1}{2}}\geq -1\,.
\end{align*}
Since $\E[\MyLoss[t]]=\E[\langle\Prop[t],\ell_t\rangle \rangle]=\E[\langle\Prop[t],\ImpLoss[t]\rangle \rangle]$, applying Lemma~\ref{lem:reduced variance stability} we have
\begin{align}
    \E\left[\MyLoss[t]+\Phi_t(-\hat L_t) - \Phi_t(-\hat L_{t-1})\right]
    &\leq \E\left[\sum_{i=1}^K \frac{\eta_t}{2}\AnyProp[t]^\frac{3}{2}(\AnyImpLoss[t]-\MyLoss[t])^2 + \frac{\eta_t^2}{2} \AnyProp[t]^2\left|\MyLoss[t]-\AnyImpLoss[t]\right|_+^3\right]\notag\\
    &\leq \E\left[\sum_{i=1}^K \frac{\eta_t}{2}\AnyProp[t]^\frac{3}{2}(\AnyImpLoss[t]-\MyLoss[t])^2 + \frac{\eta_t^2}{2} \AnyProp[t]^2\left|\AnyImpLoss[t]-\MyLoss[t]\right|^3\right].\label{eq:with estimators}
\end{align}
For any $\ell\in[0,1]$ and any $i$, we have $|\ell - \mathbb{B}_t(i)| \leq |1 - \mathbb{B}_t(i)|$. For $I_t$, we have $|\MyImpLoss[t]-\MyLoss[t]|=|\MyLoss[t]-\mathbb{B}_t(\MyArm[t])|\frac{1-\MyProp[t]}{\MyProp[t]}\leq |1-\mathbb{B}_t(\MyArm[t])|\frac{1-\MyProp[t]}{\MyProp[t]}$,
while for $i\neq I_t$ we have $|\AnyImpLoss[t]-\MyLoss[t]| \leq |1-\mathbb{B}_t(i)|$.
Let $\bar{\mathbb{B}}_t(i)=\frac{1}{2}\ind[][w_{t,i} < \eta_t^2]=\frac{1}{2}-\mathbb{B}_t(i)$, then
$|1-\mathbb{B}_t(i)|=|\frac{1}{2}+\bar{\mathbb{B}}_t(i)|$. We have
%\begin{align*}
    $|\frac{1}{2}+\bar{\mathbb{B}}_t(i)|^2 = \frac{1}{4}+\frac{3}{2}\bar{\mathbb{B}}_t(i)$ %\,,\\
    and 
    $|\frac{1}{2}+\bar{\mathbb{B}}_t(i)|^3 \leq 1$. %.\,.
%\end{align*}
Thus, for $I_t$ we have
\begin{align*}
    (\hat \ell_{t,I_t} - \ell_{t,I_t})^2 &\leq \lr{\frac{1}{4} + \frac{3}{2}\bar{ \mathbb{B}}_t(I_t)}\frac{\lr{1-w_{t,I_t}}^2}{{w_{t,I_t}}^2}\,,\\
    |\hat \ell_{t,I_t} - \ell_{t,I_t}|^3 &\leq \frac{\lr{1-w_{t,I_t}}^3}{{w_{t,I_t}}^3}
\end{align*}
and for $i\neq I_t$ we have
\begin{align*}
    (\hat \ell_{t,i} - \ell_{t,I_t})^2 &\leq \lr{\frac{1}{4} + \frac{3}{2}\bar{\mathbb{B}}_t(I_t)}\,,\\
    |\hat \ell_{t,I_t} - \ell_{t,I_t}|^3 &\leq 1.
\end{align*}
Plugging this into equation~\eqref{eq:with estimators} leads to
\begin{align*}
    &\E\left[\sum_{i=1}^K \frac{\eta_t}{2}\AnyProp[t]^\frac{3}{2}(\AnyImpLoss[t]-\MyLoss[t])^2 + \frac{\eta_t^2}{2} \AnyProp[t]^2\left|\AnyImpLoss[t]-\MyLoss[t]\right|^3\right]\\
    &\hspace{3cm}\leq\E\Bigg[\left(\frac{1}{4}+\frac{3}{2}\bar{\mathbb{B}}_t(\MyArm[t])\right)\frac{\eta_t}{2}\MyProp[t]^{-\frac{1}{2}}(1-\MyProp[t])^2 + \frac{\eta_t^2}{2} \MyProp[t]^{-1}(1-\MyProp[t])^3\\
    &\hspace{3cm}\qquad +\sum_{i\neq\MyArm[t]}\left(\frac{1}{4}+\frac{3}{2}\bar{\mathbb{B}}_t(i)\right)\frac{\eta_t}{2}\AnyProp[t]^{\frac{3}{2}} + \frac{\eta_t^2}{2} \AnyProp[t]^2\Bigg]\\
    &\hspace{3cm}= \E\Bigg[\sum_{i=1}^K\Bigg(\left(\frac{1}{4}+\frac{3}{2}\bar{\mathbb{B}}_t(i)\right)\frac{\eta_t}{2}\AnyProp[t]^{\frac{1}{2}}(1-\AnyProp[t])^2 + \frac{\eta_t^2}{2} (1-\AnyProp[t])^3\\
    &\hspace{3cm}\qquad+(1-\AnyProp[t])\left(\frac{1}{4}+\frac{3}{2}\bar{\mathbb{B}}_t(i)\right)\frac{\eta_t}{2}\AnyProp[t]^{\frac{3}{2}} + (1-\AnyProp[t])\frac{\eta_t^2}{2} \AnyProp[t]^2\Bigg)\Bigg]\\
    &\hspace{3cm}\leq \E\Bigg[\sum_{i=1}^K\left(\frac{\eta_t}{8}\AnyProp[t]^\frac{1}{2}(1-\AnyProp[t]) +\frac{3}{2}\bar{\mathbb{B}}_t(i)\frac{\eta_t}{2}\AnyProp[t]^\frac{1}{2}+\frac{\eta_t^2}{2}\right)\Bigg]\\
    &\hspace{3cm}\leq \frac{7\eta_t^2}{8}K +\E\left[\sum_{i=1}^K\frac{\eta_t}{8}\AnyProp[t]^\frac{1}{2}(1-\AnyProp[t])\right]\\
    &\hspace{3cm}\leq \frac{7\eta_t^2}{8}K +\sum_{i=1}^K\frac{\eta_t}{8}\E[\AnyProp[t]]^\frac{1}{2}(1-\E[\AnyProp[t]])\,,
\end{align*}
where in the penultimate step we use $\bar{\mathbb B}_t(i){w_{t,i}}^{\frac{1}{2}} = \frac{1}{2}\ind[][w_{t,i}<\eta_t^2] {w_{t,i}}^{\frac{1}{2}} \leq \frac{1}{2}\eta_t$ and the last step follows by concavity of $f(z) = z^{\frac{1}{2}}(1-z)$ and Jensen's inequality.}
\end{proof}

\subsection{Controlling the {\it penalty} Term}
We begin with a standard lemma to simplify the {\it penalty} term. 
\begin{lemma}
\label{lem:tpot-basic}
For any $\alpha\in[0,1]$, any positive learning rate, and any fixed $v,u \in \Simplex$, the \emph{penalty} term satisfies 
\begin{align*}
&\E\left[\sum_{t=1}^T \lr{\Phi_t(-\CumLoss[t-1])-\Phi_t(-\CumLoss[t]) -\OptLoss[t]}\right]\\ 
 &%\qquad\qquad
 \leq \E\bigg[\frac{\Psi(v)-\Psi(\Prop[1])}{\LearningRate[1]}
%&\hspace{3cm}
+\sum_{t=2}^T\left(\LearningRate[t]^{-1}-\LearningRate[t-1]^{-1}\right)\left(\Psi(v)-\Psi(\Prop[t])\right)+\frac{\Psi(u)-\Psi(v)}{\LearningRate[T]} \bigg]%\\
%&\hspace{12cm}
+\left\langle u-\mathbf{e}_{\BestArm},L_T\right\rangle.
\end{align*}
\end{lemma}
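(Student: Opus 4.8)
The plan is to run the standard FTRL/OMD telescoping argument for the potential $\Phi_t=(\Psi_t+\mathcal{I}_{\Simplex})^*$, keeping track of the time-varying learning rate and inserting the two comparators $u$ and $v$ at the two places where they naturally arise. Write $g_t(L):=\Phi_t(-L)=\max_{w\in\Simplex}\{\langle w,-L\rangle-\Psi_t(w)\}$ and recall that $\Prop[t]$ is the maximiser defining $\Phi_t(-\CumLoss[t-1])$, so $g_t(\CumLoss[t-1])=\langle\Prop[t],-\CumLoss[t-1]\rangle-\Psi_t(\Prop[t])$. First I would telescope, using $\CumLoss[0]=\mathbf{0}_K$,
\[
\sum_{t=1}^T\big(g_t(\CumLoss[t-1])-g_t(\CumLoss[t])\big)=g_1(\mathbf{0}_K)-g_T(\CumLoss[T])+\sum_{s=1}^{T-1}\big(g_{s+1}(\CumLoss[s])-g_s(\CumLoss[s])\big).
\]

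Then I would bound the three pieces. For the summand, writing $g_{s+1}(\CumLoss[s])=\langle\Prop[s+1],-\CumLoss[s]\rangle-\Psi_{s+1}(\Prop[s+1])$ exactly and $g_s(\CumLoss[s])\ge\langle\Prop[s+1],-\CumLoss[s]\rangle-\Psi_s(\Prop[s+1])$ by sub-optimality of $\Prop[s+1]$ for $g_s$, subtraction cancels the linear terms and gives $g_{s+1}(\CumLoss[s])-g_s(\CumLoss[s])\le\Psi_s(\Prop[s+1])-\Psi_{s+1}(\Prop[s+1])=(\LearningRate[s]^{-1}-\LearningRate[s+1]^{-1})\Psi(\Prop[s+1])$; reindexing $t=s+1$ turns the sum into $-\sum_{t=2}^T(\LearningRate[t]^{-1}-\LearningRate[t-1]^{-1})\Psi(\Prop[t])$. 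For the first piece, $g_1(\mathbf{0}_K)=-\min_{w\in\Simplex}\Psi_1(w)=-\LearningRate[1]^{-1}\Psi(\Prop[1])$, and for the last, $-g_T(\CumLoss[T])=\min_{w\in\Simplex}\{\langle w,\CumLoss[T]\rangle+\Psi_T(w)\}\le\langle u,\CumLoss[T]\rangle+\LearningRate[T]^{-1}\Psi(u)$ for any $u\in\Simplex$. I would then introduce $v$ by pure bookkeeping: add and subtract $\LearningRate[1]^{-1}\Psi(v)$ in the first piece and $(\LearningRate[t]^{-1}-\LearningRate[t-1]^{-1})\Psi(v)$ in each term of the sum; the contributions carrying a minus sign on $\Psi(v)$ telescope, via $\LearningRate[1]^{-1}+\sum_{t=2}^T(\LearningRate[t]^{-1}-\LearningRate[t-1]^{-1})=\LearningRate[T]^{-1}$, to $-\LearningRate[T]^{-1}\Psi(v)$, which combines with $\LearningRate[T]^{-1}\Psi(u)$ into $\LearningRate[T]^{-1}(\Psi(u)-\Psi(v))$. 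What is left is exactly the bracketed expression of the statement, plus the leftover $\langle u,\CumLoss[T]\rangle$.

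Finally I would handle the $-\OptLoss[t]$ terms: since $\sum_{t=1}^T\OptLoss[t]=\langle\mathbf{e}_{\BestArm},L_T\rangle$, subtracting it and taking expectations, together with unbiasedness $\E[\CumLoss[T]]=\E[L_T]$, turns $\langle u,\CumLoss[T]\rangle-\langle\mathbf{e}_{\BestArm},L_T\rangle$ into $\langle u-\mathbf{e}_{\BestArm},L_T\rangle$ (which may be pulled outside the expectation since $\mathbf{e}_{\BestArm}$ is deterministic). The argument is essentially mechanical; the only points that need care are choosing the correct maximiser $\Prop[s+1]$ rather than $\Prop[s]$ when comparing $g_{s+1}$ with $g_s$ so that the inequality has the right orientation, and checking that the $\Psi(v)$-telescoping lands exactly on $\LearningRate[T]^{-1}$ so that no spurious term survives. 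I do not anticipate a genuine obstacle here: this is the plain time-varying-regularizer penalty identity, and the substance of the paper lies in the refined stability and penalty bounds (Lemmas~\ref{lem:stability} and \ref{lem:penalty}) that are built on top of it.
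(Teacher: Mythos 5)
Your proof is correct and is essentially the same argument as the paper's: the key step in both is to compare $\Phi_t$ and $\Phi_{t+1}$ at the same argument using the maximiser of $\Phi_{t+1}$ (so the inequality goes the right way), plus the observation that the $\Psi(v)$ terms telescope to zero across the changing learning rate. You organize the telescoping a little differently (isolating $g_1(\mathbf{0}_K)-g_T(\hat L_T)$ plus the drift terms $g_{s+1}(\hat L_s)-g_s(\hat L_s)$, rather than writing out exact values for $\Phi_t(-\hat L_{t-1})$ and an upper bound for $-\Phi_t(-\hat L_t)$ and cancelling the linear terms directly), but the two routes are algebraically identical.
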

        
\begin{proof} 
First, note that all the terms involving $\Psi(v)$ in the lemma sum up to $0$. 
Then, recall that $\Prop[t]$ is defined as $\arg\max_{w\in\Simplex}\Big\{\left\langle w, -\CumLoss[t-1]\right\rangle - \frac{\Psi(w)}{\eta_t}\Big\}$.
Therefore, 
\begin{align*}
\Phi_t(-\CumLoss[t-1]) &= -\left\langle\Prop[t],\CumLoss[t-1]\right\rangle - \frac{\Psi(\Prop[t])}{\eta_t}.
\end{align*}
Furthermore, by definition of the potential function, for any $\tilde{w}\in\Simplex$ it holds that:
\begin{align*}
-\Phi_t(-\CumLoss[t]) = - \max_{w\in\Simplex}\Big\{\left\langle w, -\CumLoss[t]\right\rangle - \frac{\Psi(w)}{\eta_t}\Big\}
\leq \left\langle \tilde{w}, \CumLoss[t]\right\rangle + \frac{\Psi(\tilde{w})}{\eta_t}.
\end{align*}
Setting $\tilde{w}$ to $\Prop[t+1]$ for $t<T$ and to $u$ for $t=T$, and using $\CumLoss[0]=\mathbf{0}_K$, where $\mathbf{0}_K$ is a vector of $K$ zeros, the sum of potential differences can be bounded as follows:
\begin{align*}
\sum_{t=1}^T & \lr{\Phi_t(-\CumLoss[t-1])-\Phi_t(-\CumLoss[t]) }\\
&\quad\leq \sum_{t=1}^T\Big(-\left\langle\Prop[t],\CumLoss[t-1]\right\rangle -\frac{\Psi(\Prop[t])}{\LearningRate[t]}\Big)
+\sum_{t=1}^{T-1}\Big(\left\langle \Prop[t+1], \CumLoss[t]\right\rangle + \frac{\Psi(\Prop[t+1])}{\LearningRate[t]}\Big)+\left\langle u, \CumLoss[T]\right\rangle +\frac{\Psi(u)}{\LearningRate[T]}\\
&\quad= -\frac{\Psi(\Prop[1])}{\LearningRate[1]}-\sum_{t=2}^T\left(\LearningRate[t]^{-1}-\LearningRate[t-1]^{-1}\right)\Psi(\Prop[t])+\frac{\Psi(u)}{\LearningRate[T]} + \left\langle u,\CumLoss[T]\right\rangle\,.
\end{align*}
The proof is finalized by taking the expectation and subtracting the optimal loss. 
Due to unbiasedness of the loss estimators, for a fixed $u$ we have $\E[\langle u, \CumLoss[T]\rangle] = \left\langle u, L_T\right\rangle$.
\end{proof}

\begin{proof}{\bf of Lemma~\ref{lem:penalty}}
The proof of both parts of the lemma is based on Lemma~\ref{lem:tpot-basic}.
\paragraph{Part 1:}
We set $v = \Prop[1]$.
Since $\Prop[1] = \argmax_{w\in\Simplex}-\Psi_1(w) =\argmax_{w\in\Simplex}-\Psi(w)$, we have $\Psi(\Prop[1])-\Psi(\Prop[t]) \leq 0$ for any $t$.
Since the learning rate is non-increasing, the terms $\left(\LearningRate[t]^{-1}-\LearningRate[t-1]^{-1}\right)$ are all positive, so
\begin{align*}
&\E\left[\sum_{t=1}^T \lr{\Phi_t(-\CumLoss[t-1])-\Phi_t(-\CumLoss[t]) -\OptLoss[t]}\right] \\
&\leq \E\bigg[\frac{\Psi(\Prop[1])-\Psi(\Prop[1])}{\LearningRate[1]}
+\sum_{t=2}^T\left(\LearningRate[t]^{-1}-\LearningRate[t-1]^{-1}\right)\left(\Psi(\Prop[1])-\Psi(\Prop[t])\right)+\frac{\Psi(u)-\Psi(\Prop[1])}{\LearningRate[T]}  \bigg]\\ &\hspace{12cm}+\left\langle u-\mathbf{e}_{\BestArm},L_T\right\rangle\\
&\leq \E\left[\frac{\Psi(u)-\Psi(\Prop[1])}{\LearningRate[T]}\right]+\left\langle u-\mathbf{e}_{\BestArm},L_T\right\rangle.
\end{align*}
Following the trick of \citet{agarwal2016corralling}, we set $u_\BestArm = 1-T^{-1}$ and $u_i = \frac{T^{-1}}{K-1}$ for $i\neq \BestArm$. 
The losses are bounded in $[0,T]$, so this choice of $u$ implies $\langle u- \mathbf{e}_{\BestArm},L_T\rangle \leq 1$.
Since we assume that the regularizer is symmetric, the explicit form of $\Prop[1]$ is $\AnyProp[1] = K^{-1}$ and
\begin{align*}
&\E\left[\sum_{t=1}^T \lr{\Phi_t(-\CumLoss[t-1])-\Phi_t(-\CumLoss[t]) -\OptLoss[t]}\right]\\ 
&\hspace{1cm}\leq\frac{K^{1-\alpha}}{\alpha(1-\alpha)\LearningRate[T]} -\frac{(K-1)^{1-\alpha}T^{-\alpha} +(1-T^{-1})^\alpha}{\alpha(1-\alpha)\LearningRate[T]} +1.
\end{align*}
It remains to bound $K^{1-\alpha} - (K-1)^{1-\alpha}T^{-\alpha} -(1-T^{-1})^\alpha$. Since $x^\alpha$ and $x^{1-\alpha}$ are concave functions, \rev{by Taylor's expansion around $X-1$} we have $X^{1-\alpha} \leq (X-1)^{1-\alpha}+(1-\alpha)(X-1)^{-\alpha}$ and $X^{\alpha} \leq (X-1)^{\alpha}+\alpha(X-1)^{\alpha-1}$ for any $X>1$, thus
\begin{alignat*}{2}
K^{1-\alpha}+T^\alpha &\leq (K-1)^{1-\alpha}+(1-\alpha)(K-1)^{-\alpha} + (T-1)^\alpha + \alpha(T-1)^{\alpha-1}\\
&\leq (K-1)^{1-\alpha} + (T-1)^\alpha + 1,% &%\\
%-(K-1)^{1-\alpha} + (T-1)^\alpha &\leq K^{1-\alpha}+T^\alpha + 1
\end{alignat*}
where the last line uses $(T-1)^{\alpha-1},(K-1)^{-\alpha} \leq 1$.
Therefore,
\begin{align*}
K^{1-\alpha} - (K-1)^{1-\alpha}T^{-\alpha} -(1-T^{-1})^\alpha 
&=K^{1-\alpha} + T^{-\alpha}(-(K-1)^{1-\alpha} -(T-1)^\alpha)\\
&\leq  K^{1-\alpha} + T^{-\alpha}(-K^{1-\alpha}-T^\alpha+1)\\ 
 &= (K^{1-\alpha}-1)(1-T^{-\alpha}).
\end{align*}
\paragraph{Part 2:}
Set
\begin{align*}
&\tilde{w} = \argmax_{w\in \Simplex}-\Psi\left((1-T^{-x})\mathbf{e}_{\BestArm}+T^{-x}w\right),\\ 
&v = u = (1-T^{-x})\mathbf{e}_{\BestArm}+T^{-x}\tilde{w},\\
&v_t = (1-T^{-x})\mathbf{e}_{\BestArm}+T^{-x}\Prop[t].
\end{align*}
By definition, $\Psi(v) \leq \Psi(v_t)$ for all $t$. So
\begin{align*}
\Psi(v) - \Psi(\Prop[t]) \leq \Psi(v_t)- \Psi(\Prop[t])\leq \sum_{\AnyArm\neq\BestArm}\frac{(\AnyProp[t]^\alpha-\alpha\AnyProp[t])(1-T^{-\alpha x})}{\alpha(1-\alpha)\xi_i}.
\end{align*}
In the last inequality, we have used the fact that the contribution of the optimal arm $\BestArm$ is non-positive, since $v_{t,\BestArm} \geq \OptProp[t]$ and $w^\alpha-\alpha w$ are monotonically increasing in $w$ over $[0,1]$. 
The choice of $u$ ensures that $\langle u-\mathbf{e}_{\BestArm},L_T\rangle \leq T^{1-x}$.
Starting again with Lemma~\ref{lem:tpot-basic}, we have:
\begin{align*}
&\E\left[\sum_{t=1}^T \lr{\Phi_t(-\CumLoss[t-1])-\Phi_t(-\CumLoss[t]) -\OptLoss[t]}\right] \\
 &\leq \E\bigg[\frac{\Psi(v)-\Psi(\Prop[1])}{\LearningRate[1]}
+\sum_{t=2}^T\left(\LearningRate[t]^{-1}-\LearningRate[t-1]^{-1}\right)\left(\Psi(v)-\Psi(\Prop[t])\right)+\frac{\Psi(u)-\Psi(v)}{\LearningRate[T]} \bigg] \\
&\hspace{12cm}+\left\langle u-\mathbf{e}_{\BestArm},L_T\right\rangle\\
&\leq \frac{1-T^{-\alpha x}}{\alpha}\sum_{\AnyArm\neq\BestArm}\E\bigg[\frac{\AnyProp[1]^\alpha-\alpha\AnyProp[1]}{(1-\alpha)\LearningRate[1]\xi_i}
+\sum_{t=2}^T\left(\LearningRate[t]^{-1}-\LearningRate[t-1]^{-1}\right)\frac{\AnyProp[t]^{\alpha}-\alpha\AnyProp[t]}{(1-\alpha)\xi_i}\bigg] +T^{1-x}\\
&\leq \frac{1-T^{-\alpha x}}{\alpha}\sum_{\AnyArm\neq\BestArm}\lr{\frac{\E[\AnyProp[1]]^\alpha-\alpha\E[\AnyProp[1]]}{(1-\alpha)\LearningRate[1]\xi_i}
+\sum_{t=2}^T\left(\LearningRate[t]^{-1}-\LearningRate[t-1]^{-1}\right)\frac{\E[\AnyProp[t]]^{\alpha}-\alpha\E[\AnyProp[t]]}{(1-\alpha)\xi_i}} +T^{1-x}.
\end{align*} 
\end{proof}

\section{Proof of Theorem~\ref{th:all}}
\label{app:thm-all}

We follow the same strategy as outlined in Section~\ref{sec:proofs}. 
In order to cover the limit cases $\alpha\in\{0,1\}$, the proof is significantly more technical than the proof of Theorem~\ref{th:easy}.

\begin{proof}{\bf of Theorem~\ref{th:all}}
Recall that the learning rate is $\LearningRate[t] = \TimeDependentLearningRate$, where $\overline{t}=\max\{e,t\}$, and regularization parameters are $\xi_i=\TimeIndependentLearningRate$ \rev{for $i\neq i^*$ and $\xi_{i^*} = \Deltamin^{1-2\alpha}$}.

\paragraph{Bounding the \emph{stability} term} We start by bounding the {\it stability} term.
For $t\leq T_0$  we use the first part of Lemma~\ref{lem:stability} and otherwise the second with $j=\BestArmSto$.
The value of $T_0$ is chosen so that $\LearningRate[T_0]\xi_i \leq \frac{1}{4}$.
\begin{alignat*}{2}
stability &= \E\left[\sum_{t=1}^T\MyLoss[t]+ \Phi_t(- \CumLoss[t]) - \Phi_t(-\CumLoss[t-1]) \right]\\
&\leq \underbrace{\sum_{t=1}^{T}\sum_{\AnyArm\neq\BestArmSto}\frac{\LearningRate[t]\xi_i\E[\AnyProp[t]]^{1-\alpha}}{2}}_{concave} + \underbrace{\sum_{t=1}^{T_0}\frac{\LearningRate[t]\xi_\BestArmSto\E[\BestPropSto[t]]^{1-\alpha}}{2}}_{constant} + \underbrace{\sum_{t=T_0+1}^T\sum_{\AnyArm\neq\BestArmSto}\frac{\LearningRate[t](\xi_i + 2\xi_\BestArmSto)}{2}\E[\AnyProp[t]]}_{linear}.
\end{alignat*}
\paragraph{Bounding the concave part}
Since $w^{1-\alpha}$ is a concave function of $w$, it can be upper bounded by the first order Taylor's approximation. For any $w^*$:
\begin{align*}
\AnyProp[t]^{1-\alpha}&\leq {w^*}^{1-\alpha} + (1-\alpha) {w^*}^{-\alpha}(\AnyProp[t]-w^*)\\
&=\alpha{w^*}^{1-\alpha}+(1-\alpha) {w^*}^{-\alpha}\AnyProp[t]. 
\end{align*}
Taking $w^* = \frac{16}{\AnyGap^2t}$ (with $\eta_{t} = \TimeDependentLearningRate$, $\xi_i=\TimeIndependentLearningRate$):
\begin{alignat}{2}
\sum_{t=1}^T\frac{\LearningRate[t]\xi_i}{2} \E[\AnyProp[t]]^{1-\alpha}
&\leq \sum_{t=1}^T \frac{\AnyExplicitLearningRate}{2} \bigg(\alpha\left(\frac{16}{\AnyGap^2t}\right)^{1-\alpha}+ (1-\alpha) \left(\frac{16}{\AnyGap^2t}\right)^{-\alpha}\E[\AnyProp[t]]\bigg)\nonumber\\
&= \sum_{t=1}^T\frac{1-\overline{t}^{-1+\alpha}}{1-\alpha} \left(\frac{2\alpha}{\AnyGap t} + \frac{1-\alpha}{8}\AnyGap \E[\AnyProp[t]]\right)\nonumber\\
&\leq \frac{1-T^{-1+\alpha}}{1-\alpha} \frac{2(\log(T)+1)}{ \AnyGap}+  \sum_{t=1}^T  \frac{\AnyGap \E[\AnyProp[t]]}{8}.\label{line:stability1}
\end{alignat}
Finally, we bound the leading factor of the log term with Lemma~\ref{lem:loglimit}:
\begin{align*}
&\frac{1-T^{-1+\alpha}}{1-\alpha}\leq \min\{\frac{1}{1-\alpha},\log(T)\}.
\end{align*}

\paragraph{Bounding the linear part}
We first show that all $t>T_0=\frac{16}{\OptGap^2}\log^2(\frac{16}{\OptGap^2})$ satisfy $\LearningRate[t]\xi_i \leq \frac{\AnyGap}{4}$.
\begin{align*}
\LearningRate[t]\xi_i = \AnyExplicitLearningRate
&< \frac{\AnyGap}{4}\left(\frac{16}{\OptGap^2T_0}\right)^\alpha \frac{1-\overline{T_0}^{-1+\alpha}}{1-\alpha}\\
&\leq \frac{\AnyGap}{4} \frac{1-(\frac{16}{\OptGap^2}\log^2(\frac{16}{\OptGap^2}))^{-1+\alpha}}{(1-\alpha)(\log(\frac{16}{\OptGap^2}))^{2\alpha}}.
\end{align*}
It remains to show that $ \frac{1-(\frac{16}{\OptGap^2}\log^2(\frac{16}{\OptGap^2}))^{-1+\alpha}}{(1-\alpha)(\log(\frac{16}{\OptGap^2}))^{2\alpha}}\leq 1$. 
By Lemma~\ref{lem:ugly-ineq} we have
\begin{align*}
\frac{1-(\frac{16}{\OptGap^2}\log^2(\frac{16}{\OptGap^2}))^{-1+\alpha}}{1-\alpha}&\leq \left(\log\lr{\frac{16}{\OptGap^2}\log^2(\frac{16}{\OptGap^2})}\right)^\alpha\\
&\leq \left(2\log(\frac{16}{\OptGap^2})\right)^\alpha
\leq \left(\log(\frac{16}{\OptGap^2})\right)^{2\alpha}\,,
\end{align*}
which concludes the proof.
Therefore,
\begin{align}
\sum_{\AnyArm\neq\BestArmSto}\sum_{t=T_0+1}^T\frac{\LearningRate[t](\xi_i+2\xi_{i^*})}{2}\E[\AnyProp[t]] \leq \sum_{\AnyArm\neq\BestArmSto}\sum_{t=1}^T\frac{\AnyGap +2\OptGap }{8}\E[\AnyProp[t]]
\leq \sum_{\AnyArm\neq\BestArmSto}\sum_{t=1}^T\frac{3\AnyGap\E[\AnyProp[t]]}{8}. \label{line:stability2}
\end{align}

\paragraph{Bounding the constant part}
Recall $T_0=\frac{16}{\OptGap^2}\log^2\left(\frac{16}{\OptGap^2}\right) \geq 16$. We can use the estimation $\sum_{t=1}^{T_0}t^{-\alpha} \leq 1+\int_{1}^{T_0}t^{-\alpha}\,dt = 1+\frac{T_0^{1-\alpha} - 1}{1-\alpha}\leq 2\frac{T_0^{1-\alpha} - 1}{1-\alpha}$ :
\begin{align}
\sum_{t=1}^{T_0} \frac{\LearningRate[t]\xi_{i^*}}{2} &\leq \sum_{t=1}^{T_0} \frac{\OptGap^{1-2\alpha}16^\alpha(1-T_0^{-1+\alpha})}{8(1-\alpha)t^\alpha}= \frac{\OptGap^{1-2\alpha}16^\alpha(1-T_0^{-1+\alpha})}{8(1-\alpha)}\sum_{t=1}^{T_0} \frac{1}{t^\alpha}\nonumber\\
&\leq \frac{\OptGap^{1-2\alpha}16^\alpha(1-T_0^{-1+\alpha})(T_0^{1-\alpha}-1)}{4(1-\alpha)^2}\nonumber\\
&=  \frac{16^\alpha T_0^{1-\alpha}(1-T_0^{-1+\alpha})^2}{4\OptGap^{2\alpha-1}(1-\alpha)^2}=  \frac{4 \log^{2-2\alpha}(\frac{16}{\OptGap^2})}{\OptGap}\left(\frac{1-T_0^{-1+\alpha}}{1-\alpha}\right)^2\nonumber\\
&\leq \frac{4 \log^2(\frac{16}{\OptGap^2})\log^2(T_0)}{\OptGap}\leq\frac{4 \log^4(T_0)}{\OptGap},\label{line:stability3}
\end{align}
where the last line uses Lemma~\ref{lem:loglimit}. 
Combining \eqref{line:stability1}, \eqref{line:stability2}, and \eqref{line:stability3} we obtain:
\begin{align}
stability \leq \sum_{\AnyArm\neq\BestArmSto} \lr{ 
\min\lrc{\frac{1}{1-\alpha},\log(T)}\frac{2(\log(T)+1)}{\AnyGap} +\sum_{t=1}^T\frac{\AnyGap\E[\AnyProp[t]]}{2}} + \frac{4\log^4(T_0)}{\OptGap }. \label{line:stability}
\end{align}

\paragraph{Bounding the \emph{penalty} term} For the {\it penalty} term, we start with the second part of Lemma~\ref{lem:penalty} with $x=1$.
We have
\begin{align*}
&penalty = \E\left[\sum_{t=1}^T - \Phi_t(- \CumLoss[t]) + \Phi_t(-\CumLoss[t-1]) -\OptLoss[t]\right]\\
&\qquad\leq  \frac{1-T^{-\alpha }}{\alpha}\sum_{\AnyArm\neq\BestArmSto}\lr{\frac{\E[\AnyProp[1]]^\alpha-\alpha\E[\AnyProp[1]]}{\LearningRate[1]\xi_i(1-\alpha)}+\sum_{t=2}^T\lr{\frac{1}{\LearningRate[t]}-\frac{1}{\LearningRate[t-1]}}\frac{\E[\AnyProp[t]]^\alpha-\alpha\E[\AnyProp[t]]}{(1-\alpha)\xi_i}} + 1\\
&\qquad\leq \sum_{\AnyArm\neq\BestArmSto}\lr{\underbrace{\frac{\E[\AnyProp[1]]^\alpha-\alpha\E[\AnyProp[1]]}{\LearningRate[1]\xi_i(1-\alpha)}}_{constant}\log(T)+\underbrace{\sum_{t=2}^T\lr{\frac{1}{\LearningRate[t]}-\frac{1}{\LearningRate[t-1]}}\frac{\E[\AnyProp[t]]^\alpha-\alpha\E[\AnyProp[t]]}{(1-\alpha)\alpha\xi_i}}_{concave}}+1,
\end{align*}
where in the first term we use $\frac{1-T^{-\alpha }}{\alpha} \leq \log(T)$ and in the second $\frac{1-T^{-\alpha }}{\alpha} \leq \frac{1}{\alpha}$, both bounds following from Lemma~\ref{lem:loglimit}.

\paragraph{Bounding the concave term}
Since $w^{\alpha}$ is a concave function of $w$ it can be upper bounded by the first order Taylor's approximation:
\begin{align*}
\AnyProp[t]^{\alpha}&\leq {w^*}^{\alpha} + \alpha {w^*}^{\alpha-1}(\AnyProp[t]-w^*)=(1-\alpha){w^*}^{\alpha}+\alpha {w^*}^{\alpha-1}\AnyProp[t]. 
\end{align*}
Taking $w^* = \frac{16}{\AnyGap^2t}$ (with $\eta_t = \TimeDependentLearningRate$ and $\xi_i = \Delta_i^{1-2\alpha}$):
\begin{alignat}{2}
&\sum_{t=1}^{T-1}\left(\frac{1}{\LearningRate[t+1]}-\frac{1}{\LearningRate[t]}\right)\frac{\E[\AnyProp[t+1]]^\alpha-\alpha\E[\AnyProp[t+1]]}{(1-\alpha)\alpha\xi_i}\\
&\qquad\leq\sum_{t=1}^{T-1}\left(\frac{(t+1)^\alpha}{1-\overline{(t+1)}^{-1+\alpha}}-\frac{t^\alpha}{1-\overline{t}^{-1+\alpha}}\right)\notag\\
\span\span\cdot\frac{4\AnyGap^{2\alpha-1}}{\alpha16^\alpha}\bigg(    (1-\alpha)\left(\frac{16}{\AnyGap^2t}\right)^\alpha+\alpha\left(\frac{16}{\AnyGap^2t}\right)^{\alpha-1}\E[\AnyProp[t+1]]-\alpha\E[\AnyProp[t+1]] \bigg)\notag\\
&\qquad\leq\sum_{t=1}^{T-1}\left(\frac{(t+1)^\alpha - t^\alpha}{1-\overline{t}^{-1+\alpha}}\right) \bigg(
 \frac{4(1-\alpha)}{\alpha \AnyGap t^\alpha}+ \frac{\AnyGap\E[\AnyProp[t+1]]}{4 t^{\alpha-1}}\left(1-\left(\frac{16}{\AnyGap^2 t}\right)^{1-\alpha}\right)\bigg)\notag\\ 
%\end{alignat*}
&\qquad\text{(by Taylor's approximation $(t+1)^\alpha \leq t^\alpha + \alpha t^{\alpha-1}$ and also use $\frac{16}{\AnyGap^2}>1$)}\notag\\
%\begin{alignat}{2}
&\qquad\leq\sum_{t=1}^{T-1}\left(\frac{\alpha t^{\alpha-1}}{1-\overline{t}^{-1+\alpha}}\right)\bigg(
 \frac{4(1-\alpha)}{\alpha \AnyGap t^\alpha}+ \frac{\AnyGap\E[\AnyProp[t+1]]}{4 t^{\alpha-1}}\left(1-\left(\frac{1}{t}\right)^{1-\alpha}\right)\bigg) \nonumber\\
&\qquad\leq\sum_{t=1}^{T-1}\bigg(
 \frac{1-\alpha}{1-e^{-1+\alpha}}\frac{4}{\AnyGap t}+\frac{\AnyGap\E[\AnyProp[t+1]]}{4}\frac{1-t^{-1+\alpha}}{1-\overline{t}^{-1+\alpha}}\bigg) \nonumber\\
&\qquad\leq \frac{1-\alpha}{1-e^{-1+\alpha}}\frac{4(\log(T)+1)}{\AnyGap}+\sum_{t=1}^T\frac{\AnyGap\E[\AnyProp[t]]}{4}\nonumber
\\
&\qquad\leq \frac{8(\log(T)+1)}{\AnyGap}+\sum_{t=1}^T\frac{\AnyGap\E[\AnyProp[t]]}{4}.\label{line:penalty1}
\end{alignat}
The last step follows by the leading factor $\frac{1-\alpha}{1-e^{-1+\alpha}}$ being bounded by 2. 

\paragraph{Bounding the constant term}
Since $\frac{w^\alpha-\alpha w}{1-\alpha}$ is monotonically decreasing in $\alpha$ and $\xi_i$ is monotonically increasing in $\alpha$, we have 
\begin{align}
\frac{\E[\AnyProp[1]]^\alpha-\alpha\E[\AnyProp[1]]}{(1-\alpha)\LearningRate[1]\xi_i}\leq \frac{4}{\AnyGap(1-e^{-1})}\leq \frac{8}{\AnyGap}. \label{line:penalty2}
\end{align}
Combining \eqref{line:penalty1} and \eqref{line:penalty2} we obtain
\begin{align}
penalty \leq \sum_{\AnyArm\neq\BestArmSto}\lr{\frac{16(\log(T)+1)}{\AnyGap}+\sum_{t=1}^T  \frac{\AnyGap \E[\AnyProp[t]]}{4}}+1.\label{line:penalty}
\end{align}

\paragraph{Finishing the proof} Finally, we combine \eqref{line:stability}, \eqref{line:penalty}, and rearrange the terms to get
\begin{alignat*}{2}
\overline{Reg}_T &\leq \rev{\sum_{t=1}^T\sum_{i\neq i^*}\frac{3\Delta_i\E[w_{t,i}]}{4}+\sum_{\AnyArm\neq\BestArmSto}\Bigg(\frac{\left(2 \min\{\frac{1}{1-\alpha},\log(T)\}+16\right)(\log(T)+1)}{\AnyGap}\Bigg)} \\
&\rev{\hspace{10.2cm}+\frac{4 \log^4(T_0)}{\OptGap}+ 1.}\\
&= \frac{3\overline{Reg}_T}{4}+\sum_{\AnyArm\neq\BestArmSto}\Bigg(\frac{\left(2 \min\{\frac{1}{1-\alpha},\log(T)\}+16\right)(\log(T)+1)}{\AnyGap}\Bigg) +\frac{4 \log^4(T_0)}{\OptGap}+ 1.
\end{alignat*}
Rearranging and multiplying by $4$ finishes the proof. 
\end{proof}
% According to JMLR formatting guidelines the references should be at the end of the paper http://www.jmlr.org/format/format.html Also no \newpages between body/appendicies/references.
%\newpage
\bibliography{bibliography,mybib}

\begin{thebibliography}{35}
\providecommand{\natexlab}[1]{#1}
\providecommand{\url}[1]{\texttt{#1}}
\expandafter\ifx\csname urlstyle\endcsname\relax
  \providecommand{\doi}[1]{doi: #1}\else
  \providecommand{\doi}{doi: \begingroup \urlstyle{rm}\Url}\fi

\bibitem[Abbasi-Yadkori et~al.(2018)Abbasi-Yadkori, Bartlett, Gabillon, Malek,
  and Valko]{abbasi2018best}
Yasin Abbasi-Yadkori, Peter Bartlett, Victor Gabillon, Alan Malek, and Michal
  Valko.
\newblock Best of both worlds: Stochastic \& adversarial best-arm
  identification.
\newblock In \emph{Proceedings of the International Conference on Computational
  Learning Theory (COLT)}, 2018.

\bibitem[Abernethy et~al.(2014)Abernethy, Lee, Sinha, and
  Tewari]{abernethy2014online}
Jacob Abernethy, Chansoo Lee, Abhinav Sinha, and Ambuj Tewari.
\newblock Online linear optimization via smoothing.
\newblock In \emph{Proceedings of the International Conference on Computational
  Learning Theory (COLT)}, 2014.

\bibitem[Abernethy et~al.(2015)Abernethy, Lee, and
  Tewari]{abernethy2015fighting}
Jacob~D Abernethy, Chansoo Lee, and Ambuj Tewari.
\newblock Fighting bandits with a new kind of smoothness.
\newblock In \emph{Advances in Neural Information Processing Systems
  (NeurIPS)}, 2015.

\bibitem[Agarwal et~al.(2017)Agarwal, Luo, Neyshabur, and
  Schapire]{agarwal2016corralling}
Alekh Agarwal, Haipeng Luo, Behnam Neyshabur, and Robert~E Schapire.
\newblock Corralling a band of bandit algorithms.
\newblock In \emph{Proceedings of the International Conference on Computational
  Learning Theory (COLT)}, 2017.

\bibitem[Ailon et~al.(2014)Ailon, Karnin, and Joachims]{ailon2014reducing}
Nir Ailon, Zohar Karnin, and Thorsten Joachims.
\newblock Reducing dueling bandits to cardinal bandits.
\newblock In \emph{Proceedings of the International Conference on Machine
  Learning (ICML)}, 2014.

\bibitem[Audibert and Bubeck(2009)]{audibert2009minimax}
Jean-Yves Audibert and S{\'e}bastien Bubeck.
\newblock Minimax policies for adversarial and stochastic bandits.
\newblock In \emph{Proceedings of the International Conference on Computational
  Learning Theory (COLT)}, 2009.

\bibitem[Audibert and Bubeck(2010)]{audibert2010regret}
Jean-Yves Audibert and S{\'e}bastien Bubeck.
\newblock Regret bounds and minimax policies under partial monitoring.
\newblock \emph{Journal of Machine Learning Research}, 11, 2010.

\bibitem[Auer and Chiang(2016)]{auer2016algorithm}
Peter Auer and Chao-Kai Chiang.
\newblock An algorithm with nearly optimal pseudo-regret for both stochastic
  and adversarial bandits.
\newblock In \emph{Proceedings of the International Conference on Computational
  Learning Theory (COLT)}, 2016.

\bibitem[Auer et~al.(2002{\natexlab{a}})Auer, Cesa-Bianchi, and
  Fischer]{auer2002finite}
Peter Auer, Nicol{\`o} Cesa-Bianchi, and Paul Fischer.
\newblock Finite-time analysis of the multiarmed bandit problem.
\newblock \emph{Machine learning}, 47\penalty0 (2-3), 2002{\natexlab{a}}.

\bibitem[Auer et~al.(2002{\natexlab{b}})Auer, Cesa-Bianchi, Freund, and
  Schapire]{auer2002nonstochastic}
Peter Auer, Nicol{\`o} Cesa-Bianchi, Yoav Freund, and Robert~E Schapire.
\newblock The nonstochastic multiarmed bandit problem.
\newblock \emph{SIAM journal on computing}, 32\penalty0 (1),
  2002{\natexlab{b}}.

\bibitem[Besson and Kaufmann(2018)]{besson2018doubling}
Lilian Besson and Emilie Kaufmann.
\newblock What doubling tricks can and can't do for multi-armed bandits.
\newblock \emph{arXiv preprint arXiv:1803.06971}, 2018.

\bibitem[Bubeck(2010)]{bubeck2010bandits}
S{\'e}bastien Bubeck.
\newblock \emph{Bandits games and clustering foundations}.
\newblock PhD thesis, Universit{\'e} des Sciences et Technologie de Lille-Lille
  I, 2010.

\bibitem[Bubeck and Cesa-Bianchi(2012)]{bubeck2012regret}
S{\'e}bastien Bubeck and Nicol{\`o} Cesa-Bianchi.
\newblock Regret analysis of stochastic and nonstochastic multi-armed bandit
  problems.
\newblock \emph{Foundations and Trends{\textregistered} in Machine Learning},
  5\penalty0 (1), 2012.

\bibitem[Bubeck and Slivkins(2012)]{bubeck2012best}
S{\'e}bastien Bubeck and Aleksandrs Slivkins.
\newblock The best of both worlds: Stochastic and adversarial bandits.
\newblock In \emph{Proceedings of the International Conference on Computational
  Learning Theory (COLT)}, 2012.

\bibitem[Capp{\'e} et~al.(2013)Capp{\'e}, Garivier, Maillard, Munos, and
  Stoltz]{CGM+13}
Olivier Capp{\'e}, Aur{\'e}lien Garivier, Odalric-Ambrym Maillard, R{\'e}mi
  Munos, and Gilles Stoltz.
\newblock {Kullback-Leibler} upper confidence bounds for optimal sequential
  allocation.
\newblock \emph{The Annals of Statistics}, 41, 2013.

\bibitem[Cesa-Bianchi and Lugosi(2006)]{CBL06}
Nicol{\`o} Cesa-Bianchi and G{\'a}bor Lugosi.
\newblock \emph{Prediction, Learning, and Games}.
\newblock Cambridge University Press, 2006.

\bibitem[Cover and Thomas(2006)]{CT06}
Thomas~M. Cover and Joy~A. Thomas.
\newblock \emph{Elements of Information Theory}.
\newblock Wiley Series in Telecommunications and Signal Processing, 2nd
  edition, 2006.

\bibitem[Foster et~al.(2016)Foster, Li, Lykouris, Sridharan, and
  Tardos]{FLST16}
Dylan~J Foster, Zhiyuan Li, Thodoris Lykouris, Karthik Sridharan, and Eva
  Tardos.
\newblock Learning in games: Robustness of fast convergence.
\newblock In \emph{Advances in Neural Information Processing Systems
  (NeurIPS)}, pages 4734--4742, 2016.

\bibitem[Gupta et~al.(2019)Gupta, Koren, and Talwar]{GKT19}
Anupam Gupta, Tomer Koren, and Kunal Talwar.
\newblock Better algorithms for stochastic bandits with adversarial
  corruptions.
\newblock In \emph{Proceedings of the International Conference on Computational
  Learning Theory (COLT)}, 2019.

\bibitem[Kaufmann et~al.(2012)Kaufmann, Korda, and Munos]{KKM12}
Emilie Kaufmann, Nathaniel Korda, and R{\' e}mi Munos.
\newblock Thompson sampling: An optimal finite time analysis.
\newblock In \emph{Proceedings of the International Conference on Algorithmic
  Learning Theory (ALT)}, 2012.

\bibitem[Lai and Robbins(1985)]{lai1985asymptotically}
Tze~Leung Lai and Herbert Robbins.
\newblock Asymptotically efficient adaptive allocation rules.
\newblock \emph{Advances in applied mathematics}, 6\penalty0 (1), 1985.

\bibitem[Lattimore and Szepesv\'{a}ri(2019)]{LS19bandit-book}
Tor Lattimore and Csaba Szepesv\'{a}ri.
\newblock \emph{Bandit Algorithms}.
\newblock Cambridge University Press (preprint), 2019.

\bibitem[Lykouris et~al.(2018)Lykouris, Mirrokni, and Leme]{LML18}
Thodoris Lykouris, Vahab Mirrokni, and Renato~Paes Leme.
\newblock Stochastic bandits robust to adversarial corruptions.
\newblock In \emph{Proceedings of the Annual ACM SIGACT Symposium on Theory of
  Computing (STOC)}, 2018.

\bibitem[Orabona et~al.(2015)Orabona, Crammer, and
  Cesa-Bianchi]{orabona2015generalized}
Francesco Orabona, Koby Crammer, and Nicol{\`o} Cesa-Bianchi.
\newblock A generalized online mirror descent with applications to
  classification and regression.
\newblock \emph{Machine Learning}, 99\penalty0 (3), 2015.

\bibitem[Robbins(1952)]{robbins1952some}
Herbert Robbins.
\newblock Some aspects of the sequential design of experiments.
\newblock \emph{Bulletin of the American Mathematical Society}, 58\penalty0
  (5), 1952.

\bibitem[Rockafellar(2015)]{rockafellar2015convex}
Ralph~Tyrell Rockafellar.
\newblock \emph{Convex analysis}.
\newblock Princeton university press, 2015.

\bibitem[Seldin and Lugosi(2017)]{seldin2017improved}
Yevgeny Seldin and G{\'a}bor Lugosi.
\newblock An improved parametrization and analysis of the {EXP3++} algorithm
  for stochastic and adversarial bandits.
\newblock In \emph{Proceedings of the International Conference on Computational
  Learning Theory (COLT)}, 2017.

\bibitem[Seldin and Slivkins(2014)]{seldin2014one}
Yevgeny Seldin and Aleksandrs Slivkins.
\newblock One practical algorithm for both stochastic and adversarial bandits.
\newblock In \emph{Proceedings of the International Conference on Machine
  Learning (ICML)}, 2014.

\bibitem[Shalev-Shwartz(2012)]{shalev2012online}
Shai Shalev-Shwartz.
\newblock Online learning and online convex optimization.
\newblock \emph{Foundations and Trends{\textregistered} in Machine Learning},
  4\penalty0 (2), 2012.

\bibitem[Thompson(1933)]{thompson1933likelihood}
William~R Thompson.
\newblock On the likelihood that one unknown probability exceeds another in
  view of the evidence of two samples.
\newblock \emph{Biometrika}, 25\penalty0 (3/4), 1933.

\bibitem[Tsallis(1988)]{tsallis1988possible}
Constantino Tsallis.
\newblock Possible generalization of {Boltzmann-Gibbs} statistics.
\newblock \emph{Journal of statistical physics}, 52\penalty0 (1-2), 1988.

\bibitem[Wei and Luo(2018)]{wei2018more}
Chen-Yu Wei and Haipeng Luo.
\newblock More adaptive algorithms for adversarial bandits.
\newblock In \emph{Proceedings of the International Conference on Computational
  Learning Theory (COLT)}, 2018.

\bibitem[Zimmert and Lattimore(2019)]{ZL19}
Julian Zimmert and Tor Lattimore.
\newblock Connections between mirror descent, {Thompson} sampling, and the
  information ratio.
\newblock In \emph{Advances in Neural Information Processing Systems
  (NeurIPS)}, 2019.

\bibitem[Zimmert and Seldin(2019)]{ZS19}
Julian Zimmert and Yevgeny Seldin.
\newblock An optimal algorithm for stochastic and adversarial bandits.
\newblock In \emph{Proceedings on the International Conference on Artificial
  Intelligence and Statistics (AISTATS)}, 2019.

\bibitem[Zimmert et~al.(2019)Zimmert, Luo, and Wei]{ZLW19}
Julian Zimmert, Haipeng Luo, and Chen-Yu Wei.
\newblock Beating stochastic and adversarial semi-bandits optimally and
  simultaneously.
\newblock In \emph{Proceedings of the International Conference on Machine
  Learning (ICML)}, 2019.

\end{thebibliography}

\end{document}